\documentclass{article}

\usepackage{microtype}
\usepackage{graphicx}
\usepackage{booktabs}

\usepackage{hyperref}
\usepackage{multirow}

\usepackage{arxiv}

\usepackage{amsmath}
\usepackage{amssymb}
\usepackage{mathtools}
\usepackage{amsthm}

\theoremstyle{plain}
\newtheorem{theorem}{Theorem}[section]
\newtheorem{proposition}[theorem]{Proposition}
\newtheorem{lemma}[theorem]{Lemma}

\theoremstyle{definition}

\theoremstyle{remark}

\newcommand{\dhc}[1]{}

\newcommand{\ntc}[1]{}
\newcommand{\lpc}[1]{}

\newcommand{\mvc}[1]{}

\icmltitlerunning{On the Convergence of Multicalibration Gradient Boosting}

\begin{document}

\icmltitle{On the Convergence of Multicalibration Gradient Boosting}

\begin{icmlauthorlist}
\icmlauthor{Daniel Haimovich}{meta}
\icmlauthor{Fridolin Linder}{meta}
\icmlauthor{Lorenzo Perini}{meta}
\icmlauthor{Niek Tax}{meta}
\icmlauthor{Milan Vojnovi\' c}{meta,lse}
\end{icmlauthorlist}

\icmlaffiliation{meta}{Meta, London, United Kingdom}
\icmlaffiliation{lse}{LSE, Department of Statistics, London, United Kingdom}

\icmlcorrespondingauthor{Daniel Haimovich}{danielha@meta.com}
\icmlcorrespondingauthor{Fridolin Linder}{flinder@meta.com}
\icmlcorrespondingauthor{Lorenzo Perini}{lorenzoperini@meta.com}
\icmlcorrespondingauthor{Niek Tax}{niek@meta.com}
\icmlcorrespondingauthor{Milan Vojnovi\' c}{m.vojnovic@lse.ac.uk}

\icmlkeywords{Multicalibration, Gradient Boosting, Convergence}

\vskip 0.3in

\printAffiliationsAndNotice{}

\begin{abstract}
Multicalibration gradient boosting has recently emerged as a scalable method that empirically produces approximately multicalibrated predictors and has been deployed at web scale. Despite this empirical success, its convergence properties are not well understood. In this paper, we provide computational guarantees for multicalibration gradient boosting algorithms. We show that the magnitude of successive prediction updates decays at $O(1/\sqrt{T})$, which implies the same convergence rate bound for the empirical multicalibration error over rounds. Under additional smoothness assumptions on the weak learners, this rate improves to linear convergence. We further establish convergence for adaptive variants. Experiments on real-world datasets support our theory and clarify the regimes in which the method achieves fast convergence.

\end{abstract}

\section{Introduction}

%\lpc{Proposing a slightly different framing for the introduction. I tend to use "first/critical/fundamental/gap/invalid/etc" strong keywords. Feel free to relax some of them if you don't feel confident. They usually play a good role for selling the novelty, but can kill the paper if incorrect.}
%\ntc{Main results are about the prediction gap, not about multicalibration. Relationship might currently be a somewhat buried. A reader might wonder: "Why are we proving things about prediction gaps when the goal is multicalibration?". We say something in intro/conclusion about how the relate, maybe we should also add it to abstract?}

Multicalibration has emerged as a rigorous criterion for trustworthy machine learning, requiring predictions to match true outcome expectations not only globally, but across a virtually unlimited class of (potentially overlapping) subpopulations~\citep{pmlr-v80-hebert-johnson18a,haghtalab2023a,Pfisterer2021}. While originally applied for algorithmic fairness, multicalibration has proven relevant for general model reliability, improving (1) robustness against distribution shifts~\citep{kim2022universal,wu2024bridging} and (2) performance in downstream tasks ranging from ranking to uncertainty quantification~\citep{Yang2024,detommaso2024multicalibration,dwork2022beyond}.

To achieve this in practice, a new class of \emph{multicalibration gradient boosting} methods has recently been developed~\citep{mcgrad,jin2025discretizationfree}. Unlike traditional post-processing methods that rely on grid discretization, these algorithms iteratively refine a predictor by training an ensemble of weak learners on residuals, dynamically augmenting the feature space at each round with the predictions from the previous round. Such an approach has shown significant empirical success: it requires no manual group specification, scales to large datasets and, as highlighted by \textsc{MCGrad}~\citep{mcgrad}, can be deployed in production environments at web scale.

Despite this empirical adoption, the theory that supports multicalibration gradient boosting methods remains under-explored. Existing guarantees are either single-round and require a loss-saturation condition \citet{jin2025discretizationfree}, or bound empirical multicalibration error by the prediction gap without proving it vanishes \citet{mcgrad}. These algorithms involve a sequential, multi-round application of a gradient boosting machine. Because the predictions at the $t$-th round become a feature input for round $t+1$, the optimization target shifts at every round, making existing convergence analysis for standard gradient boosting inapplicable for this setting. Thus, fundamental computational questions remain unanswered: does the sequence of predictors actually converge? Does the empirical multicalibration error converge to zero, and if so, how fast is this convergence? Can we guarantee stability when rescaling updates? %Without theoretical guarantees, practitioners must rely solely on heuristics, which might be risky for safety-critical applications.

Computational guarantees are important in their own right: they establish algorithm correctness and provide guidance for hyperparameter selection. Such analysis is akin to the study of empirical risk minimization that has been conducted extensively in the context of supervised learning. It is noteworthy that the analysis of empirical multicalibration error serves as a natural building block for deriving generalization bounds, when combined with uniform convergence arguments from statistical learning theory to obtain bounds for multicalibration error. 

In this paper, we address the aforementioned computational questions, i.e., identifying conditions under which the \emph{empirical} multicalibration error converges to zero and, furthermore, establishing convergence rate bounds. Our analysis is focused on studying the convergence of the gap between successive prediction values, as it implies convergence for the empirical multicalibration error. Our main convergence result accommodates both classification and regression tasks under different loss functions. We prove that for the standard algorithm, the gap between successive prediction values converges to zero with a decay rate of $O(1/\sqrt{T})$, where $T$ is the number of rounds, provided that the boosting oracle is sufficiently accurate in returning a nearly-optimal ensemble of weak learners. 

We provide further results for regression task with squared-error loss function, under the assumption of exact boosting oracle. Under smoothness assumptions on the weak learners, we show a linear convergence rate (i.e., the error decreases by a constant factor $0 < \kappa < 1$ per round). We extend our analysis to practical variants used to mitigate overfitting. We show that the $O(1/\sqrt{T})$ rate holds even when using relaxed rescaling weights (i.e., converging to $1$ asymptotically). We analyze an adaptive variant where rescaling weights are optimized at each step to minimize training loss. We present empirical results validating our theoretical results. 

%Our main contributions can be summarized as follows:
%
%\begin{itemize}
%    \item \textbf{Convergence and Rate:} We prove that for the standard algorithm, the gap between successive prediction values converges to zero with a decay rate of $O(1/\sqrt{T})$, where $T$ is the number of rounds. Furthermore, under smoothness assumptions on the weak learners, we show a linear convergence rate (i.e., the error decreases by a constant factor $0 < \kappa < 1$ per round).
%    \item \textbf{Guarantees for Rescaling Variants:} We extend our analysis to practical variants used to mitigate overfitting. We show that the $O(1/\sqrt{T})$ rate holds even when using relaxed rescaling weights (i.e., converging to $1$ asymptotically).
%    \item \textbf{Convergence with Adaptive Weights:} We analyze an adaptive variant where rescaling weights are optimized at each step to minimize training loss. We prove that this method converges to a multicalibrated limit point and, provided the initial error is sufficiently small, the training loss converges to zero at a quadratic rate.
%    \item \textbf{Empirical Validation:} We support our theoretical findings with experiments on real-world datasets, showing that the gap between prediction values indeed decreases with a geometric rate.
%\end{itemize}

%By proving these results, we provide the first theoretical justification for the stability and efficiency of iterative multicalibration gradient boosting, which plays a fundamental role for its continued deployment in real-world systems.

\section{Related Work}

\paragraph{Foundations of Multicalibration.}
\citet{pmlr-v80-hebert-johnson18a} introduced multicalibration, showing that a predictor can be calibrated across a potentially exponential family of intersecting subpopulations using an iterative postprocessing algorithm (HKRR). \citet{dwork2021outcome} later unified this line of work under \emph{outcome indistinguishability}, placing multicalibration as a relaxation of indistinguishability from the true distribution. Related notions include \emph{multiaccuracy}~\citep{kim2019multiaccuracy} and \emph{low-degree multicalibration}~\citep{Gopalan2022LowDegreeM}, which weaken calibration constraints to low-degree polynomial moments. Subsequent work suggested that multicalibration does not need to arise solely from post-hoc correction, but can instead emerge as a natural fixed point of empirical risk minimization (ERM). \citet{gopalan_et_al:LIPIcs.ITCS.2022.79} introduced \emph{omniprediction}, showing that multicalibrated predictors support optimal decision-making across a broad class of convex loss functions~\citep{hu2023omnipredictors}. Building on this perspective, \citet{pmlr-v202-globus-harris23a} established an equivalence between multicalibration and Bayes optimality under a swap-regret condition, leading to \emph{swap-agnostic learning}~\citep{NEURIPS2023_7d693203}. \citet{pmlr-v202-globus-harris23a} further connected multicalibration to boosting for squared-error regression through repeated calls to weak learners for squared error regression on subsets of the data distribution. Our work is different in providing the first finite-time convergence rate guarantees for recently-proposed multi-round multicalibration gradient boosting algorithms \citet{mcgrad} and considering both classification and regression tasks. 

\paragraph{Multicalibration via Gradient Boosting.} Motivated by these connections, recent work has explored achieving multicalibration directly within gradient boosting. These approaches fit residuals using ensembles of decision trees, where each boosting round is defined with respect to the predictor from the previous round. A key feature is that trees are trained on an extended input space that augments the original features with the current prediction. \citet{jin2025discretizationfree} proposed such a method and showed that approximate multicalibration can be achieved in a single round under a restrictive \emph{loss saturation condition}, which requires that further rounds yield only marginal improvement. Similarly, \citet{mcgrad} introduced \textsc{MCGrad}, a recursive GBDT method deployed at web-scale. They showed that the \emph{empirical} multicalibration error can be bounded by a function of the gap between successive predictions, but did not establish that this gap converges to zero.

Despite strong empirical performance, existing gradient boosting approaches lack convergence guarantees for arbitrarily many rounds. Current guarantees either rely on single-round analysis under a loss saturation condition~\citep{jin2025discretizationfree} or assume diminishing gaps between successive predictions~\citep{mcgrad}. In contrast, we study the \emph{finite-time dynamics} of multi-round boosting and derive convergence and rate guarantees.

\paragraph{Extensions Beyond the Mean.} Although our analysis focuses on mean multicalibration, prior work has extended calibration guarantees to higher-order properties. The \emph{HappyMap} framework~\citep{happymap23} generalizes calibration targets to arbitrary statistics (e.g., quantiles), while \citet{pmlr-v134-jung21a} introduced \emph{moment multicalibration} to control variance. \citet{gupta2022online} and \citet{jung2023batch} further developed \emph{multivalidity} for constructing prediction intervals, primarily in online learning settings. Our results complement these advances by providing the first rigorous convergence rates for the batch boosting dynamics underlying practical multicalibration methods.

%\paragraph{Proof Techniques.} Our analysis builds on matrix algebra constructs similar to those used in~\cite{randGBM}, but applies them to a distinct multi-round boosting procedure with dynamically evolving features. While rescaling predictions has been used in regression to improve generalization~\cite{unshrink}, our work differs in its focus on ensembles of weak learners operating on feature representations that evolve across boosting rounds.

\section{Preliminaries and Problem Setup}
\label{sec:not}

\paragraph{Multicalibration and Notation.}

Let $(x_1, y_1), \ldots, (x_n,y_n)$ be data points taking values in $\mathbb{R}^d \times \mathbb{R}$, assumed to be random i.i.d. from a distribution $D$. We denote the vector of targets as $y = (y_1, \dots, y_n)^\top$ and a predictor as $f\colon \mathbb{R}^d \to \mathbb{R}$, with $f := (f(x_1),\ldots, f(x_n))^\top \in \mathbb{R}^n$ also denoting the prediction vector when the context is unambiguous.

Let $\mathcal{B}$ be a finite set of functions (weak learners), where $b_j : \mathbb{R}^d \times \mathbb{R} \to \mathbb{R}$ for each $b_j \in \mathcal{B}$. The cardinality of $\mathcal{B}$ is denoted by $p$. A key feature of multicalibration boosting methods is that predictors interact with other predictions: intuitively, $b_j(x, f)$ represents a predicted response for a given feature vector $x$ and an auxiliary prediction feature $f$. For any prediction vector $f\in \mathbb{R}^n$ (where $f_i$ corresponds to $x_i$), we define the matrix-valued function $B(f) \in \mathbb{R}^{n \times p}$ such that the $j$-th column is $b_j(f):= (b_{j}(x_1,f_1), \ldots, b_j(x_n,f_n))^\top$.

Note that the set of weak learners can be arbitrarily large but finite. This assumption allows analysis in a finite-dimensional space and covers common weak learners such as regression trees and parametric models with floating-point parameters.

The multicalibration errors of a predictor $f$ with respect to the hypothesis class of functions $\mathcal{B}$ are defined as the vector
$\mathcal{E}(f)\in \mathbb{R}^p$ with components
$
    \epsilon_j = \mathbb{E}_{(x,y)\sim D}[(y-\phi(f(x)))b_j(x, f(x))]
$
for $j=1,\ldots, p$, where $\phi$ is the link function (identity for squared loss; sigmoid for log-loss). The predictor $f$ is \emph{multicalibrated} with respect to the hypothesis class $\mathcal{B}$ if $\mathcal{E}(f) = 0$. This can be extended naturally to the relaxed notion of $\alpha$-multicalibration, which requires $\|\mathcal{E}(f)\|\leq \alpha$ for some norm function $\|\cdot\|$.
The multicalibration error can also be seen as the correlation between the residuals and the weak learners evaluated at the current predictions.

Similarly, the \emph{empirical multicalibration errors} are defined as the sample means over the given dataset, which can be written as:
\begin{equation}\label{eq:hatmce}
\widehat{\mathcal{E}}(f) := \frac{1}{n}B(f)^\top (y-\phi(f)).
\end{equation}

\paragraph{Factorised Hypothesis Classes.}
A special type of the set $\mathcal{B}$ consists of functions that admit the following factorisation. Let $\mathcal{H} = \{h: \mathbb{R}^d\rightarrow \mathbb{R}\}$ and $\mathcal{G} = \{g: \mathbb{R}\rightarrow \mathbb{R}\}$ be finite sets of functions with cardinalities $m$ and $k$, respectively. For every $b\in \mathcal{B}$, we have $b(x,u) = h(x)g(u)$ for some $g\in \mathcal{G}$ and $h\in \mathcal{H}$. For this type of function, the matrix $B(f)$ can be expressed as follows. Let $h(x) = (h_1(x),\ldots, h_m(x))^\top$ and $g(u) = (g_1(u),\ldots, g_k(u))^\top$. Define (using boldface to disambiguate from the scalar $b$ above):
\begin{equation}
\mathbf{b}(x,u) = h(x)\otimes g(u)\in \mathbb{R}^{p}
\label{equ:f1}
\end{equation}
where $\otimes$ denotes the Kronecker product\footnote{For any two vectors $u\in \mathbb{R}^m$ and $v\in \mathbb{R}^k$, $u\otimes v = (u_1 v^\top, \ldots, u_m v^\top)^\top \in \mathbb{R}^{mk}$.} and $p = mk$, and let
\begin{equation}
B(f) = (\mathbf{b}(x_1, f_1), \ldots, \mathbf{b}(x_n, f_n))^\top.
\label{equ:f2}
\end{equation}

\paragraph{The Multicalibration Boosting Framework.}

We analyze the class of \emph{multicalibration gradient boosting} algorithms \citep{mcgrad,jin2025discretizationfree}. Unlike standard boosting, these methods dynamically update the feature space at every round by including the model's current prediction. The procedure is formalized in Algorithm~\ref{alg:mc}: at round $t$, the algorithm fits an ensemble of weak learners by minimizing the loss $\mathcal{L}:\mathbb{R}^2\rightarrow \mathbb{R}$ over $\theta\in\mathbb{R}^p$ (the weak-learner coefficients), with features augmented by $f_t$ (equivalent to fitting residuals $y-f_t$ for squared loss).

\begin{algorithm}[t]
\begin{algorithmic}
\Require Initial predictor $f_0$, data $(X,y)$, step size $\eta
\in (0,1]$, number of rounds $T > 0$.
\For{$t = 0,\ldots, T-1$}
\State 1. Compute the matrix $B(f_t)$ using the current predictions $f_t$.
\State 2. Fit ensemble $\theta_t$ to residuals:
$\theta_t \approx \arg\min_\theta \sum_{i=1}^n \mathcal{L}(y_i,f_t(x_i)+\sum_{j=1}^p b_j(x_i,f_t(x_i))\theta_j)$
%$\theta_t \approx \arg\min_\theta \|y - f_t - B(f_t)\theta\|_2^2$.
\State 3. Update: $f_{t+1} = w_t( f_t + \eta B(f_t)\theta_t)$.
\EndFor
\end{algorithmic}
\caption{Multicalibration Gradient Boosting}
\label{alg:mc}
\end{algorithm}

The candidate predictor at round $t$ is rescaled by a weight $w_t$ to obtain the predictor for round $t+1$. By default $w_t = 1$ (no rescaling). We also consider relaxed rescaling weights following a fixed schedule and adaptive rescaling weights depending on prediction values; see Section~\ref{sec:extensions} for details.

We analyze the algorithm in the limit where the weak learner ensemble perfectly mimics the projection of the residuals onto the span of $\mathcal{B}$. This induces the following \emph{discrete-time dynamical system}:
\begin{equation}\label{equ:dtds0}
f_{t+1} = w_t (f_t +
\eta B(f_t)\theta_t)
\end{equation}
where $\theta_t \in \mathbb{R}^p$ are the coefficients assigned to weak learners in round $t$.

For the squared-error loss function, the dynamical system in (\ref{equ:dtds0}) can be written as follows:
\begin{equation}\label{equ:dtds}
f_{t+1} = w_t(f_t + \eta A(f_t)(y-f_t)),
\end{equation}
where $A(f) := B(f)B(f)^+$ is the orthogonal projector onto the column space of $B(f)$, and $B(f)^+$ denotes the Moore-Penrose inverse of $B(f)$. The term $A(f_t)(y-f_t)$ represents the best approximation of the residuals using the hypothesis class $\mathcal{B}$ conditioned on the current predictions.

%For squared loss with exact oracle (Eq.~(\ref{equ:dtds})), the empirical multicalibration error is bounded by the update step size. In fact, from Eq.~(\ref{eq:hatmce}) and (\ref{equ:dtds}), for unit weights ($w_t=1$), we have:
%\begin{equation}\label{equ:mcbound}
%\|\widehat{\mathcal{E}}(f_t)\|_2 \leq \frac{1}{n \eta} \|B(f_t)\|_2 \|f_{t+1} - f_t\|_2.
%\end{equation}

%Thus, under assumption that $\{\|B(f_t)\|_2\}$ is uniformly bounded, showing that the \emph{gap} $\|f_{t+1} - f_t\|_2$ converges to zero is sufficient to prove asymptotic multicalibration. Moreover, a convergence rate bound holding for the prediction gap implies the same convergence rate bound for multicalibration (up to a constant factor). 

%We admit a regularity condition that $b_j(x,f)$ functions are such that for every $(x,f)$ in a compact set, $b_j(x,f)$ have a bounded range. This ensures that $\{\|B(f_t)\|_2\}$ is uniformly bounded provided that $\{\|f_t\|_2\}$ is uniformly bounded. %The latter condition is shown to hold for the multicalibration gradient boosting algorithms studied in the paper.

%\mvc{We may also want to emphasize that a convergence rate bound for the gap between successive predictions implies a convergence rate for multicalibration (assuming $\|B(f_t)\|_2$ is bounded).}

%\mvc{tbd - explain somewhere what it means to assume that $\|B(f_t)\|_2$ is bounded.}

\section{Main Results}

\label{sec:main}

In this section, we present our main theoretical contributions. First, we show the fundamental convergence of the dynamical system (Eq.~(\ref{equ:dtds0})), proving that the empirical multicalibration error goes to $0$ with a $O(1/\sqrt{T})$ rate. Second, we investigate the conditions that allow for faster (linear) rates. Finally, we extend these guarantees to algorithmic variants that use adaptive rescaling.

Let $V(f)=\sum_{i=1}^n \mathcal{L}(y_i, f(x_i))$. We introduce the following assumptions:

\begin{itemize}
\item[(A1)] \textbf{Smoothness.} $V(f)$ is $L$-smooth on the segment
$\{\tau f_t + (1-\tau)f_{t+1} : 0 \leq \tau \leq 1\}$ for all
$t = 0,\ldots,T-1$.
%Under this assumption,
%\[
%V(f_{t+1}) \leq V(f_t) + \nabla V(f_t)^\top \Delta_t
%+ \frac{L}{2}\|\Delta_t\|^2,
%\]
%where $\Delta_t = f_{t+1} - f_t$.

\item[(A2)] \textbf{Strong convexity.} $V(f)$ is $m$-strongly convex on
$\{f_t + \tau h_t : 0 \leq \tau \leq 1\}$ for all $t = 0,\ldots,T-1$, where $h_t = B(f_t)\theta_t$.
%Under this assumption,
%\[
%V(f_t + h_t) \geq V(f_t) + \nabla V(f_t)^\top h_t
%+ \frac{m}{2}\|h_t\|^2.
%\]

\item[(A3)] \textbf{Inexact boosting oracle.} At each step $t$, for some $\epsilon_t \geq 0$, the weights $\theta_t$ satisfy
\[
V(f_t + B(f_t)
\theta_t) \leq \min_{
\theta\in \mathbb{R}^p}
V(f_t + B(f_t)\theta) + \epsilon_t.
\]
\end{itemize}

For the case of regression with squared-error loss function, i.e., when $\mathcal{L}(y,f) = \frac{1}{2}(y-f)^2$, smoothness and strong convexity hold globally for every $f$ with $L=1$ and $m=1$.

For the case of binary classification with log-loss, i.e., when $\mathcal{L}(y,f) = -y\log(\mu(f))-(1-y)\log(1-\mu(f))$ with $\mu(f):=1/(1+e^{-f})$ and labels take $0$ or $1$ values, the loss function $V(f)$ is globally smooth with $L = 1/4$ and has vanishing strong convexity with $\|f\|_\infty$. Indeed,
$$
\nabla^2 V(f) = \mathrm{diag}\left(\mu(f(x_1))(1-\mu(f(x_1))),\ldots, \mu(f(x_n))(1-\mu(f(x_n))\right).
$$
Hence, the smallest eigenvalue of the Hessian is
$
\lambda_{\min}(\nabla^2 V(f)) = \mu(\|f\|_\infty)(1-\mu(\|f\|_\infty))
$
which goes to $0$ as $\|f\|_\infty\rightarrow \infty$. For assumption (A2) to hold we need that $\max_{0\leq \tau \leq 1}\|f_t + \tau h_t\|_\infty$ is bounded by a constant for all $t=0,1\ldots, T-1$. 

%For condition (A2) to hold for an arbitrarily fixed number of rounds $T$, it is necessary and sufficient that for each round $t = 1, \ldots, T$, the training dataset is not separable by the weak learners.

The boosting oracle is said to be exact when condition (A3) holds with $\epsilon_t = 0$. In this case, in each round $t$, the boosting machine outputs a linear combination of weak learners $B(f_t)\theta_t$, where $B(f)\in \mathbb{R}^{n\times p}$ is the matrix of weak learner evaluations defined in Eq.~(\ref{equ:f2}), and $\theta_t$ is the \emph{optimal} coefficient vector $\theta_t \in \mathbb{R}^p$ that minimizes the loss function, i.e.,
$$
\theta_t := \arg\min_{\theta\in \mathbb{R}^p} \left\{\mathcal{L}_t(\theta):=\sum_{i=1}^n
\mathcal{L}\left(y_i,f_t(x_i)+ \sum_{j=1}^p b_j(x_i, f_t(x_i))\theta_j\right)\right\}.
$$

%\begin{equation*}
%\theta_t := \arg\min_{\theta\in \mathbb{R}^p} \frac{1}{2}\left\|y-f_t- B(f_t)\theta\right\|_2^2.
%\end{equation*}

%Geometrically, this implies that the update direction is the orthogonal projection of the residuals onto the subspace spanned by the current weak learners. This assumption allows us to model the algorithm as the deterministic dynamical system defined in Eq.~(\ref{equ:dtds}).
The exact boosting oracle assumption is justified by fast convergence of gradient boosting machines; e.g., in \cite{randGBM}, it was shown that, under suitable conditions, gradient boosted machines converge exponentially fast for any smooth, strongly-convex loss function. In this case, the approximation error parameter $\epsilon_t$ can be controlled by the number of boosting iterations in round $t$.

%In Section~\ref{sec:num}, we present numerical results obtained using gradient boosting regression with a finite number of iterations per round, which validate our theoretical results.

\subsection{Computational Guarantees}

Our key result shows conditions for multicalibration boosting algorithms to have approximately non-increasing training loss, diminishing gap between successive predictions, and convergence of empirical multicalibration error to zero.

%\mvc{In the following theorem and elsewhere saying "Monotonicity" may not be specific enough as in the theorem we make claims about different properties. Perhaps rephrasing as "Monotonic loss"? Note that I deliberately didn't write "training loss" as this would not be consistent with saying Multicalibration instead of "Training multicalibration".}

\begin{theorem}[Convergence with No Rescaling]\label{thm:convergence}
Consider the dynamical system in Eq.~(\ref{equ:dtds0}) with constant unit rescaling weights ($w_t = 1$) and the step size $\eta \leq m/(2L)$. Then:
\begin{enumerate}
    \item \textbf{Approximately Non-increasing Training Loss:} under condition (A3), $$V(f_{t+1}) \leq V(f_t)+\epsilon_t.$$
    \item \textbf{Diminishing Gap between Successive Predictions:} Under conditions (A1)-(A3), (a) if $\sum_{t=0}^{\infty}\epsilon_t < \infty$, then the gap between successive predictions satisfies $\|f_{t+1}-f_t\|_2\rightarrow 0$, and (b) the gaps between successive predictions are bounded as
$$
\frac{1}{T}\sum_{t=0}^{T-1}\|f_{t+1}-f_t\|^2
\leq \frac{4\eta}{m}\left(
\frac{V(f_0)}{T}
+ \frac{\eta}{T}\sum_{t=0}^{T-1}\epsilon_t
\right).
$$
Consequently,
%$$
%\min_{0 \leq t \leq T-1}\|f_{t+1}-f_t\|
%\leq \sqrt{\frac{4\eta}{m}\left(
%\frac{V(f_0)}{T}
%+ \frac{\eta}{T}\sum_{t=0}^{T-1}\epsilon_t
%\right)}.
%$$
$$
\min_{0 \leq t \leq T-1}\|f_{t+1}-f_t\|
\leq \sqrt{\frac{4\eta V(f_0)}{m}}\,\frac{1}{\sqrt{T}} + \sqrt{
\frac{4\eta^2}{m}\frac{1}{T}\sum_{t=0}^{T-1}\epsilon_t}.
$$

%    \begin{equation*}
%        \min_{0\leq t\leq T-1}\|f_{t+1}-f_t\|_2 \leq \frac{\sqrt{\eta} \|y-f_0\|_2}{\sqrt{T}}.
%    \end{equation*}
%    \item \textbf{Diminishing Empirical Multicalibration Error:} Under conditions (A1)-(A3), and assuming that $\sum_{t=0}^{\infty}\epsilon_t < \infty$, the gap between successive predictions satisfies $\|f_{t+1}-f_t\|_2\rightarrow 0$. This implies diminishing empirical multicalibration error.
\end{enumerate}
\end{theorem}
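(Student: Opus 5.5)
Throughout, write $h_t := B(f_t)\theta_t$, so that with $w_t=1$ the update in Eq.~(\ref{equ:dtds0}) reads $f_{t+1} = f_t + \eta h_t = (1-\eta) f_t + \eta (f_t + h_t)$. The plan is to view $f_{t+1}$ as a convex combination of the current predictor $f_t$ and the ``full-step'' predictor $f_t+h_t$. I would then apply (strong) convexity of $V$ along the segment joining them, which is exactly the segment on which (A2) is assumed. I will use that $\mathcal{L}$ is convex in its second argument and nonnegative; both hold for the squared-error loss and the log-loss considered in the paper. If this is not automatic in the paper's setting, I would state it as a standing assumption.

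\textbf{Part 1.} Taking $\theta=0$ in (A3) gives
\[
V(f_t+h_t) \le \min_\theta V(f_t+B(f_t)\theta) + \epsilon_t \le V(f_t)+\epsilon_t .
\]
Convexity of $V$ on the segment then yields
\[
V(f_{t+1}) \le (1-\eta)V(f_t) + \eta V(f_t+h_t) \le V(f_t) + \eta\epsilon_t \le V(f_t)+\epsilon_t,
\]
since $\eta\le 1$. This is the claimed approximate monotonicity, in fact with the sharper increment $\eta\epsilon_t$, which is the form I will carry into Part 2.

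\textbf{Part 2.} Under (A2), $V$ is $m$-strongly convex on $\{f_t+\tau h_t\}$. The strong-convexity version of Jensen's inequality at weight $\eta$ sharpens the previous display to
\[
V(f_{t+1}) \le V(f_t) + \eta\epsilon_t - \tfrac{m}{2}\eta(1-\eta)\|h_t\|^2 .
\]
Since $\|f_{t+1}-f_t\|^2 = \eta^2\|h_t\|^2$, this rearranges to
\[
\|f_{t+1}-f_t\|^2 \le \frac{2\eta}{m(1-\eta)}\bigl(V(f_t)-V(f_{t+1})+\eta\epsilon_t\bigr).
\]
The step-size condition enters only here. Smoothness (A1) and strong convexity (A2) on overlapping sets force $m\le L$, so $\eta\le m/(2L)\le 1/2$, hence $1/(1-\eta)\le 2$ and the prefactor is at most $4\eta/m$. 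The one point needing care is that (A1) and (A2) are assumed on different segments. I would resolve it by noting that $f_{t+1}$ lies on the (A2) segment, so the curvature bounds compare on $[f_t,f_{t+1}]$; failing that, I would simply use $\eta\le 1/2$ directly.

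Summing over $t=0,\dots,T-1$ telescopes the bound to
\[
\sum_{t=0}^{T-1}\|f_{t+1}-f_t\|^2 \le \frac{4\eta}{m}\Bigl(V(f_0)-V(f_T)+\eta\sum_{t=0}^{T-1}\epsilon_t\Bigr).
\]
Dropping $-V(f_T)\le 0$ (nonnegativity of the loss) and dividing by $T$ gives (b). The bound on the minimum follows because the minimum of the squared gaps is at most their average, combined with $\sqrt{a+b}\le\sqrt a+\sqrt b$. For (a), the same telescoped inequality with $T\to\infty$ shows $\sum_t\|f_{t+1}-f_t\|^2<\infty$ whenever $\sum_t\epsilon_t<\infty$, so the gaps tend to $0$.
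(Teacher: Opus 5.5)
Your proof is correct. Part 1 coincides with the paper's argument, but Part 2 follows a genuinely different route.

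\textbf{The paper's route.} It combines the descent lemma from (A1),
$V(f_{t+1})\le V(f_t)+\eta\nabla V(f_t)^\top h_t+\frac{\eta^2L}{2}\|h_t\|^2$,
with the first-order strong-convexity lower bound from (A2) at $f_t$. Together with $V(f_t+h_t)\le V(f_t)+\epsilon_t$, this gives $\nabla V(f_t)^\top h_t\le \epsilon_t-\frac{m}{2}\|h_t\|^2$. The result is the per-step inequality with constant $(m-\eta L)/(2\eta)$. The hypothesis $\eta\le m/(2L)$ is used precisely to guarantee $m-\eta L\ge m/2$.

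\textbf{Your route.} You use the zeroth-order (Jensen-type) characterization of strong convexity on the (A2) segment, which yields the constant $m(1-\eta)/(2\eta)$. Assumption (A1) enters only to turn $\eta\le m/(2L)$ into $\eta\le 1/2$ via $m\le L$. That step is legitimate:
\begin{itemize}
\item Since $\eta\le 1$, the (A1) segment $[f_t,f_{t+1}]$ lies inside the (A2) segment.
\item Comparing the two curvature bounds at the pair $(f_t,f_{t+1})$ gives $m\le L$ at any index with $f_{t+1}\neq f_t$.
\item Indices with $f_{t+1}=f_t$ satisfy the per-step inequality trivially.
\end{itemize}
So your fallback of assuming $\eta\le 1/2$ outright is unnecessary. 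It is also not licensed by the statement.

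\textbf{What each route buys.} Because $m\le L$, you have $m(1-\eta)\ge m-\eta L$, so your per-step bound is never weaker than the paper's. More importantly, your argument shows that (A1) can be dropped entirely and the step-size condition relaxed to $\eta\le 1/2$. More generally, any $\eta<1$ works with prefactor $2\eta/(m(1-\eta))$ in place of $4\eta/m$. This matters for the log-loss, where $m$ can be small along the trajectory and $\eta\le m/(2L)=2m$ is restrictive. It also makes Parts 1 and 2 the same convex-combination argument, with and without the curvature term. The paper's route follows the standard smooth/strongly-convex descent template and shows where its stated condition $\eta\le m/(2L)$ comes from.
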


For the case of the exact boosting oracle, the loss is non-increasing, and the second term in the bound on the minimum gap between successive predictions is equal to zero. When $\epsilon_t = \epsilon$ is constant, the bound converges
to the noise floor $2\eta\sqrt{\epsilon/m}$. When $\frac{1}{T}\sum_{t=0}^{T-1}\epsilon_t \to 0$ as $T\to\infty$, the bound still vanishes. Specifically, when $\sum_{t=0}^{T-1}\epsilon_t = O(1)$, the bound is still $O(1/\sqrt{T})$.

The convergence rate bounds on the gap between successive predictions imply convergence rate bounds on the empirical multicalibration error. For squared loss with inexact oracle and unit-valued rescaling weights, we have (see Appendix~\ref{app:mcebound-squaredloss}):
\begin{equation}\label{equ:mcbound}
\|\widehat{\mathcal{E}}(f_t)\|_2 \leq \frac{1}{n} \|B(f_t)\|_2 \left(\frac{1}{\eta}\|f_{t+1} - f_t\|_2 + \sqrt{2\epsilon_t}\right).
\end{equation}

The term $\sqrt{2\epsilon_t}$ constitutes a noise floor: even after the prediction gap vanishes,
the multicalibration error can be no better than
$O\!\left(\|B(f_t)\|_2\sqrt{\epsilon_t}/n\right)$. For binary classification and loss functions satisfying a gradient consistency condition, the empirical multicalibration error can also be bounded in terms of the squared gap between successive predictions; see Appendix~\ref{app:mcebound}. 

%\begin{proof}[Proof sketch]
%The result relies on the Lyapunov function method, with the Lyapunov function equal to the squared norm of the residuals (up to a constant factor). The increments of this Lyapunov function are non-positive and proportional to the squared norm of the gap between successive predictions between rounds.
%The result relies on interpreting the update as a Projected Gradient Descent step. Since $A(f_t)$ is a projection matrix, the update vector $v_t = A(f_t)(y-f_t)$ satisfies orthogonality conditions. We can show that the squared norm of residuals decreases by at least $\|f_{t+1}-f_t\|^2/\eta$. Specifically:
%\begin{equation*}
%    \|f_{t+1}-f_t\|_2 = \sqrt{\|y-f_{t}\|_2^2 - \|y-f_{t+1}\|_2^2}.
%\end{equation*}
%The sum of these differences over $T$ steps is bounded by the initial error $\|y-f_0\|^2$. This implies that the squared error gaps must converge to $0$ with a rate of $O(1/T)$, yielding the $O(1/\sqrt{T})$ rate for the gap itself. See Appendix~\ref{app:conv} for the full proof.
%\end{proof}

%\mvc{Hereinafter, does "gap between following predictions" sound strange to anyone, or it is just me? Perhaps using the phrase "the one-step difference in $f$"? Alternatively, if we want to stick with referring to a \emph{prediction gap}, then perhaps saying successive instead of following?} \lpc{No preference, but I agree that successive is much better than following. My Ita-to-Eng translation fault!}

This theorem provides the first theoretical justification for the success of boosting algorithms like \textsc{MCGrad}~\citep{mcgrad}. It guarantees that, despite the changing feature space, the algorithm converges to a limit set of points and effectively minimizes the empirical multicalibration error.

In subsequent sections we provide further convergence results for the case of exact boosting oracle and regression tasks with the squared-error loss function.

\subsection{Additional Results for Regression with Squared-Error Loss and Exact Boosting Oracles}

\subsubsection{When is Convergence Fast?}

While Theorem~\ref{thm:convergence} guarantees sublinear convergence, empirical insights (see Section~\ref{sec:num}) often suggest faster (linear) convergence. Therefore, we investigate the structural properties of the hypothesis class $\mathcal{B}$ required to allow such faster convergence rates.

\begin{theorem}[Linear Convergence under Smoothness]\label{thm:linear_rate}
Assume $A(f)$ is Lipschitz continuous with constant $L_A$ along the trajectory $\{f_t\}$ and let $\kappa := 1-\eta + \eta L_A \| y-f_0\|_2$. Then, for every $t\ge 1$:
\begin{equation*}
\|f_{t+1}-f_t\|_2 \leq \kappa \|f_t - f_{t-1}\|_2.
\end{equation*}
As a result, if $\kappa < 1$, then the gap converges to zero linearly (geometrically fast).
\end{theorem}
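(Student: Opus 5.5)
We work with the squared-loss, exact-oracle dynamics Eq.~(\ref{equ:dtds}) with $w_t=1$. Writing $r_t := y-f_t$ and $A_t := A(f_t)$, the update reads $f_{t+1}-f_t = \eta A_t r_t$. The plan is to write the difference of two successive updates as a contraction term plus a perturbation controlled by the Lipschitz constant of $A$.

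Using $r_t = r_{t-1} - \eta A_{t-1} r_{t-1}$, we get
\begin{align*}
f_{t+1}-f_t &= \eta A_t r_t = \eta A_{t-1} r_t + \eta (A_t - A_{t-1}) r_t.
\end{align*}
For the first term, idempotence $A_{t-1}^2 = A_{t-1}$ gives
\[
A_{t-1} r_t = A_{t-1}r_{t-1} - \eta A_{t-1} r_{t-1} = (1-\eta) A_{t-1} r_{t-1}.
\]
Hence
\[
\eta A_{t-1} r_t = (1-\eta)(f_t - f_{t-1}).
\]
This is exactly the ``$1-\eta$'' part of $\kappa$, and it is nonnegative since $\eta\in(0,1]$.

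For the second term we use the Lipschitz assumption on $A$ along the trajectory:
\[
\|A_t - A_{t-1}\|_2 \le L_A\|f_t - f_{t-1}\|_2 .
\]
This yields $\|\eta(A_t-A_{t-1})r_t\|_2 \le \eta L_A \|r_t\|_2\,\|f_t-f_{t-1}\|_2$.

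It remains to show $\|r_t\|_2 \le \|y-f_0\|_2$, i.e.\ that the residual norm is non-increasing. This is the first part of Theorem~\ref{thm:convergence}, with $V(f)=\tfrac12\|y-f\|^2$ and $\epsilon_t=0$. It can also be checked directly: $r_{t+1} = (I-\eta A_t)r_t$, and $I-\eta A_t$ has eigenvalues in $\{1,1-\eta\}\subset[0,1]$ because $A_t$ is an orthogonal projector. So $\|r_{t+1}\|_2\le\|r_t\|_2$, and by induction $\|r_t\|_2\le\|r_0\|_2$. (Theorem~\ref{thm:convergence} requires $\eta\le m/(2L)=1/2$, but the direct check avoids this restriction.)

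Combining the two terms with the triangle inequality gives
\[
\|f_{t+1}-f_t\|_2 \le \bigl(1-\eta+\eta L_A\|y-f_0\|_2\bigr)\|f_t-f_{t-1}\|_2 = \kappa\,\|f_t-f_{t-1}\|_2 .
\]
Iterating, $\|f_{t+1}-f_t\|_2 \le \kappa^{t}\|f_1-f_0\|_2$, which is geometric decay whenever $\kappa<1$.

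The main obstacle is spotting the split of $A_t r_t$ so that idempotence of the projector applies to the previous matrix $A_{t-1}$. Once that is in place, the remaining difficulty is interpreting ``Lipschitz along the trajectory'' as a bound in operator norm between consecutive iterates.
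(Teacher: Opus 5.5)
Your proof is correct and follows the paper's argument essentially step for step. It uses the same split $\eta A(f_t)r_t = \eta(A(f_t)-A(f_{t-1}))r_t + \eta A(f_{t-1})r_t$, the same projector identity $\eta A(f_{t-1})(y-f_t) = (1-\eta)(f_t-f_{t-1})$, and the same Lipschitz bound on the perturbation combined with the non-increasing residual norm. As you observe, and as the paper also uses, that monotonicity follows from $\|I-\eta A(f)\|_2\le 1$ without the step-size restriction of Theorem~\ref{thm:convergence}.
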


%\begin{proof}[Proof sketch]
%Using the update rule and the Lipschitz property of $A(f)$, we can bound the perturbation of the update step. Specifically, $\|A(f_{t}) - A(f_{t-1})\| \leq L_A \|f_t - f_{t-1}\|$. By substituting this into the recursive definition of the error, we derive the $\kappa$. Provided that the product of the smoothness constant $L_A$ and the norm of the initial residual is small enough to keep $\kappa < 1$, the map acts as a contraction. See Appendix~\ref{app:linrate} for the full proof.\end{proof}

Theorem~\ref{thm:linear_rate} applies to weak learners for which $A(f)$ is Lipschitz continuous. For standard decision-tree-based weak learners, $A(f)$ is in general not Lipschitz continuous due to discontinuities introduced by hard splits. Natural examples of Lipschitz continuous weak learners include soft trees (replacing hard splits with sigmoid gating functions), linear models, and smooth kernel-based weak learners. Despite this lack of strict Lipschitz continuity, in Section~\ref{sec:num} we observe geometric decay of the prediction gap when training with hard-split GBDTs. We interpret this as evidence that large \emph{ensembles} average over many independent split decisions, so the column space of $B(f_t)$ varies smoothly with $f_t$ along the trajectory even though individual trees do not, approximately recovering the smoothness regime required by Theorem~\ref{thm:linear_rate} without literally satisfying its pointwise condition.

The Lipschitz continuity of the projector $A(f)$ is a non-trivial condition. It holds for smooth weak learners under some spectral properties of matrix $B(f)$ as shown next.

\begin{lemma}[Smoothness of $A(f)$]\label{lem:LA} Assume that for some $\mathcal{F}\subseteq \mathbb{R}^n$, for every $f\in \mathcal{F}$, $B(f)$ is Lipschitz continuous with constant $L_B$, $B(f)$ has the smallest singular value at least $\delta > 0$, and $\|B(f)\|_2\leq M$, for some $M > 0$. Then, $A(f)$ is Lipschitz continuous on $\mathcal{F}$ with constant:
\begin{equation*}
   L_A \leq \frac{2}{\delta}\left(1+c\frac{M^2}{\delta^2}\right)L_B \qquad \text{where } c=(1+\sqrt{5})/2.
\end{equation*}
\label{lem:smooth}
\end{lemma}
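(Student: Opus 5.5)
Since $B(f)$ has smallest singular value at least $\delta>0$, it has full column rank $p$ on $\mathcal{F}$. Therefore $B(f)^+=(B(f)^\top B(f))^{-1}B(f)^\top$, and the projector is
\[
A(f)=B(f)\,(B(f)^\top B(f))^{-1}B(f)^\top = B(f)B(f)^+ .
\]
Fix $f,g\in\mathcal{F}$ and write $B_1=B(f)$, $B_2=B(g)$, $E=B_2-B_1$, so that $\|E\|_2\le L_B\|f-g\|_2$. The goal is to bound $\|B_2B_2^+-B_1B_1^+\|_2$ by a multiple of $\|E\|_2$. The natural telescoping is
\[
B_2B_2^+-B_1B_1^+ = E\,B_2^+ + B_1\,(B_2^+-B_1^+).
\]
The first term is immediate: $\|EB_2^+\|_2\le \|E\|_2/\delta$, because $\|B^+\|_2=1/\sigma_{\min}(B)\le 1/\delta$.

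The second term requires a perturbation bound for the Moore--Penrose inverse. The constant $c=(1+\sqrt5)/2$ in the statement is exactly Wedin's constant: for matrices of equal rank, $\|B_2^+-B_1^+\|_2\le c\,\|B_1^+\|_2\|B_2^+\|_2\|E\|_2$. Rank is constant here, since both matrices have full column rank $p$ on $\mathcal{F}$. Wedin's bound gives $\|B_2^+-B_1^+\|_2\le c\|E\|_2/\delta^2$. Combining with $\|B_1\|_2\le M$,
\[
\|B_1(B_2^+-B_1^+)\|_2 \le c\frac{M}{\delta^2}\|E\|_2 .
\]
This produces a factor $M/\delta^2$, whereas the target has $M^2/\delta^3$. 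Because $M\ge\delta$, we have $M/\delta^2\le M^2/\delta^3$, so the target bound is weaker than what this route gives and still follows.

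To reproduce the stated form directly, I would use the explicit formula instead of Wedin. With $G_i=B_i^\top B_i$, expand
\[
B_2^+-B_1^+ = G_2^{-1}E^\top + (G_2^{-1}-G_1^{-1})B_1^\top ,
\]
and apply $G_2^{-1}-G_1^{-1}=G_2^{-1}(G_1-G_2)G_1^{-1}$ together with $\|G_1-G_2\|_2\le (\|B_1\|_2+\|B_2\|_2)\|E\|_2\le 2M\|E\|_2$. This yields terms of order $M^2/\delta^3$ after multiplying by $B_1$ and $B_1^\top$. A cruder bound would carry a constant like $2$; the constant $c$ then just comes from citing Wedin's theorem for the $M^2/\delta^2$ piece.

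Collecting terms gives $L_A\le \tfrac{1}{\delta}(1+cM^2/\delta^2)L_B$, and the factor $2$ in the statement leaves slack. That slack also covers the symmetric variant of the telescoping, $B_2B_2^+-B_1B_1^+=E B_2^+ + B_1(B_2^+-B_1^+)$ in which both $A$ terms are bounded separately.

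The main obstacle is the pseudoinverse perturbation step. It is valid only because the rank is constant, which the singular-value floor $\delta$ on all of $\mathcal{F}$ guarantees. I would also need to check that $\mathcal{F}$ is convex, or at least handle $f,g$ directly as above. The argument never integrates along a path, so no convexity is actually required.
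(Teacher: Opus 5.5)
Your proof is correct. It starts from the same telescoping as the paper, $A_2-A_1=EB_2^+ + B_1(B_2^+-B_1^+)$, and uses the same Wedin/Stewart bound with $c=(1+\sqrt5)/2$ and $\|B^+\|_2\le 1/\delta$. The difference is where that bound is applied.

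\textbf{The paper's route.} It does not bound $\|B_1\|_2\|B_2^+-B_1^+\|_2$ directly. It rewrites $B_v(B_u^+-B_v^+)$ via $B^+=(B^\top B)^+B^\top$ as $B_v\bigl((B_u^\top B_u)^+-(B_v^\top B_v)^+\bigr)B_u^\top+(B_v^+)^\top E^\top$. It then bounds the Gram-inverse difference by $\frac{2}{\delta}\|B_u^+-B_v^+\|_2$ using $(B^\top B)^+=B^+(B^+)^\top$, and only then cites Stewart's Theorem~3.3. This detour is what produces exactly $\frac{2}{\delta}(1+cM^2/\delta^2)$.

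\textbf{Your route.} You apply the perturbation bound one step earlier and get the sharper $L_A\le\frac{1}{\delta}(1+cM/\delta)L_B$. Your observation $M\ge\sigma_{\max}(B(f))\ge\sigma_{\min}(B(f))\ge\delta$ then shows this lies below the stated constant. Your argument is shorter and shows the stated constant is loose; the paper's extra Gram expansion buys nothing beyond matching the displayed form.

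\textbf{Your second paragraph.} The computation there is correct, but the commentary misdescribes it. Carried through, it gives
\[
\|A_2-A_1\|_2\le\Bigl(\frac1\delta+\frac{M}{\delta^2}+\frac{2M^2}{\delta^3}\Bigr)\|E\|_2\le\frac1\delta\Bigl(1+3\frac{M^2}{\delta^2}\Bigr)\|E\|_2 .
\]
This is already below $\frac2\delta(1+cM^2/\delta^2)\|E\|_2$, because $2c=1+\sqrt5>3$. So this route needs no pseudo-inverse perturbation theorem at all; it is a fully elementary proof of the lemma, not a place where $c$ enters via Wedin. Your sentence about a ``symmetric variant'' just restates the same identity.

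\textbf{Full column rank.} Your opening claim presumes $n\ge p$. If $n<p$, the singular-value floor forces full row rank and $A\equiv I$, so the lemma is trivial. The Wedin route does not need the explicit formula $(B^\top B)^{-1}B^\top$ in any case, only equal ranks.
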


%\begin{proof}[Proof sketch]
%The result relies on the perturbation theory of Moore-Penrose pseudoinverses. Writing $A(f) = B(f)B(f)^+$, we decompose the difference $\|A(u) - A(v)\|_2$ into terms involving $B(u)-B(v)$ and $B(u)^+ - B(v)^+$. Using standard perturbation bounds~\cite{stewart77}, the Lipschitz constant of $B(f)^+$ is bounded by terms involving the inverse of the smallest singular value $\delta$. Combining these yields the bound on $L_A$. See Appendix~\ref{app:lipschitz} for details.
%\end{proof}

We provide an intuitive interpretation of the conditions in Lemma~\ref{lem:smooth}. We may interpret $B(f_t)$ as a design matrix with $n$ data points and $p$ features. The condition that the smallest singular value of $B(f_t)$ is at least $\delta > 0$ can be understood as requiring the weak learners to be sufficiently diverse on the training data: no single weak learner can be well approximated by a linear combination of the others, uniformly along the trajectory $f_t$. The condition that the largest singular value of $B(f_t)$ is at most $M$ requires that the combined output of the weak learners remains bounded. This follows from
$$
\sigma_{\max}(B(f_t))^2  \leq \mathrm{tr}(B(f_t)^\top B(f_t)) =
\sum_{j=1}^p \|b_j(f_t)\|_2^2
$$
so a sufficient condition is that each basis function is bounded on the training data, i.e., $\|b_j(f_t)\|_2\leq M/\sqrt{p}$ for all $j$.

%The asserted fact follows from a well-known result on perturbations of Moore-Penrose inverses; see Appendix~\ref{app:lipschitz} for details.

For the factorized functions common in multicalibration (defined in Section~\ref{sec:not}), we can explicitly bound the Lipschitz constant $L_B$.
%\mvc{Note that the matrix $H(X)$ and the norm $\|H(X)\|_{2,\infty}$ are referred to without being defined in the main text.} \lpc{True! How about the new text?}

\begin{lemma}[Smoothness of $B(f)$ for Factorized Learners]\label{lem:LB}
Let $\mathcal{B}$ consist of factorized functions $b(x,u) = h(x)g(u)$. If every $g \in \mathcal{G}$ is $L_G$-Lipschitz, then $A(f)$ is Lipschitz. Specifically, the Lipschitz constant $L_B$ satisfies:
\begin{equation*}
L_B \leq (\max_i \|h(x_i)\|_2) \sqrt{k} L_G.
\end{equation*}
\end{lemma}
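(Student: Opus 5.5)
The plan is to bound $\|B(u)-B(v)\|_2$ directly from the factorised representation in Eq.~(\ref{equ:f2}) and then pass to $A$ via Lemma~\ref{lem:LA}. Fix $u,v\in\mathbb{R}^n$. The $i$-th row of $B(u)-B(v)$ is
$$\mathbf{b}(x_i,u_i)-\mathbf{b}(x_i,v_i) = h(x_i)\otimes\bigl(g(u_i)-g(v_i)\bigr).$$
The norm of a Kronecker product of vectors factorises, so this row has Euclidean norm $\|h(x_i)\|_2\,\|g(u_i)-g(v_i)\|_2$.

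Since each coordinate $g_\ell$ is $L_G$-Lipschitz, $\|g(u_i)-g(v_i)\|_2^2=\sum_{\ell=1}^k (g_\ell(u_i)-g_\ell(v_i))^2\le k L_G^2 (u_i-v_i)^2$. This gives the row bound
$$\|\text{row}_i\|_2 \le \bigl(\max_i\|h(x_i)\|_2\bigr)\sqrt{k}\,L_G\,|u_i-v_i|.$$

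Next I control the spectral norm by the Frobenius norm, $\|B(u)-B(v)\|_2\le\|B(u)-B(v)\|_F$, and sum the squared row bounds over $i$:
$$\|B(u)-B(v)\|_F^2\le \bigl(\max_i\|h(x_i)\|_2\bigr)^2 k L_G^2\sum_i (u_i-v_i)^2 .$$
Taking square roots yields $\|B(u)-B(v)\|_2\le(\max_i\|h(x_i)\|_2)\sqrt{k}L_G\|u-v\|_2$, which is exactly the claimed bound on $L_B$.

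Finally, the assertion that $A(f)$ is Lipschitz follows by substituting this $L_B$ into Lemma~\ref{lem:LA}. This step needs the spectral conditions of that lemma, namely smallest singular value at least $\delta$ and $\|B(f)\|_2\le M$ on $\mathcal{F}$. I would read the statement as implicitly carrying these hypotheses, since without a lower bound on the singular values the projector can jump when the rank changes.

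The main obstacle is not the computation, which is elementary, but this interpretive point. I would state explicitly that the Lipschitz property of $A$ is inherited through Lemma~\ref{lem:LA} under its hypotheses. The only other care needed is using the Frobenius norm row-wise, rather than attempting a Kronecker-structured spectral bound, to obtain the clean factor $\sqrt{k}$.
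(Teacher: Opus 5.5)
Your argument is correct and gives exactly the paper's constant, though the computation is organised differently.

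\textbf{The paper's route.} It rewrites $B(f)$, up to a column permutation, as $R(f)=D_h(X)\,(G(f)\otimes I_m)$. Here $D_h(X)$ is the block-diagonal matrix with blocks $h(x_i)^\top$, and $G(f)=(g(f_1),\ldots,g(f_n))^\top$. This gives
\[
\|B(f)-B(f')\|_2=\|R(f)-R(f')\|_2\le \|D_h(X)\|_2\,\|G(f)-G(f')\|_2,
\qquad \|D_h(X)\|_2=\max_i\|h(x_i)\|_2 .
\]
Only then does the paper pass to the Frobenius norm of the $n\times k$ matrix $G(f)-G(f')$ and apply the Lipschitz property entrywise.

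\textbf{Your route.} You use $\|a\otimes b\|_2=\|a\|_2\|b\|_2$ row by row and take the Frobenius norm of the whole $n\times p$ difference at once.

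\textbf{Comparison.} Your version is shorter and needs no structured factorisation. The paper's version yields the sharper intermediate inequality $\|B(f)-B(f')\|_2\le \max_i\|h(x_i)\|_2\,\|G(f)-G(f')\|_2$, with the spectral rather than Frobenius norm on the $G$-part. The same factorisation also gives bounds such as $\|B(f)\|_2\le \max_i\|h(x_i)\|_2\,\|G(f)\|_2$, which help verify the constant $M$ in Lemma~\ref{lem:LA}. Since the paper relaxes to the Frobenius norm anyway, the final constants coincide.

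\textbf{Your interpretive remark.} You are right that the paper's proof only establishes the bound on $L_B$. The claim that $A(f)$ is Lipschitz must be read through Lemma~\ref{lem:LA}, including its singular-value hypotheses. Without them the claim fails. For example, take $k=m=1$, $h\equiv 1$ and $g(u)=u$. Then $A(f)=ff^\top/\|f\|_2^2$ for $f\neq 0$ and $A(0)=0$, which is discontinuous at $f=0$.
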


%\begin{proof}[Proof sketch]
%For factorized functions, the evaluation matrix $B(f)$ admits a decomposition that completely splits the fixed features vectors $h(x_i)$ from the prediction-dependent terms $g(u)$. Because the feature vectors remain constant throughout the process, the sensitivity of $B(f)$ to changes in $f$ is determined solely by the smoothness of $g$, scaled by the magnitude of the features. Consequently, the Lipschitz constant $L_B$ is bounded by the product of the maximum feature norm $\max_i \|h(x_i)\|_2$ and the internal Lipschitz constant $L_G$. See Appendix~\ref{app:lipschitzB} for the full derivation.
%\end{proof}

This lemma links the abstract stability of the algorithm to the concrete smoothness of the weak learners ($g(u)$) and the diversity of the features ($H(X)$). This aligns with the intuition that ``sharp'' variations in weak learner behavior (e.g., unstable decision tree splits) might harm fast convergence.

\subsubsection{Relaxed and Adaptive Rescaling Strategies}\label{sec:extensions}

In practice, the simple additive updates ($w_t=1$) can lead to overfitting. Practitioners often employ some rescaling of $w_t$ to regularize the predictor. Here, we show that convergence guarantees are robust to such modifications.

\paragraph{Relaxed Rescaling.} We first consider the case where the weights $w_t$ approach $1$ asymptotically.

%\lpc{@Milan: Here we only mention the rate. Is the convergence naive to prove? Otherwise, it would be good if we split this theorem to keep the parallelism between Relaxed and Adaptive: 1) Thm on convergence, 2) Thm on rate.} \mvc{I'll write smth about this.}

\begin{theorem}[Convergence with Relaxed Weights]\label{thm:relaxed}
Assume that rescaling weights $\{w_t\}$ satisfy $\sum (1-w_t) < \infty$. Then, (1) the training loss converges to a limit point, (2) the asymptotic multicalibration and (3) the $O(1/\sqrt{T})$ convergence rate of the gap $\|f_{t+1} - f_t\|_2$ established in Theorem~\ref{thm:convergence} still hold. Specifically, the following holds:
\begin{equation*}
\min_{0\leq t\leq T-1}\|f_{t+1}-f_t\|_2 \leq \frac{\sqrt{\eta} \|y-f_0\|_2 + \gamma}{\sqrt{T}}
\end{equation*}
where $\gamma$ is a constant depending on $\eta, y, f_0$ and $\sum (1-w_t)$.
\end{theorem}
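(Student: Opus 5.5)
We work in the setting of this subsection: squared loss, exact oracle, so $f_{t+1} = w_t(f_t + \eta A_t r_t)$ with $A_t := A(f_t)$, $r_t := y - f_t$ and $g_t := f_t + \eta A_t r_t$ the unrescaled candidate. The plan is to treat $w_t \neq 1$ as a perturbation of the unit-weight dynamics. We use $V(f)=\tfrac12\|y-f\|_2^2$ as a Lyapunov function and show that the errors from rescaling are summable.

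\textbf{Step 1: one-step decrease for the candidate.} Since $A_t$ is an orthogonal projector and $\eta \le 1$,
\[
\|y-g_t\|_2^2=\|r_t\|_2^2-(2\eta-\eta^2)\|A_t r_t\|_2^2 \le \|r_t\|_2^2-\tfrac{1}{\eta}\|g_t-f_t\|_2^2,
\]
using $\|g_t-f_t\|_2=\eta\|A_tr_t\|_2$ and $2\eta-\eta^2\ge\eta$. In particular $\|y-g_t\|_2\le\|r_t\|_2$.

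\textbf{Step 2: effect of rescaling.} Write $f_{t+1}-g_t=-(1-w_t)g_t$. By Step 1 and the triangle inequality,
\[
\|g_t\|_2\le \|y\|_2+\|y-g_t\|_2\le \|y\|_2+\|r_t\|_2 .
\]
This gives $\|r_{t+1}\|_2\le \|r_t\|_2+|1-w_t|(\|y\|_2+\|r_t\|_2)$. A discrete Gr\"onwall argument with $\prod(1+|1-w_t|)\le \exp(\sum|1-w_t|)$ then yields a uniform bound $\|r_t\|_2\le R$, where $R$ depends only on $y$, $f_0$ and $\sum(1-w_t)$. We assume $w_t\le1$, or else read the hypothesis as $\sum|1-w_t|<\infty$. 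Hence $\|g_t\|_2\le G:=\|y\|_2+R$ for all $t$. This uniform boundedness is the main obstacle; everything else is bookkeeping.

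\textbf{Step 3: telescoping.} From
\[
\|r_{t+1}\|_2^2\le(\|y-g_t\|_2+|1-w_t|G)^2\le \|y-g_t\|_2^2+2RG|1-w_t|+G^2|1-w_t|^2
\]
and Step 1, we get
\[
\tfrac1\eta\|g_t-f_t\|_2^2\le \|r_t\|_2^2-\|r_{t+1}\|_2^2+c_t,\qquad \sum_t c_t=:C<\infty.
\]
This also proves claim (1): $\|r_t\|_2^2+\sum_{s\ge t}c_s$ is nonincreasing and bounded below, so the training loss converges.

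For the actual gap we use
\[
\|f_{t+1}-f_t\|_2\le\|g_t-f_t\|_2+|1-w_t|G,
\]
which gives
\[
\|f_{t+1}-f_t\|_2^2\le 2\|g_t-f_t\|_2^2+2G^2|1-w_t|^2 .
\]
Summing over $t$, the minimum squared gap is at most $\bigl(2\eta(\|r_0\|_2^2+C)+2G^2\sum(1-w_t)^2\bigr)/T$. Taking square roots and using $\sqrt{a+b}\le\sqrt a+\sqrt b$ gives the stated form $(\sqrt{\eta}\|y-f_0\|_2+\gamma)/\sqrt T$. Getting exactly the leading coefficient $\sqrt\eta$ (rather than $\sqrt{2\eta}$) needs a slightly sharper split, e.g.\ $(a+b)^2\le(1+\delta)a^2+(1+1/\delta)b^2$ with the excess absorbed into $\gamma$, or simply tracking $\|f_{t+1}-f_t\|$ directly via the identity $\|r_t\|^2-\|r_{t+1}\|^2$. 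I expect any constant loss here to be absorbable into $\gamma$, since the statement leaves $\gamma$ unspecified.

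\textbf{Step 4: multicalibration.} Summability of $\|f_{t+1}-f_t\|_2^2$ gives $\|f_{t+1}-f_t\|_2\to0$. Because $f_t$ is bounded by Step 2, $\|B(f_t)\|_2$ stays bounded under the paper's boundedness regularity on $b_j$. By the exact-oracle case of \eqref{equ:mcbound}, applied with $g_t-f_t$ in place of $f_{t+1}-f_t$ (which also tends to zero), we get $\widehat{\mathcal E}(f_t)\to0$. This is claim (2).
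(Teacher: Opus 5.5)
Your proposal is correct and follows essentially the paper's route: the Lyapunov function $\tfrac12\|y-f\|_2^2$ with per-step perturbations of order $1-w_t$, made summable by a uniform residual bound, and then a split of $f_{t+1}-f_t$ into a boosting part and a rescaling part. Comparing against the unrescaled candidate $g_t$ instead of using the paper's exact residual identity and cross-term Lyapunov function $V_t$ is a harmless streamlining that even gives $\widehat{\mathcal{E}}(f_t)=\frac{1}{\eta n}B(f_t)^\top(g_t-f_t)$ without dividing by $w_t$. The exact coefficient $\sqrt{\eta}$, with a $\gamma$ that vanishes for unit weights, follows from the $\ell^2$ Minkowski inequality $\bigl(\sum_t\|a_t+b_t\|_2^2\bigr)^{1/2}\le\bigl(\sum_t\|a_t\|_2^2\bigr)^{1/2}+\bigl(\sum_t\|b_t\|_2^2\bigr)^{1/2}$ applied to your split, which is exactly what the paper does.
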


%\begin{proof}[Proof sketch]
%We modify the Lyapunov analysis from Theorem~\ref{thm:convergence}. With weights $w_t \in [0,1]$, the decrease in the Lyapunov function is perturbed by an error $\xi_t$ proportional to $(1-w_t)$. The condition $\sum (1-w_t) < \infty$ ensures these perturbations are summable, meaning the total ``drift'' away from the standard trajectory is bounded. This allows us to bound the accumulated error $\gamma$ and preserve the $O(1/\sqrt{T})$ convergence rate. See Appendix~\ref{app:arbweight} for the full proof.
%\end{proof}

%See Appendix~\ref{app:arbweight} for the proof. The proof also relies on the Lyapunov function method as Theorem~\ref{thm:convergence} but is technically somewhat more involved to account for the perturbation caused by the rescaling weights.

The result of the theorem confirms that relaxing the weights for regularization does not break the convergence results. The key assumption for the result in Theorem~\ref{thm:relaxed} to hold is that the sequence $\{w_t\}$ has sufficiently fast asymptotic convergence to $1$, i.e., the condition $\sum (1-w_t) <\infty$ holds.

The non-increasing loss property of Theorem~\ref{thm:convergence} is not guaranteed to hold in the relaxed case. However, the following property can be shown to hold:
$
\|y-f_t\|_2 \leq \rho_t\|y-f_0\|_2 + \left(1-\rho_t \right)\|y\|_2
$
where $\rho_t = \prod_{s=0}^{t-1} w_s$. This recovers the non-increasing training loss property in the case of unit rescaling weights, in which case $\rho_t\equiv 1$. In general, we can have $\rho_t \rightarrow 1-
\epsilon$, for some $\epsilon \in (0,1]$. This is equivalent to $\sum_{s=0}^\infty(-\log(w_s)) = -\log(1-\epsilon)$. This requires that $w_s \rightarrow 1$ and that $w_s$ converges fast enough to $1$. In this case, the limit training loss is within additive error $O(\epsilon)$ of the initial loss.

In the convergence rate bound, $\gamma$ is increasing in $\sum (1-w_t)$. It achieves value $0$ when $\sum (1-w_t) = 0$, i.e., for unit-valued rescaling weights. In this case, the bound coincides to that of Theorem~\ref{thm:convergence}.

\paragraph{Adaptive Rescaling.} A more aggressive strategy is to optimize $w_t$ at each step. We define the adaptive weight $\omega(f_t)$ as the minimizer of $\|y - \omega \varphi(f_t)\|_2$, where $\varphi(f_t) = f_t + \eta A(f_t)(y-f_t)$ is the unscaled update direction. This yields the closed form
\begin{equation}\label{eq:omega}
\omega(f_t) = \frac{y^\top \varphi(f_t)}{\|\varphi(f_t)\|_2^2}.
\end{equation}
This transforms the system into $f_{t+1} = \omega(f_t)\varphi(f_t)$, effectively projecting $y$ onto the $1$D subspace of the current update.

%\lpc{@Milan: This theorem is the same as 4.1 (?)} \mvc{Not exactly, it is different in assumptions, while the assertions overlap. We may move the theorem to appendix if lacking space - wee need it somewhere to accompany the proofs in the appendix. In any case, it would be good to say something about basic convergence properties for the case of adaptive rescaling weights in the main text.} \lpc{Agreed. Can we frame Thm 4.6 as: Consider the dynamical system in Eq.~(\ref{equ:dtds}) with adaptive rescaling. Then the Monotonicity and Asymptotic multicalibration results still hold. In addition: ... Non-negative Rescaling (all this without bullet points).}\mvc{That's a good idea.}

The following result shows that monotonicity and asymptotic multicalibration are preserved when using adaptive rescaling weights instead of unit-valued rescaling weights.

\begin{theorem}[Convergence with Adaptive Rescaling]\label{thm:basic-adaptive} Assume that the rescaling weights are adaptive according to Eq.~(\ref{eq:omega}). Then, (1) the non-increasing property of the loss and (2) the asymptotic multicalibration established in Theorem~\ref{thm:convergence} still hold.
In addition, (3) the rescaling weights are such that $\omega(f_t)\geq 0$, i.e., the angle between $y$ and $\varphi(f_t)$ is acute or right (they point in the same direction).
\end{theorem}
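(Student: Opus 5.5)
The plan works in the squared-loss, exact-oracle setting of this subsection. There $f_{t+1}=\omega(f_t)\varphi(f_t)$ with $\varphi(f)=f+\eta A(f)(y-f)$ and $A(f)$ an orthogonal projector. Throughout I write $r_t=y-f_t$ and $P=A(f_t)$.

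\emph{Part (1), monotonicity.} Since $\omega(f_t)$ minimizes $\|y-\omega\varphi(f_t)\|_2$ over $\omega\in\mathbb{R}$, we have
\[
\|y-f_{t+1}\|_2\le \|y-\varphi(f_t)\|_2 .
\]
Next write $y-\varphi(f_t)=(I-\eta P)r_t$. Because $P$ is an orthogonal projector and $\eta\in(0,1]$, the matrix $I-\eta P$ has eigenvalues $1$ and $1-\eta\in[0,1)$. Expanding gives
\[
\|(I-\eta P)r_t\|_2^2=\|r_t\|_2^2-\eta(2-\eta)\|Pr_t\|_2^2 .
\]
This already shows $\|y-f_{t+1}\|_2^2\le \|y-f_t\|_2^2-\eta(2-\eta)\|Pr_t\|_2^2$, which is the non-increasing property of the loss.

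\emph{Part (2), asymptotic multicalibration.} Telescoping the inequality from Part (1) gives
\[
\eta(2-\eta)\sum_{t}\|A(f_t)(y-f_t)\|_2^2\le \|y-f_0\|_2^2 .
\]
Hence $\|A(f_t)(y-f_t)\|_2\to0$, at rate $\min_{t<T}\|A(f_t)(y-f_t)\|_2=O(1/\sqrt{T})$. For squared loss $\widehat{\mathcal E}(f_t)=\frac1n B(f_t)^\top(y-f_t)=\frac1n B(f_t)^\top A(f_t)(y-f_t)$, using $B^\top A=B^\top$. Therefore
\[
\|\widehat{\mathcal E}(f_t)\|_2\le \tfrac1n\|B(f_t)\|_2\,\|A(f_t)(y-f_t)\|_2\to0,
\]
provided $\|B(f_t)\|_2$ is bounded. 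This boundedness follows, under the standing regularity that $b_j$ is bounded on compacts, from $\|f_t\|_2\le\|y\|_2+\|y-f_0\|_2$, which is itself a consequence of Part (1).

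The subtle point is that, unlike Theorem~\ref{thm:convergence}, the quantity controlled here is the projected residual and not the raw gap $\|f_{t+1}-f_t\|_2$. The rescaling can move $f_{t+1}$ along $\varphi(f_t)$, so the raw gap is not directly controlled. I would therefore state asymptotic multicalibration via the projected residual, which is what Eq.~(\ref{equ:mcbound}) morally uses: for $w_t=1$, $f_{t+1}-f_t=\eta A(f_t)(y-f_t)$. This substitution is the main place requiring care.

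\emph{Part (3), nonnegativity of $\omega(f_t)$.} Here $y^\top\varphi(f_t)\ge0$ is needed. From Pythagoras,
\[
\|y-\varphi\|_2^2=\|y\|_2^2-2y^\top\varphi+\|\varphi\|_2^2 ,
\]
so $y^\top\varphi<0$ would force $\|y-\varphi\|_2>\|y\|_2$. I would therefore aim to show $\|y-\varphi(f_t)\|_2\le\|y\|_2$.

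For $t\ge1$ this follows by induction. Part (1) gives $\|y-\varphi(f_t)\|_2\le\|y-f_t\|_2$. Moreover $f_t=\omega(f_{t-1})\varphi(f_{t-1})$ is the orthogonal projection of $y$ onto a line, so $\|y-f_t\|_2\le\|y\|_2$.

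For $t=0$ one needs $\|y-f_0\|_2\le\|y\|_2$. If this is not automatic, I would either assume it, for instance $f_0=0$ or $f_0$ itself rescaled, or note that only $t\ge1$ matters. I expect this initialization edge case to be the main obstacle, and I would handle it by stating the mild assumption explicitly.
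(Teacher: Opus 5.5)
Your proposal is correct. Part (1) is the paper's argument: the line search does no worse than $\omega=1$, and $\|(I-\eta A(f_t))r_t\|_2\le\|r_t\|_2$. Parts (2) and (3), however, take genuinely different routes.

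For (2), the paper argues by compactness. It extracts a subsequence with $\varphi(f_{t_i})\to\varphi^*$ and $\omega(f_{t_i})\to\omega^*$. Continuity of $A(\cdot)$ then gives the fixed-point relation $\bigl((1-\omega^*)I+\omega^*\eta A^*\bigr)\varphi^*=\eta A^*y$, and a case analysis forces $\omega^*=1$ and $A(f^*)(y-f^*)=0$ at every limit point. You instead keep the explicit decrease $\eta(2-\eta)\|A(f_t)r_t\|_2^2$, telescope it, and use $B(f_t)^\top A(f_t)=B(f_t)^\top$ to bound $\widehat{\mathcal{E}}(f_t)$ directly. Your route buys several things:
\begin{itemize}
\item it needs no continuity of $A$, which fails for hard-split trees;
\item it gives convergence along the whole sequence, with an explicit $O(1/\sqrt{T})$ rate for $\min_{t<T}\|\widehat{\mathcal{E}}(f_t)\|_2$, under the same boundedness of $\|B(f_t)\|_2$ the paper uses implicitly;
\item it sidesteps the paper's step of identifying $\lim_i\varphi(f_{t_i+1})$ with $\varphi^*=\lim_i\varphi(f_{t_i})$.
\end{itemize}
What the paper's route offers in return is the extra conclusion $\omega(f_t)\to1$.

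For (3), the paper writes $y^\top\varphi(f_t)=y^\top\bigl(\eta A(f_t)+(I-\eta A(f_t))M(f_{t-1})\bigr)y$, with $M(f_{t-1})$ the rank-one projector onto $\varphi(f_{t-1})$. It then bounds this quadratic form via Cauchy--Schwarz and AM--GM. Your chain $\|y-\varphi(f_t)\|_2\le\|y-f_t\|_2\le\|y\|_2$ is shorter; the last step also follows just from $\omega=0$ being feasible in the line search. It is also stronger: expanding the square gives $y^\top\varphi(f_t)\ge\tfrac12\|\varphi(f_t)\|_2^2$, i.e.\ $\omega(f_t)\ge\tfrac12$ for $t\ge1$.

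Finally, the paper also proves (3) only for $t\ge1$, and that restriction is necessary rather than an artifact. If $f_0=-y$ and $A(f_0)y=0$, then $\varphi(f_0)=-y$ and $\omega(f_0)=-1$. So stating $t\ge1$, or assuming $\|y-f_0\|_2\le\|y\|_2$, is exactly the right fix.
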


We can further establish a local convergence result showing a quadratic rate convergence provided that the initial loss is sufficiently small. This result is deferred to Appendix~\ref{app:adaweight}.

%\begin{proof}[Proof sketch]
%Monotonicity holds because the adaptive step $\omega(f_t)$ explicitly minimizes the residual norm along the update direction. Asymptotic multicalibration is shown via a compactness argument: the sequence is confined to a finite-dimensional subspace, and any limit point must satisfy the stationarity conditions of the optimization. Finally, the non-negative sign of the weights follows from analyzing the spectrum of the update operator, showing that the update vector $\varphi(f_t)$ always forms an acute (or right) angle with the target $y$. See Appendix~\ref{app:basic-adaptive} for the full proof.
%\end{proof}

%See Appendix~\ref{app:basic-adaptive} for the proof.

\begin{table}[t!]
\centering
\caption{Statistics of the datasets including the count of raw features, the total dimensionality ($d$) after pre-processing, the dataset size ($n$).}
\label{tab:datasets}
\begin{tabular}{lccc}
\hline
\textbf{Dataset} & \textbf{\# Raw Features} & \textbf{\# Features} & \textbf{\# Data Points} \\
\hline
California Housing & 8 & 8 & 20640 \\
Diabetes & 10 & 10 & 442 \\
Adult & 13 & 101 & 32561 \\
German Credit & 10 & 48 & 1000 \\
Communities and Crime & 122 & 122 & 1994 \\
\hline
\end{tabular}
\end{table}

\section{Numerical Results}
\label{sec:num}

In this section, we provide empirical validation for the theoretical claims presented in Section~\ref{sec:main}. Since our main contribution is the convergence analysis of the multicalibration gradient boosting dynamics, our experiments focus on verifying these rates on real-world data. Specifically, we aim to validate the convergence of the gap between successive predictions, and the convergence of the empirical multicalibration error under different rescaling strategies. 

\subsection{Experimental Setup}

\paragraph{Data and Hyperparameters.} We evaluate the algorithms on five standard regression datasets summarized in Table~\ref{tab:datasets}; see Appendix~\ref{sec:datainfo} for more details. For all experiments, we use a Random Forest regressor ($100$ trees, max depth $5$) as the initial predictor $f_0$. The weak learners used in the boosting steps are $100$ regression trees (max depth $3$), implemented via scikit-learn's gradient boosting (learning rate $0.1$).

\paragraph{MCE Computation.} The multicalibration error of a predictor $f$ is computed as a sample-mean estimator of $\|\widehat{\mathcal{E}}(f)\|_2$ evaluated over a random sample of $N_{\text{tree}}$ regression trees with random node splits of depth $h$. In our experiments, $N_{\text{tree}} = 100$ and $h = 3$. This random-tree class differs from the GBDT class used during training and is intentionally broader: evaluating MCE against this larger class stress-tests multicalibration beyond the training hypothesis space, providing a closer proxy to the population multicalibration error in Eq.~(\ref{eq:hatmce}) than evaluating on the training class itself.

\paragraph{Rescaling.} We consider three rescaling strategies: \textsc{Unit} sets $w_t = 1$, \textsc{Relaxed} uses the power-law decay $w_t = 1 - (t+2)^{-3}$, and \textsc{Adaptive} optimizes the weights via Eq.~(\ref{eq:omega}) in each round. We run the algorithms for $T=20$ rounds with shrinkage $\eta=0.5$, which satisfies Theorem~\ref{thm:convergence}'s step-size condition $\eta\le m/(2L) = 1/2$ for the squared loss.

\subsection{Experimental Results}

\begin{figure}[h!]
\begin{center}
\includegraphics[width=1.0\textwidth]{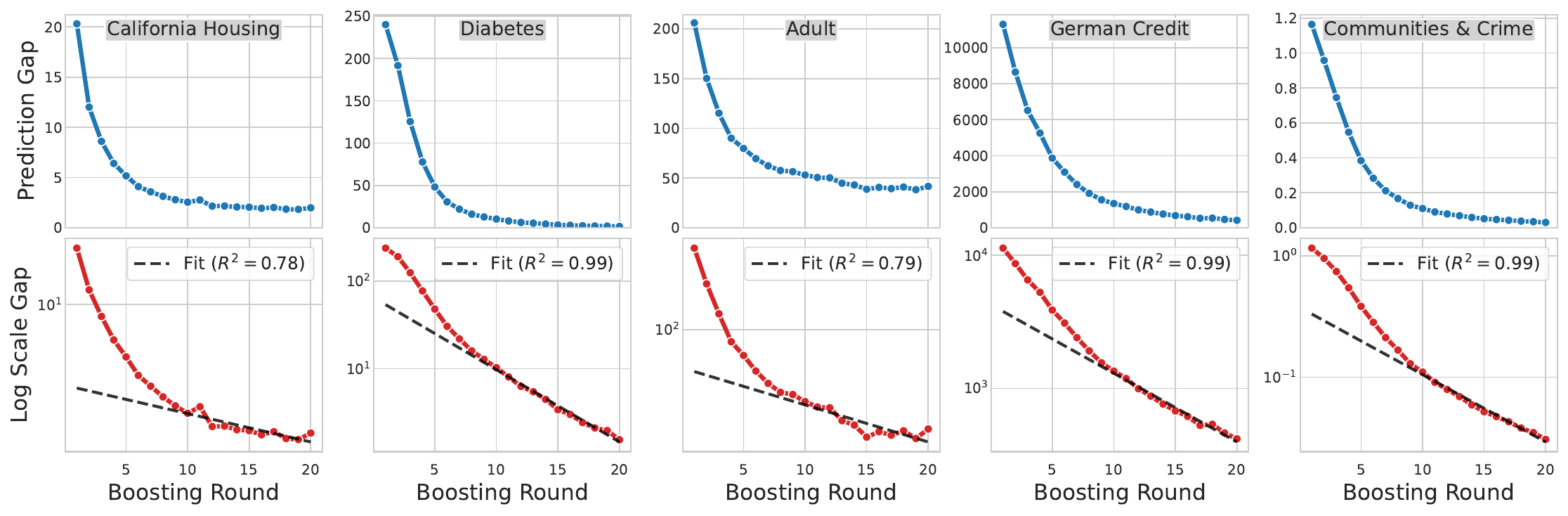} 
\end{center}
\caption{The prediction gap $\|f_{t+1}-f_t\|_2$ for all multicalibration boosting rounds $t\le 20$. The bottom row shows the corresponding plots on a log–lin scale, along with the best fitting line (black) and its $R^2$ score.}
\label{fig:fdelta}
\end{figure}

\begin{figure}[t!]
\begin{center}
\includegraphics[width=1.0\textwidth]{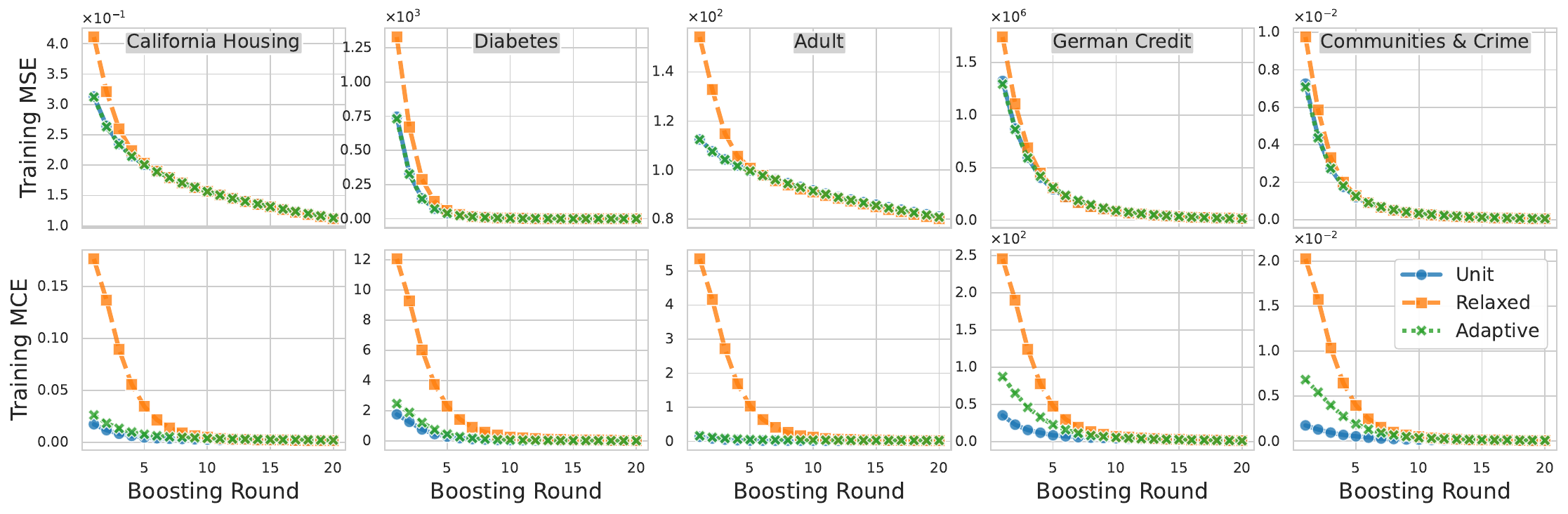}
\end{center}
\caption{Evolution of the training MSE (top row) and training MCE (bottom row) over 20 boosting rounds across five datasets.}
\label{fig:rq2_dynamics}
\end{figure}

\paragraph{Convergence with Unit Weights.} Theorem~\ref{thm:convergence} guarantees that the gap $\|f_{t+1}-f_t\|_2$ decays at a rate of at least $O(1/\sqrt{T})$. Furthermore, Theorem~\ref{thm:linear_rate} establishes that under smoothness conditions on the weak learners, this rate improves to linear (geometric decay). 

To validate these bounds, Figure~\ref{fig:fdelta} shows the evolution of the prediction gap from two perspectives. The top row displays the gap on a linear scale, showing a clear overall decay toward zero across all datasets (with small per-round oscillations of $\le 10\%$ in some cases, consistent with Theorem~\ref{thm:convergence} bounding the \emph{minimum} and \emph{average} gap rather than the per-round gap), which empirically confirms the global convergence guarantees of Theorem~\ref{thm:convergence}.

To verify the stronger claim of linear convergence, the bottom row shows the same values on a semi-log scale. In this view, a linear trajectory implies geometric decay (i.e., $\|f_{t+1}-f_t\|_2 \propto \kappa^t$ for some $\kappa < 1$). We fit a linear regression to the tail of the log-gap sequence (dashed black lines) and observe near-linear behavior with $R^2 \in [0.78, 0.99]$. The lower $R^2$ on Cal Housing and Adult reflects a sublinear plateau, consistent with hard-split GBDTs only approximately satisfying the Lipschitz condition of Theorem~\ref{thm:linear_rate}.

\paragraph{Convergence with Rescaling Strategies.} Theorems~\ref{thm:relaxed} and~\ref{thm:basic-adaptive} guarantee that, over the rounds, (1) the training Mean Squared Error (MSE) is non-increasing and (2) the process achieves asymptotic multicalibration convergence. Thus, we investigate the reduction of the MSE and the Multicalibration Error (MCE) under the three considered rescaling strategies. 

The top row of Figure~\ref{fig:rq2_dynamics} shows that every strategy yields a monotonic decrease in training MSE across all datasets. This confirms that the dynamic system remains stable regardless of the chosen rescaling. Interestingly, the \textbf{Relaxed} strategy (orange squares) shows a slightly slower initial convergence compared to \textbf{Unit} and \textbf{Adaptive} approaches; this is expected as the weights $w_t = 1 - (t+2)^{-3}$ are smaller in the early iterations, effectively harming the update step. On the other hand, the \textbf{Adaptive} strategy (green crosses) matches or exceeds the convergence speed of the unit baseline, showing that optimizing the rescaling weights $w_t$ does not introduce instabilities. %\ntc{It might be slightly confusing that we are referring to $w_t$ as step size here, and in some other places refer to $\eta$ as step size.}

While the MSE is a relevant loss, the main goal is to minimize the MCE. The bottom row of Figure~\ref{fig:rq2_dynamics} shows how the empirical MCE evolves over the rounds. Following our theoretical results, the plots show that the MCE has a consistent trend towards $0$, often decreasing by some orders of magnitude (e.g., German Credit and Diabetes). The empirical results show a smooth and stable decay of multicalibration error to zero.

\section{Conclusion}
\label{sec:conc}

In this work, we have bridged the gap between the empirical success of multicalibration gradient boosting and its theoretical foundations. While algorithms like \textsc{MCGrad} have been deployed at scale, their dynamic nature -- where predictions recursively define the feature space -- previously lacked rigorous analysis. By modeling this process as a deterministic dynamical system, we provided the first computational guarantees for this class of algorithms.

Specifically, we proved that under a vanishing per-round oracle error, the gap between successive predictions converges to zero at a rate of $O(1/\sqrt{T})$, ensuring convergence of empirical multicalibration error to zero (otherwise to a $O(\sqrt{\epsilon})$ noise floor). The latter result holds for classification and regression tasks and a class of loss functions satisfying certain natural conditions. For regression tasks with squared-error loss, we showed that under smoothness assumptions on the weak learners this rate improves to linear convergence. Furthermore, we extended these guarantees to practical variants used in production. We showed that relaxed rescaling preserves the fundamental convergence rate and that adaptive rescaling preserves convergence of empirical multicalibration to zero. Our experiments on real-world datasets validate our theoretical findings. 

%We showed that relaxed rescaling preserves the fundamental convergence rate while acting as an effective regularizer, and that adaptive rescaling can achieve quadratic convergence rates by optimizing the step size. Our experiments on real-world datasets validated these theoretical insights: the prediction gap decays geometrically, and the proposed rescaling strategies successfully mitigate overfitting.

\paragraph{Limitations and Future Work.} Key limitations include: (A2) breaks down on weak-learner-separable training sets (relevant for log-loss); 
%the bounded-range condition on $B(f_t)$ excludes unbounded weak learners; 
the linear rate requires Lipschitz $A(f)$, only approximately satisfied by hard-split GBDT ensembles; and our guarantees are training-set only. Natural directions include studying tightness of the $O(1/\sqrt{T})$ bound, identifying broader conditions for faster rates, and combining our computational guarantees with uniform-convergence arguments to derive generalization bounds.

%\section*{Impact Statement}

%This paper presents work whose goal is to advance the field of machine learning. There are many potential societal consequences of our work, none of which we feel must be specifically highlighted here.

%We have established the convergence of prediction values produced by multicalibration gradient boosting for regression and derived convergence rate results for different choices of rescaling weights across algorithmic rounds.

%Future work could investigate the convergence properties of multicalibration gradient boosting in the classification setting, as well as provide theoretical guarantees for generalisation performance.

\newpage

\bibliography{ref}
\bibliographystyle{plainnat}

\appendix

\section{Proofs and Additional Results}

\subsection{Proof of Theorem~\ref{thm:convergence}}
\label{app:conv}

Since $\theta= 0$ is always feasible,
\[
\min_\theta V(f_t + B(f_t)\theta) \leq
V(f_t).
\]
Combined with (A3):
\begin{equation}
V(f_t + h_t) \leq V(f_t) + \epsilon_t.
\label{eq:approx-min}
\end{equation}

To translate this bound on $V(f_t+h_t)$ (the full update with $\eta=1$) into a bound on $V(f_{t+1}) = V(f_t+\eta h_t)$ (the rescaled update), we use the convexity of $V$. By convexity, $V(f_t + \eta h_t) \leq (1-\eta) V(f_t) + \eta V(f_t + h_t) \leq V(f_t) + \eta \epsilon_t \leq V(f_t)+\epsilon_t$ (since $\eta \in (0,1]$). This establishes the asserted $\epsilon_t$-approximate non-increasing loss property.

Under (A1), we have
\begin{equation}
V(f_{t+1}) \leq V(f_t) + \nabla V(f_t)^\top \Delta_t
+ \frac{L}{2}\|\Delta_t\|^2
\label{equ:A1}
\end{equation}
where $\Delta_t = f_{t+1} - f_t$, while under (A2) we have
\begin{equation}
V(f_t + h_t) \geq V(f_t) + \nabla V(f_t)^\top h_t
+ \frac{m}{2}\|h_t\|^2
\label{equ:A2}
\end{equation}
where $h_t = B(f_t)\theta_t$.

Rearranging (\ref{equ:A2}), we have:
\begin{equation}
\nabla V(f_t)^\top h_t
\leq V(f_t + h_t) - V(f_t) - \frac{m}{2}\|h_t\|^2.
\label{eq:sc-rearranged}
\end{equation}
Substituting \eqref{eq:approx-min} into \eqref{eq:sc-rearranged}:
\begin{equation}
\nabla V(f_t)^\top h_t \leq \epsilon_t - \frac{m}{2}\|h_t\|^2.
\label{eq:grad-bound}
\end{equation}

From (\ref{equ:A1}) with $\Delta_t = \eta h_t$:
\begin{equation}
V(f_{t+1}) \leq V(f_t)
+ \eta \nabla V(f_t)^\top h_t
+ \frac{\eta^2 L}{2}\|h_t\|^2.
\label{eq:smoothness}
\end{equation}
Substituting \eqref{eq:grad-bound} into \eqref{eq:smoothness}:
\[
V(f_{t+1})
\leq V(f_t) + \eta\epsilon_t
- \frac{\eta m}{2}\|h_t\|^2
+ \frac{\eta^2 L}{2}\|h_t\|^2.
\]
Collecting the $\|h_t\|^2$ terms and writing
$\|h_t\|^2 = \|\Delta_t\|^2/\eta^2$:
\[
V(f_{t+1})
\leq V(f_t) + \eta\epsilon_t
- \frac{m - \eta L}{2\eta}\|\Delta_t\|^2.
\]
Rearranging, we have
\begin{equation}
\frac{m - \eta L}{2\eta}\|\Delta_t\|^2
\leq V(f_t) - V(f_{t+1}) + \eta\epsilon_t.
\label{eq:one-step}
\end{equation}

Applying the step size condition, $\eta \leq m/(2L)$, we have
$m - \eta L \geq m - \frac{m}{2L}\cdot L = m/2$, and therefore
\[
\frac{m}{4\eta}\|\Delta_t\|^2
\leq \frac{m - \eta L}{2\eta}\|\Delta_t\|^2.
\]
Substituting into \eqref{eq:one-step}:
\begin{equation}
\frac{m}{4\eta}\|\Delta_t\|^2
\leq V(f_t) - V(f_{t+1}) + \eta\epsilon_t.
\label{eq:key}
\end{equation}

Summing \eqref{eq:key} over $t = 0,\ldots,T-1$:
\[
\frac{m}{4\eta}\sum_{t=0}^{T-1}\|\Delta_t\|^2
\leq \sum_{t=0}^{T-1}\bigl(V(f_t) - V(f_{t+1})\bigr)
+ \eta\sum_{t=0}^{T-1}\epsilon_t.
\]
The first sum on the right telescopes to
$V(f_0) - V(f_T) \leq V(f_0)$,
since $V(f_T) \geq 0$. Therefore:
\begin{equation}
\frac{m}{4\eta}\sum_{t=0}^{T-1}\|\Delta_t\|^2
\leq V(f_0) + \eta\sum_{t=0}^{T-1}\epsilon_t.
\label{eq:key1}
\end{equation}
Dividing both sides by $T$ and multiplying by $4\eta/m$:
\[
\frac{1}{T}\sum_{t=0}^{T-1}\|\Delta_t\|^2
\leq \frac{4\eta}{m}\left(
\frac{V(f_0)}{T}
+ \frac{\eta}{T}\sum_{t=0}^{T-1}\epsilon_t
\right).
\]

Since $\min_t \|\Delta_t\|^2 \leq \frac{1}{T}\sum_t\|\Delta_t\|^2$, taking square roots, we conclude:
\[
\min_{0\leq t\leq T-1}\|\Delta_t\|
\leq \sqrt{\frac{4\eta}{m}\left(
\frac{V(f_0)}{T}
+ \frac{\eta}{T}\sum_{t=0}^{T-1}\epsilon_t
\right)}.
\]
The bound for the minimum prediction gap, asserted in the theorem, follows by the sub-additivity of the square-root function.

From (\ref{eq:key1}), under assumption $\sum_{t=0}^\infty \epsilon_t < \infty$, we have $\sum_{t=0}^{\infty}\|f_{t+1}-f_t\|^2 <\infty$. From this it follows that $\|f_{t+1}-f_t\|\rightarrow 0$.

$$
%From this we have that, for all $t\geq 0$,
%$$
%\|f_{t+1}-f_t\|_2^2 \leq 2\eta (V(f_t)-V(f_{t+1})).
$$
%Hence, we have
%\begin{eqnarray*}
%\sum_{t=0}^{T-1} \|f_{t+1}-f_t\|_2^2 & \leq & 2\eta(V(f_0)-V(f_T))\\
%& \leq & \eta\|y-f_0\|_2^2.
%\end{eqnarray*}
%From this, we conclude
%$$
%\min_{0\leq t\leq T-1} \|f_{t+1}-f_t\|_2 \leq \sqrt{\eta} %\|y-f_0\|_2\frac{1}{\sqrt{T}}.
%$$
%\ntc{In the conclusion of this proof we have $\sqrt{\eta}$ but in the Proposition it is just $\eta$.}

\subsection{A Bound on the Multicalibration Error for the Squared-Error Loss} \label{app:mcebound-squaredloss}

\begin{lemma}
For the squared-error loss $L(y,f) = \frac{1}{2}(y-f)^2$,
unit rescaling weights $w_t = 1$, step size $\eta \in (0,1]$, under assumption (A3), the empirical
multicalibration error satisfies:
\begin{equation*}
    \|\hat{\mathcal{E}}(f_t)\|_2 \leq \frac{\|B(f_t)\|_2}{n}
    \left(\frac{1}{\eta}\|f_{t+1} - f_t\|_2 + \sqrt{2\epsilon_t}\right).
\end{equation*}
\end{lemma}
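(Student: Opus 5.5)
Write $r_t = y - f_t$, $h_t = B(f_t)\theta_t$, and let $\theta^\ast$ be an exact minimizer, so that $h^\ast = B(f_t)\theta^\ast = A(f_t) r_t$ is the orthogonal projection of the residual onto the column space of $B(f_t)$. For squared loss $\phi$ is the identity, so $\widehat{\mathcal{E}}(f_t) = \frac{1}{n}B(f_t)^\top r_t$.

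The plan is to split $r_t$ into the inexact oracle output plus a correction, $r_t = h_t + (h^\ast - h_t) + (r_t - h^\ast)$. The last piece lies in the orthogonal complement of $\mathrm{col}(B(f_t))$, so $B(f_t)^\top (r_t - h^\ast) = 0$. Hence
\[
\widehat{\mathcal{E}}(f_t) = \frac{1}{n}B(f_t)^\top h_t + \frac{1}{n}B(f_t)^\top (h^\ast - h_t),
\]
and by the triangle inequality and submultiplicativity (using $\|B^\top\|_2 = \|B\|_2$)
\[
\|\widehat{\mathcal{E}}(f_t)\|_2 \leq \frac{\|B(f_t)\|_2}{n}\bigl(\|h_t\|_2 + \|h^\ast - h_t\|_2\bigr).
\]

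The first term is immediate from the update with $w_t=1$: $f_{t+1}-f_t = \eta h_t$, so $\|h_t\|_2 = \frac{1}{\eta}\|f_{t+1}-f_t\|_2$.

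The main step is bounding $\|h^\ast - h_t\|_2 \le \sqrt{2\epsilon_t}$ via (A3). Since $V(f_t + u) = \frac12\|r_t - u\|_2^2$ and $h_t - h^\ast \in \mathrm{col}(B(f_t))$ is orthogonal to $r_t - h^\ast$, the Pythagorean identity gives
\[
V(f_t + h_t) = \tfrac12\|r_t - h^\ast\|_2^2 + \tfrac12\|h^\ast - h_t\|_2^2 = \min_\theta V(f_t + B(f_t)\theta) + \tfrac12\|h^\ast - h_t\|_2^2 .
\]
(A3) bounds the excess by $\epsilon_t$, so $\|h^\ast - h_t\|_2^2 \le 2\epsilon_t$. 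Combining the two bounds gives the claim.

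The only delicate point is confirming that the oracle output $h_t$ lies in $\mathrm{col}(B(f_t))$, which it does by construction, so that the orthogonality holds. No smoothness or strong convexity beyond the exact quadratic structure is needed.
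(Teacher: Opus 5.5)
Your proof is correct, but you split $B(f_t)^\top(y-f_t)$ at a different point than the paper does, and that difference matters for $\eta<1$. The paper adds and subtracts $B(f_t)^\top(y-f_{t+1})$ and bounds $\|B(f_t)^\top(f_{t+1}-f_t)\|_2$ by submultiplicativity. It then controls $\|B(f_t)^\top(y-f_{t+1})\|_2$ through the gradient--suboptimality inequality $\|\nabla\mathcal{L}_t(\theta_t)\|_2^2\le 2\|B(f_t)\|_2^2(\mathcal{L}_t(\theta_t)-\mathcal{L}_t(\theta_t^*))$ for the $\|B(f_t)\|_2^2$-smooth quadratic $\mathcal{L}_t$. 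The identification $\nabla\mathcal{L}_t(\theta_t)=-B(f_t)^\top(y-f_{t+1})$ holds only at $\eta=1$. For $\eta<1$ the paper only remarks that the bound on $\nabla\mathcal{L}_t(\theta_t)$ is unaffected, but that gradient is then no longer the quantity being bounded. Indeed, the intermediate inequality $\|B(f_t)^\top(y-f_t)\|_2\le\|B(f_t)\|_2(\|f_{t+1}-f_t\|_2+\sqrt{2\epsilon_t})$ that route would produce can fail: take an exact oracle and $y-f_t$ equal to the top left singular vector of $B(f_t)$. You instead split around $h_t=(f_{t+1}-f_t)/\eta$ rather than $f_{t+1}$, which amounts to adding and subtracting $B(f_t)^\top(r_t-h_t)$. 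This is valid for every $\eta>0$ and produces the $1/\eta$ factor directly. Your Pythagorean step is the paper's inequality in disguise: here $\nabla\mathcal{L}_t(\theta_t)=-B(f_t)^\top(h^\ast-h_t)$ and the excess loss is $\tfrac12\|h^\ast-h_t\|_2^2$. Your version is exact and elementary, and it yields the more informative prediction-space bound $\|h^\ast-h_t\|_2\le\sqrt{2\epsilon_t}$. What the paper's gradient-norm formulation buys is portability: it applies to any smooth convex boosting objective, whereas your orthogonal decomposition is specific to squared error.
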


\begin{proof}
From the definition of empirical multicalibration error (Eq.~1) with squared loss
(so $\varphi = \mathrm{id}$),
\begin{equation*}
    n\hat{\mathcal{E}}(f_t) = B(f_t)^\top(y - f_t).
\end{equation*}
Since $f_{t+1} = f_t + \eta B(f_t)\theta_t$, adding and subtracting
$B(f_t)^\top(y - f_{t+1})$ and applying the triangle inequality gives
\begin{equation*}
    n\|\hat{\mathcal{E}}(f_t)\|_2
    \leq \|B(f_t)^\top(y - f_{t+1})\|_2
       + \|B(f_t)^\top(f_{t+1} - f_t)\|_2.
\end{equation*}
By sub-multiplicativity of the spectral norm, the second term satisfies
\begin{equation*}
    \|B(f_t)^\top(f_{t+1} - f_t)\|_2
    \leq \|B(f_t)\|_2\,\|f_{t+1} - f_t\|_2.
\end{equation*}
To bound the first term, define the quadratic objective at round $t$,
\begin{equation*}
    \mathcal{L}_t(\theta)
    := \frac{1}{2}\|y - f_t - B(f_t)\theta\|_2^2,
\end{equation*}
so that Assumption~(A3) reads
\begin{equation*}
    \mathcal{L}_t(\theta_t) \leq \mathcal{L}_t(\theta^*_t) + \epsilon_t,
    \qquad
    \theta^*_t := \arg\min_\theta\, \mathcal{L}_t(\theta).
\end{equation*}
Since $\mathcal{L}_t$ is a convex quadratic with Hessian $B(f_t)^\top B(f_t)$, it is
$\|B(f_t)\|_2^2$-smooth, and suboptimality in value controls the gradient norm via
\begin{equation*}
    \|\nabla \mathcal{L}_t(\theta_t)\|_2^2
    \leq 2\|B(f_t)\|_2^2
         \bigl(\mathcal{L}_t(\theta_t) - \mathcal{L}_t(\theta^*_t)\bigr)
    \leq 2\|B(f_t)\|_2^2\,\epsilon_t.
\end{equation*}
Computing the gradient explicitly and using $f_{t+1} = f_t + \eta B(f_t)\theta_t$,
\begin{equation*}
    \nabla \mathcal{L}_t(\theta_t)
    = -B(f_t)^\top(y - f_t - B(f_t)\theta_t)
    = -B(f_t)^\top(y - f_{t+1}),
\end{equation*}
where the last equality holds for $\eta = 1$; for general $\eta \in (0,1]$,
Assumption~(A3) constrains $\theta_t$ independently of $\eta$, so the same gradient
bound applies. Substituting into the smoothness inequality yields
\begin{equation*}
    \|B(f_t)^\top(y - f_{t+1})\|_2
    \leq \|B(f_t)\|_2\sqrt{2\epsilon_t}.
\end{equation*}
Combining the two bounds and dividing by $n$ gives
\begin{equation*}
    \|\hat{\mathcal{E}}(f_t)\|_2
    \leq \frac{\|B(f_t)\|_2}{n}
         \left(\frac{1}{\eta}\|f_{t+1} - f_t\|_2 + \sqrt{2\epsilon_t}\right),
\end{equation*}
where we have written $\|f_{t+1} - f_t\|_2 = \eta\|B(f_t)\theta_t\|_2$ to make the
$1/\eta$ factor explicit.
\end{proof}

\subsection{Another Bound on the Multicalibration Error} \label{app:mcebound}

In this section, we provide a bound on the empirical multicalibration error in Proposition~\ref{lem:mc-error-bound}, which is in terms of the squared gap between successive predictions and the additive error of the boosting oracle. This bound holds for factorized weak learners such that $h:\mathbb{R}^d \rightarrow \{0,1\}$. The bound is a generalization of that in \cite{mcgrad}, which is for log-loss, to smooth convex loss functions satisfying a consistent gradient condition. 

For any given function $\psi:\mathbb{R}^d\times \mathbb{R}\rightarrow \mathbb{R}$, let
$$
\mathbb{E}_n[\psi(X,Y)]:=\frac{1}{n}\sum_{i=1}^n \psi(x_i,y_i).
$$

Let
\[
    \bar{\mathcal{L}}_t(\theta)
    = \mathbb{E}_{n}\!\left[
        \mathcal{L}\!\left(Y,
            f_t(X) + \sum_{j=1}^p \theta_j b_j(X, f_t(X))
        \right)
    \right].
\]
The gradient of the loss function with respect to $\theta_j$ is
\[
    \nabla_j \bar{\mathcal{L}}_t(\theta)
    = \mathbb{E}_n\!\left[
        b_j(X, f_t(X))\,\frac{d}{df}\mathcal{L}(Y, f_{t+1}(X))
    \right],
\]
where $f_{t+1}(x) = f_t(x) + \sum_{j=1}^p \theta_j b_j(x, f_t(x))$.

\paragraph{Assumptions on the loss.}
We assume $\mathcal{L}(y,f)$ is:
\begin{enumerate}
    \item[(A1)] \emph{Convex and $L$-smooth} in its second argument:
        $|\mathcal{L}'(y,a) - \mathcal{L}'(y,b)| \le L|a-b|$ for all $y$.
    \item[(A2)] \emph{Consistent gradient}: the derivative with respect to $f$ takes the form
        \[
            \frac{d}{df}\mathcal{L}(y,f)
            = -\kappa(y,f)\bigl(y - \phi(f)\bigr),
        \]
        for some non-negative weight function $\kappa(y,f) > 0$ satisfying $L < 1$.
\end{enumerate}
These assumptions are satisfied by the log loss for binary classification ($\kappa \equiv 1$,
$L = 1/4$) and, with minor modifications, by the squared loss for regression ($\kappa \equiv 1$,
$\phi(u) = u$, with $L = 1$ requiring the alternative argument noted after
Lemma~\ref{lem:mc-bound} below).

\paragraph{Generalized multicalibration deviation.}
For groups $h \in \mathcal{H}$ and interval indicators $g \in \mathcal{G}$, define the
$\kappa$-weighted MC-deviation
\[
    \Delta_{h,g}^{(\kappa)}(f)
    := \bigl|\mathbb{E}_n\bigl[h(X)\,g(f(X))\,\kappa(Y,f(X))\,(Y - \phi(f(X)))\bigr]\bigr|,
\]
and the weighted scale parameter
\[
    \tau_h^{(\kappa)}(f)^2
    := \mathbb{E}_n\!\left[h(X)\,\kappa(Y,f(X))^2\,\mathrm{Var}(Y \mid X)\right],
\]
where $\mathrm{Var}(Y \mid X) = \phi(f(x))(1-\phi(f(x)))$ for binary classification and
$\sigma^2(x)$ for regression. We say $f$ is \emph{$\alpha$-multicalibrated} with respect to
$\mathcal{H}$ if $\Delta_{h,g}^{(\kappa)}(f) \le \alpha\, \tau_h^{(\kappa)}(f)$ for all
$h \in \mathcal{H}$, $g \in \mathcal{G}$.

\begin{lemma}\label{lem:mc-bound}
Assume \emph{(A1)}--\emph{(A2)} with smoothness constant $L < 1$. If $f_T$ is such that
\[
    \Delta_{h,g}^{(\kappa)}(f_T) \;\ge\; \alpha\,\tau_h^{(\kappa)}(f_T),
\]
for some $h \in \mathcal{H}$ and $g \in \mathcal{G}$, then there exists
$\tilde{\theta} \in \mathbb{R}^p$ such that for
$\tilde{h}(x) = \sum_{j=1}^p \tilde{\theta}_j b_j(x, f_T(x))$:
\[
    \alpha
    \;\le\;
    \frac{2}{\sqrt{1-L}}\,C_T\,
    \sqrt{\mathbb{E}_n[\mathcal{L}(Y,f_T(X))]
          - \mathbb{E}_n[\mathcal{L}(Y,f_T(X) + \tilde{h}(X))]},
\]
where
\[
    C_T^2
    \;:=\;
    \max_{h \in \mathcal{H},\, g \in \mathcal{G}}
    \frac{\mathbb{E}_n[g(\phi(f_T(X))) \mid h(X) = 1]}
         {\mathbb{E}_n[\kappa(Y,f_T(X))^2\,\mathrm{Var}(Y \mid X) \mid h(X) = 1]}.
\]
\end{lemma}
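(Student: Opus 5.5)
The plan is to follow the standard "single-coordinate descent" argument of \cite{mcgrad}: exhibit an explicit one-dimensional update along the violating pair $(h,g)$, show that it decreases the empirical loss by an amount quadratic in the deviation, and then invert that inequality to bound $\alpha$. Concretely, fix $h\in\mathcal{H}$ and $g\in\mathcal{G}$ with $\Delta^{(\kappa)}_{h,g}(f_T)\ge\alpha\,\tau^{(\kappa)}_h(f_T)$. Since $b(x,u)=h(x)g(u)$ belongs to $\mathcal{B}$, we may take $\tilde\theta=\lambda e_j$ for the corresponding index $j$, so that $\tilde h(x)=\lambda\,h(x)g(f_T(x))$ with a scalar $\lambda$ still to be chosen.

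\emph{First step: one-dimensional descent inequality.} Set $\ell(\lambda):=\mathbb{E}_n[\mathcal{L}(Y,f_T(X)+\lambda h(X)g(f_T(X)))]$. By the consistent-gradient assumption (A2),
\[
\ell'(0)=-\mathbb{E}_n\bigl[h(X)g(f_T(X))\kappa(Y,f_T(X))(Y-\phi(f_T(X)))\bigr],
\]
so $|\ell'(0)|=\Delta^{(\kappa)}_{h,g}(f_T)$. By $L$-smoothness (A1), and because $h\in\{0,1\}$ and $g$ is an interval indicator (so $h^2g^2=hg$), we have
\[
\ell(\lambda)\le \ell(0)+\lambda\ell'(0)+\tfrac{L}{2}\lambda^2 S,\qquad S:=\mathbb{E}_n[h(X)g(f_T(X))].
\]
Choosing $\lambda=-\ell'(0)/S$ (or a slightly damped value that keeps a factor involving $1-L$, which is where the hypothesis $L<1$ enters) gives a loss decrease
\[
\ell(0)-\ell(\lambda)\ \ge\ c(L)\,\frac{(\Delta^{(\kappa)}_{h,g})^2}{S},
\]
with $c(L)$ comparable to $(1-L)/4$, which produces the factor $2/\sqrt{1-L}$ in the statement.

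\emph{Second step: normalization.} Substituting $\Delta\ge\alpha\tau_h$ gives
\[
\alpha^2\le \frac{S}{c(L)\,\tau_h^2}\bigl(\ell(0)-\ell(\lambda)\bigr).
\]
We then rewrite $S/\tau_h^2$ in conditional form by dividing the numerator and the denominator by $\mathbb{E}_n[h(X)]$:
\[
\frac{S}{\tau_h^2}=\frac{\mathbb{E}_n[g\mid h=1]}{\mathbb{E}_n[\kappa^2\mathrm{Var}(Y\mid X)\mid h=1]}\le C_T^2.
\]
Taking square roots then yields the claim.

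\emph{Main obstacle.} The delicate point is matching constants. In particular, we must reconcile $g(f_T)$ in $S$ with the $g(\phi(f_T))$ appearing in $C_T$, which amounts to the interval indicator being applied on the probability scale. We must also justify the $1-L$ factor rather than the cleaner $1/(2L)$ obtained from the optimal step. We expect to resolve the latter by taking the step $\lambda=-\ell'(0)/S$ without the $1/L$ rescaling: the decrease is then at least $\bigl(1-\tfrac{L}{2}\bigr)\Delta^2/S\ge\tfrac{1-L}{4}\Delta^2/S$. This yields precisely $\alpha\le\tfrac{2}{\sqrt{1-L}}\,C_T\sqrt{\ell(0)-\ell(\lambda)}$.
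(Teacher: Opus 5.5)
Your proposal is correct and follows essentially the paper's route. Both arguments take a one-coordinate $\tilde\theta$ on the violating pair with step $\Delta/S$, lower-bound the loss decrease by a quadratic in $\Delta$, then substitute $\Delta\ge\alpha\tau_h$ and divide by $\mathbb{E}_n[h]$ to reach $C_T$. The differences are minor:
\begin{itemize}
\item The paper bounds the decrease via convexity plus gradient-Lipschitzness, which gives the factor $1-L$ directly, and works with the maximizing pair $(h^*,g^*)$ rather than the given violating pair.
\item The paper normalizes by $\mathbb{E}_n[(h^*g^*)^2\kappa^2]$, which matches the quadratic term only when $\kappa\equiv 1$. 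Your descent-lemma step uses the unweighted normalizer $S$, so it holds for general $\kappa$ and gives the sharper factor $1-L/2\ge(1-L)/4$.
\item The $g(f_T)$ versus $g(\phi(f_T))$ mismatch you flag is also present in the paper, whose proof simply writes $g^*(\phi(f_T(X)))$ throughout.
\end{itemize}
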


\begin{proof}
For any $f$, $h$, and convex $L$-smooth $\mathcal{L}$, applying convexity and then smoothness:
\[
    \mathcal{L}(y,f(x)) - \mathcal{L}(y,f(x)+h(x))
    \ge \frac{d}{df}\mathcal{L}(y,f(x)+h(x))\,(-h(x))
    \ge \frac{d}{df}\mathcal{L}(y,f(x))\,(-h(x)) - L h(x)^2.
\]
Using assumption~(A2), $\frac{d}{df}\mathcal{L}(y,f) = -\kappa(y,f)(y-\phi(f))$, so:
\[
    \mathbb{E}_n[\mathcal{L}(Y,f_T(X)) - \mathcal{L}(Y,f_T(X)+\tilde{h}(X))]
    \;\ge\;
    \mathbb{E}_n[\kappa(Y,f_T(X))(Y-\phi(f_T(X)))\tilde{h}(X)] - L\,\mathbb{E}_n[\tilde{h}(X)^2].
\]
Let $(h^*, g^*)$ be the maximizer of $\Delta_{h,g}^{(\kappa)}/\tau_h^{(\kappa)}$ over
$\mathcal{H} \times \mathcal{G}$, and define
\[
    \tilde{\theta}_{h,g}
    = \begin{cases}
        \dfrac{\mathbb{E}_n[h^*(X)g^*(\phi(f_T(X)))\,\kappa(Y,f_T(X))(Y-\phi(f_T(X)))]}
              {\mathbb{E}_n[(h^*(X)g^*(\phi(f_T(X))))^2\,\kappa(Y,f_T(X))^2]}
        & \text{if } (h,g) = (h^*,g^*), \\[6pt]
        0 & \text{otherwise.}
    \end{cases}
\]
For this choice of $\tilde{\theta}$, both inner product terms equal
$\frac{\mathbb{E}[h^*g^*\kappa(Y-\phi(f_T(X)))]^2}{\mathbb{E}[(h^*g^*)^2\kappa^2]}$, so:
\[
    \mathbb{E}_n[\mathcal{L}(Y,f_T(X))-\mathcal{L}_n(Y,f_T(X)+\tilde{h}(X))]
    \;\ge\;
    (1-L)\,\frac{\mathbb{E}[h^*g^*\kappa(Y-\phi(f_T(X)))]^2}{\mathbb{E}[(h^*g^*)^2\kappa^2]}.
\]
Applying Cauchy--Schwarz inequality,
$\mathbb{E}_n[h^*g^*\kappa(Y-\phi(f_T(X)))]^2 \ge \mathbb{E}_n[(h^*g^*)^2]
\cdot\frac{\mathbb{E}_n[h^*g^*\kappa(Y-\phi(f_T(X)))]^4}
          {\mathbb{E}_n[(h^*g^*)^2\kappa^2\,\mathrm{Var}(Y|X)]
           \cdot\mathbb{E}_n[(h^*g^*)^2]}$,
which simplifies to:
\[
    \mathbb{E}_n[\mathcal{L}(Y,f_T(X))-\mathcal{L}(Y,f_T(X)+\tilde{h}(X))]
    \;\ge\;
    (1-L)\,
    \frac{\mathbb{E}_n[\kappa^2\,\mathrm{Var}(Y|X)\mid h^*(X)=1]}
         {\mathbb{E}_n[g^*(\phi(f_T(X)))\mid h^*(X)=1]}
    \,\alpha^2.
\]
Rearranging using the definition of $C_T$ gives the stated bound.
\end{proof}

\paragraph{Note on squared loss ($L = 1$).}
Assumption~(A2) requires $L < 1$, which the squared loss $\mathcal{L}(y,f) = \frac{1}{2}(y-f)^2$
violates since it has $L = 1$. The proof of Lemma~\ref{lem:mc-bound} therefore does not apply
directly, as the factor $(1-L)$ vanishes. However, for the squared loss the Taylor expansion in
$h$ is exact rather than approximate: expanding directly,
\[
    \mathcal{L}(y, f + h)
    = \tfrac{1}{2}(y - f - h)^2
    = \mathcal{L}(y,f) - (y-f)h + \tfrac{1}{2}h^2.
\]
There is no remainder term, so in place of the inequality used in the proof we have the exact
identity
\[
    \mathcal{L}(y, f_T(x)) - \mathcal{L}(y, f_T(x) + \tilde{h}(x))
    = (y - f_T(x))\tilde{h}(x) - \tfrac{1}{2}\tilde{h}(x)^2.
\]
Taking expectations and substituting the same choice of $\tilde{\theta}$ as in the proof (now
with $\kappa \equiv 1$ and $\phi = \mathrm{id}$), the two terms on the right combine to give
\[
    \mathbb{E}_n[\mathcal{L}(Y, f_T) - \mathcal{L}(Y, f_T + \tilde{h})]
    = \frac{1}{2}\,
      \frac{\mathbb{E}_n[(Y - f_T(X))^2 \mathbf{1}_{h^* g^*}]}
           {\mathbb{E}_n[\mathbf{1}_{h^* g^*}]}.
\]
Applying the same Cauchy--Schwarz argument as in the proof, now with $\kappa \equiv 1$, and
rearranging yields
\[
    \alpha
    \;\le\;
    \sqrt{2}\,C_T^{\mathrm{sq}}
    \sqrt{\mathbb{E}_n[\mathcal{L}(Y,f_T(X))] - \mathbb{E}_n[\mathcal{L}(Y,f_T(X)+\tilde{h}(X))]},
\]
where
\[
    C_T^{\mathrm{sq}\,2}
    \;:=\;
    \max_{h \in \mathcal{H},\, g \in \mathcal{G}}
    \frac{\mathbb{E}_n[g(f_T(X)) \mid h(X) = 1]}
         {\mathbb{E}_n[\mathrm{Var}(Y \mid X) \mid h(X) = 1]}
\]
is the specialisation of $C_T^2$ to $\kappa \equiv 1$ and $\phi = \mathrm{id}$. The bound has
the same structure as Lemma~\ref{lem:mc-bound}, with $\sqrt{2}\,C_T^{\mathrm{sq}}$ in place of
$\frac{2}{\sqrt{1-L}}\,C_T$.

Now we can derive the upper bound on the multicalibration error.

\begin{proposition}\label{lem:mc-error-bound}
$f_T$ is $\alpha$-MC with
\[
    \alpha
    \;\le\;
    \frac{2}{\sqrt{1-L}}\,C_T\,\sqrt{\varphi_T + \epsilon_T},
\]
where
\[
    \varphi_T
    \;:=\;
    \frac{L}{2}\,\mathbb{E}_n\!\left[(f_{T+1}(X) - f_T(X))^2\right],
    \qquad
    \epsilon_T
    \;:=\;
    \mathbb{E}_n[\mathcal{L}(Y,f_{T+1}(X))] - \mathcal{L}_T(\theta^*_T),
\]
and $C_T$ is defined as in Lemma~\ref{lem:mc-bound}.
\end{proposition}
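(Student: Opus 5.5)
The plan is to combine Lemma~\ref{lem:mc-bound} with an upper bound on the loss decrease achievable by any single ensemble $\tilde h$ at round $T$. If $f_T$ is $\alpha$-MC with equality attained at some pair $(h,g)$ (take $\alpha$ to be the smallest valid value, so the hypothesis of Lemma~\ref{lem:mc-bound} holds for the maximizing pair), Lemma~\ref{lem:mc-bound} gives
\[
\alpha \le \frac{2}{\sqrt{1-L}}\,C_T\,\sqrt{\mathbb{E}_n[\mathcal{L}(Y,f_T(X))]-\mathbb{E}_n[\mathcal{L}(Y,f_T(X)+\tilde h(X))]}.
\]
Hence it suffices to show that the quantity under the square root is at most $\varphi_T+\epsilon_T$.

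First, since $\tilde h$ lies in the span of the columns of $B(f_T)$, the optimal value satisfies $\mathcal{L}_T(\theta^*_T)\le \mathbb{E}_n[\mathcal{L}(Y,f_T+\tilde h)]$. Here $\mathcal{L}_T$ is understood as the empirical average $\bar{\mathcal{L}}_T$, to match the normalization in $\epsilon_T$. Therefore
\[
\mathbb{E}_n[\mathcal{L}(Y,f_T)]-\mathbb{E}_n[\mathcal{L}(Y,f_T+\tilde h)] \le \mathbb{E}_n[\mathcal{L}(Y,f_T)]-\mathcal{L}_T(\theta^*_T).
\]

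Next, add and subtract $\mathbb{E}_n[\mathcal{L}(Y,f_{T+1})]$. This splits the right-hand side into $\epsilon_T$, by its definition, plus the difference $\mathbb{E}_n[\mathcal{L}(Y,f_T)]-\mathbb{E}_n[\mathcal{L}(Y,f_{T+1})]$. It remains to bound this difference by $\varphi_T=\frac{L}{2}\mathbb{E}_n[(f_{T+1}-f_T)^2]$.

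This last step is the main obstacle. The naive argument via convexity at $f_{T+1}$ gives
\[
\mathcal{L}(y,f_T)-\mathcal{L}(y,f_{T+1}) \le \mathcal{L}'(y,f_T)\,(f_T-f_{T+1}),
\]
which leaves a first-order term to control. To remove it, I would use the near-optimality of $\theta_T$ along its own direction. The update $f_{T+1}-f_T=\eta B(f_T)\theta_T$ lies in the span of the weak learners. At an exact minimizer, the gradient of $\bar{\mathcal{L}}_T$ vanishes, so $\mathbb{E}_n[\mathcal{L}'(Y,f_{T+1})(f_{T+1}-f_T)]=0$. Applying $L$-smoothness around $f_{T+1}$ then gives
\[
\mathcal{L}(f_T)\le \mathcal{L}(f_{T+1})+\frac{L}{2}(f_T-f_{T+1})^2
\]
in empirical average, which is exactly the bound by $\varphi_T$.

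For an inexact oracle, the gradient does not vanish. I would absorb the residual first-order term into $\epsilon_T$, using that suboptimality in value dominates the directional derivative along the segment toward $\theta^*_T$ by convexity. If that bookkeeping fails, I would fall back on stating the bound with exact stationarity along the update direction. Combining the pieces yields the proposition.
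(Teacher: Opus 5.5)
Your reduction is the paper's. You invoke Lemma~\ref{lem:mc-bound}, replace $\tilde h$ by the round optimum, and split $\mathbb{E}_n[\mathcal{L}(Y,f_T)]-\mathcal{L}_T(\theta^*_T)$ exactly as $(*)+\epsilon_T$, with $(*)=\mathbb{E}_n[\mathcal{L}(Y,f_T)-\mathcal{L}(Y,f_{T+1})]$. Everything then rests on $(*)\le\varphi_T$.

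For an exact oracle your argument for this is correct. Stationarity of $\bar{\mathcal{L}}_T$ at $\theta_T$ kills $\mathbb{E}_n[\mathcal{L}'(Y,f_{T+1})(f_T-f_{T+1})]$, and the smoothness expansion around $f_{T+1}$ finishes. This is sounder than the paper's step. The paper expands around $f_T$ and then uses $\mathbb{E}_n[-\mathcal{L}'(Y,f_T)(f_{T+1}-f_T)]\le(*)$. That is the reverse of what convexity gives, and it would not yield $(*)\le\varphi_T$ even if true.

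Two caveats apply to your exact-case argument. First, the stationarity step needs $f_{T+1}=f_T+B(f_T)\theta_T$, i.e.\ $\eta=1$ as in the appendix's update. With shrinkage, stationarity holds at $f_T+B(f_T)\theta_T$, not at $f_{T+1}$. Second, with $\eta=1$ and an exact oracle, $\epsilon_T=0$. So this covers only the degenerate case of the statement.

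The genuine gap is the inexact oracle, which is the only case where $\epsilon_T$ matters. The bookkeeping you propose cannot succeed, because $(*)\le\varphi_T$ is false there. Take $\mathcal{L}(y,f)=\tfrac{L}{2}(y-f)^2$ (convex, $L$-smooth, $\kappa\equiv L$), $r=y-f_T$, $P=A(f_T)$, and an oracle returning $\theta_T=\theta^*_T/2$. Then:
\begin{itemize}
\item $(*)=\tfrac{3L}{8n}\|Pr\|_2^2$, while $\varphi_T=\epsilon_T=\tfrac{L}{8n}\|Pr\|_2^2$;
\item hence even $\mathbb{E}_n[\mathcal{L}(Y,f_T)]-\mathcal{L}_T(\theta^*_T)=\tfrac{L}{2n}\|Pr\|_2^2$ exceeds $\varphi_T+\epsilon_T$;
\item the leftover first-order term equals $2\epsilon_T$.
\end{itemize}
The same computation with an exact oracle and $\eta=1/2$ shows why $\eta=1$ is needed.

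The convexity fact you invoke also points the wrong way. Convexity gives $\nabla\bar{\mathcal{L}}_T(\theta_T)^\top(\theta_T-\theta^*_T)\ge\epsilon_T$. That is the opposite inequality, and it concerns a different direction from the one you need, which is $-\theta_T$.

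What does hold is the following. The empirical risk is $(L/n)$-smooth in $f$. A gradient step inside $\mathrm{col}(B(f_T))$ from $f_{T+1}$ then shows $\|Pg\|_2^2\le 2Ln\,\epsilon_T$ for $g_i=\mathcal{L}'(y_i,f_{T+1}(x_i))$. Hence the first-order term is at most $\tfrac1n\|Pg\|_2\|f_{T+1}-f_T\|_2\le 2\sqrt{\varphi_T\epsilon_T}$, and $(*)+\epsilon_T\le(\sqrt{\varphi_T}+\sqrt{\epsilon_T})^2$. This gives the bound with $\sqrt{\varphi_T}+\sqrt{\epsilon_T}$ in place of $\sqrt{\varphi_T+\epsilon_T}$. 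It is at most a factor $\sqrt2$ worse, tight in the example above, and valid for any $\eta$.

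Your fallback of assuming stationarity along the update direction is also sound; it suffices that $\mathbb{E}_n[\mathcal{L}'(Y,f_{T+1})(f_{T+1}-f_T)]\ge 0$. However, that is an added hypothesis, not a proof of the proposition as stated.
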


\begin{proof}
For any $\theta \in \mathbb{R}^p$, decompose:
\begin{align*}
    &\mathbb{E}_n[\mathcal{L}(Y,f_T(X))]
     - \mathbb{E}_n\!\left[\mathcal{L}\!\left(Y,f_T(X) + \sum_{j=1}^p \theta_j b_j(X, f_T(X))\right)\right] \\
    &\quad\le\;
     \underbrace{\mathbb{E}_n[\mathcal{L}(Y,f_T(X)) - \mathcal{L}(Y,f_{T+1}(X))]}_{(*)}
     + \underbrace{\mathbb{E}_n[\mathcal{L}(Y,f_{T+1}(X))] - \mathcal{L}_T(\theta^*_T)}_{(**)}.
\end{align*}

For term $(*)$: applying the descent lemma directly to $\mathcal{L}$ between $f_T$ and $f_{T+1}$,
\[
    \mathcal{L}(y,f_T(x)) - \mathcal{L}(y,f_{T+1}(x))
    \;\le\;
    -\mathcal{L}'(y,f_T(x))(f_{T+1}(x)-f_T(x))
    + \frac{L}{2}(f_{T+1}(x)-f_T(x))^2.
\]
Taking expectations and using the fact that the GBM update from $f_T$ to $f_{T+1}$ decreases the
loss, the first-order term satisfies
$\mathbb{E}_n[-\mathcal{L}'(Y,f_T(X))(f_{T+1}(X)-f_T(X))] \le \mathbb{E}_n[\mathcal{L}(Y,f_T(X))-\mathcal{L}(Y,f_{T+1}(X))]$,
and substituting back gives the telescoping bound
\[
    (*) \;\le\; \frac{L}{2}\,\mathbb{E}_n[(f_{T+1}(X)-f_T(X))^2] = \varphi_T.
\]

For term $(**)$: this equals $\epsilon_T$ by definition, and is bounded by increasing the number
of GBM iterations.

Applying Lemma~\ref{lem:mc-bound} completes the proof.
\end{proof}

The bound on the multicalibration error depends on three terms: (a)~$C_T$, which depends
inversely on the weighted average prediction variance within subgroups, (b)~$\varphi_T$, which
captures the squared gap between successive predictors $f_{T+1}$ and $f_T$, and (c)~$\epsilon_T$,
which measures the gap between the loss achieved by $f_{T+1}$ and the optimum in round~$T$.

If $C_T$ is bounded by a constant, the MC error can be made arbitrarily small by simultaneously
reducing $\varphi_T$ and $\epsilon_T$. As $T$ increases, $\varphi_T$ approaches zero
because the predictions stabilize; $\epsilon_T$ can be minimized by increasing the number of GBM
iterations $M_T$. Note also that
\[
    C_T^2
    \;\le\;
    \frac{1}{\min_{h \in \mathcal{H}}\,\mathbb{E}_n[\kappa(Y,f_T(X))^2\,\mathrm{Var}(Y\mid X)
    \mid h(X)=1]},
\]
which is finite if and only if the weighted conditional variance is positive within every group.
For binary classification this reduces to
$\mathbb{E}_n[\phi(f_T(X))(1-\phi(f_T(X))) \mid h(X)=1] > 0$ for every $h \in \mathcal{H}$,
i.e.\ predictions are not degenerate within any group. This holds whenever $f_T(x) \in [-R, R]$
for some constant $R > 0$, and if $\phi(f_T)$ takes values in $[\epsilon, 1-\epsilon]$ then
$C_T^2 \le 2/\epsilon$. For regression it requires positive within-group residual variance, which
holds whenever the model is not perfectly fit on every group.

\subsection{Proof of Theorem~\ref{thm:linear_rate}}
\label{app:linrate}

We first recall that for all $t\geq 1$,
$$
A(f_t)(y-\tilde f_{t+1}) = 0
$$
where $\tilde f_{t+1} = f_t + A(f_t)(y-f_t)$. Because $f_{t+1} = f_t + \eta A(f_t)(y-f_t)$, it can be readily checked that
\begin{equation}
\eta A(f_t)(y-f_{t+1}) = (1-\eta)(f_{t+1}-f_t).
\label{equ:ayf}
\end{equation}

For all $t\geq 1$, we have
\begin{eqnarray*}
&& f_{t+1} - f_t \\
&=& \eta A(f_t)(y-f_t)\\
&=& \eta (A(f_t)-A(f_{t-1}))(y-f_t) + \eta A(f_{t-1})(y-f_t)\\
&=& \eta (A(f_t)-A(f_{t-1}))(y-f_t) + (1-\eta) (f_t-f_{t-1})\\
&=& \eta (A(f_t)-A(f_{t-1}))(I-\eta A(f_{t-1}))(y-f_{t-1})\\
&& +(1-\eta) (f_t-f_{t-1})
\end{eqnarray*}
where the first and fourth equations hold by the definition of recurrence, and in the third equation we use Eq.~(\ref{equ:ayf}).

It follows that
$$
\|f_{t+1}-f_t\|_2\leq (1-\eta + \eta L_A \|y-f_0\|_2) \|f_t - f_{t-1}\|_2
$$
which holds by the multiplicative norm inequality, the assumption that $A(f)$ is Lipschitz continuous with constant $L_A$, $\|I-\eta A(f)\|_2\leq 1$, and $\|y-f_t\|_2$ is nonincreasing.

\subsection{Proof of Lemma~\ref{lem:LA}}\label{app:lipschitz}

\begin{lemma} Assume that for some $u,v\in \mathbb{R}^n$, $B(u)$ and $B(v)$ have smallest singular values are at least $\delta > 0$ and for some $M > 0$, $\|B(u)\|_2,\|B(v)\|_2 \leq M$. Then,
$$
\|A(u)-A(v)\|_2 \leq \frac{2}{\delta} \left(1+c\frac{M^2}{\delta^2}\right)\|B(u)-B(v)\|_2
$$
where $c = (1+\sqrt{5})/2$.
\label{lem:lipschitz}
\end{lemma}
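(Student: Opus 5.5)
Write $B_u := B(u)$, $B_v := B(v)$, $A(u) = B_uB_u^+$, $A(v) = B_vB_v^+$, and $E := B_u - B_v$. The proof is a telescoping decomposition followed by a standard perturbation bound for the Moore--Penrose inverse.

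\textbf{Step 1: decomposition.} Adding and subtracting $B_vB_u^+$ gives
\[
A(u)-A(v) = (B_u-B_v)B_u^+ + B_v\,(B_u^+ - B_v^+).
\]
For the first term, the smallest singular value of $B_u$ is at least $\delta$, so $\|B_u^+\|_2 \le 1/\delta$. Hence
\[
\|(B_u-B_v)B_u^+\|_2 \le \|E\|_2/\delta .
\]
For the second term, $\|B_v\|_2\le M$, so it remains to bound $\|B_u^+-B_v^+\|_2$.

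\textbf{Step 2: pseudoinverse perturbation.} Both matrices are full column rank, so they have equal rank $p$ (assuming $n\ge p$). We can therefore invoke the perturbation theorem of Wedin/Stewart for equal-rank perturbations:
\[
\|B_u^+ - B_v^+\|_2 \le c\,\|B_u^+\|_2\,\|B_v^+\|_2\,\|E\|_2, \qquad c=\tfrac{1+\sqrt5}{2}.
\]
The golden-ratio constant is exactly the spectral-norm constant in that theorem. Substituting $\|B_u^+\|_2,\|B_v^+\|_2 \le 1/\delta$ gives $\|B_u^+-B_v^+\|_2 \le c\|E\|_2/\delta^2$. Combining with Step 1,
\[
\|A(u)-A(v)\|_2 \le \Big(\frac1\delta + \frac{cM}{\delta^2}\Big)\|E\|_2 .
\]

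\textbf{Step 3: matching the stated form.} Since $\|B_v\|_2\ge\sigma_{\min}(B_v)\ge\delta$, we have $M\ge\delta$, so $M/\delta\le M^2/\delta^2$. Therefore
\[
\frac1\delta+\frac{cM}{\delta^2}\le \frac1\delta\Big(1+c\frac{M^2}{\delta^2}\Big)\le \frac2\delta\Big(1+c\frac{M^2}{\delta^2}\Big),
\]
which is the asserted bound, with some slack.

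\textbf{Main obstacle and fallback.} The main obstacle is making the pseudoinverse perturbation bound rigorous and self-contained. If I want to avoid citing Stewart, I would use the explicit decomposition
\[
B_u^+ - B_v^+ = -B_u^+ E B_v^+ + (B_u^\top B_u)^{-1}E^\top(I-A(v)),
\]
which is valid because both matrices have full column rank, so $B^+=(B^\top B)^{-1}B^\top$. Bounding each piece directly gives $\|B_u^+ - B_v^+\|_2 \le 2\|E\|_2/\delta^2$, and the same Steps 1 and 3 then yield the lemma with $c$ replaced by $2$. To recover the golden-ratio constant, the fallback is not enough: one must note that the two pieces have orthogonal ranges, so their squared norms add, and then optimize the resulting estimate, which is Wedin's argument.

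Finally, Lemma~\ref{lem:LA} follows by applying this bound for $u,v\in\mathcal F$ and using $\|B(u)-B(v)\|_2\le L_B\|u-v\|_2$.
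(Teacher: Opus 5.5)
Your proof is correct, and it takes a more direct route than the paper's, giving a sharper intermediate bound. You and the paper start from the same splitting $A(u)-A(v)=(B_u-B_v)B_u^+ + B_v(B_u^+-B_v^+)$, but you then bound the second term by $M\|B_u^+-B_v^+\|_2$ immediately.

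The paper instead rewrites $B^+=(B^\top B)^+B^\top$. This splits off an extra term $(B_v^+)^\top(B_u-B_v)^\top$, which is the source of the factor $2/\delta$. It leaves the term $B_v\bigl((B_u^\top B_u)^+-(B_v^\top B_v)^+\bigr)B_u^\top$, bounded by $M^2$ times $\|(B_u^\top B_u)^+-(B_v^\top B_v)^+\|_2\le\frac{2}{\delta}\|B_u^+-B_v^+\|_2$ (via $(B^\top B)^+=B^+(B^+)^\top$). Only then does the paper apply Stewart's bound, in its rank-agnostic form $c\max\{\|B_u^+\|_2^2,\|B_v^+\|_2^2\}\|B_u-B_v\|_2$.

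That detour is what produces the factor $2$ and the quadratic ratio $M^2/\delta^2$. Your version gives $\frac{1}{\delta}\bigl(1+cM/\delta\bigr)$, which you correctly loosen using $M\ge\sigma_{\min}(B_v)\ge\delta$. The paper's use of the max form has one mild advantage: it does not need equal ranks, since its proof is phrased with smallest \emph{nonzero} singular values. But you could cite that form in your Step 2 just as well, since both forms give $c/\delta^2$ here.

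Two remarks on your fallback. First, it already proves the lemma exactly as stated, not a weakened version, because the slack absorbs the constant. Indeed $\frac1\delta+\frac{2M}{\delta^2}\le\frac1\delta\bigl(1+2\frac{M^2}{\delta^2}\bigr)\le\frac2\delta\bigl(1+c\frac{M^2}{\delta^2}\bigr)$, using $M\ge\delta$ and $c\ge 1$. So the self-contained route needs no citation at all.

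Second, the orthogonality you have in mind is between row spaces, not ranges. The first piece satisfies $X_1=X_1A(v)$ and the second satisfies $X_2=X_2(I-A(v))$, so $X_1X_2^\top=0$ and $\|X_1+X_2\|_2^2\le\|X_1\|_2^2+\|X_2\|_2^2$. This gives the constant $\sqrt2$, which is better than the golden ratio. The golden ratio is Wedin's constant for the general three-term decomposition, where the term involving $I-B_u^+B_u$ does not vanish. In your full-column-rank setting there is nothing to optimize toward.
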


\begin{proof} For any $f\in \mathbb{R}^n$, let $B_f \equiv B(f)$ and $A_f = B_f B_f^+$. Let $u,v$ be two arbitrary vectors in $\mathbb{R}^n$. Assume that the smallest nonzero singular values of $B_u$ and $B_v$ are larger than or equal to $\delta > 0$. Moreover, assume that for some $M > 0$, $\|B_u\|_2\leq M$ and $\|B_v\|_2\leq M$.

By adding and subtracting terms, we have:
$$
A_u - A_v = (B_u-B_v)B_u^+ + B_v (B_u^+ - B_v^+).
$$

By the properties of the Moore-Penrose inverse, $B^+ = (B^\top B)^+ B^T$. Using this, we have:
$$
B_u^+ - B_v^+ = (B_u^\top B_u)^+ B_u^\top - (B_v^\top B_v)^+ B_v^\top.
$$

%\ntc{Should $(B_u^\top B_u)B_u^\top$ be $(B_u^\top B_u)^+ B_u^\top$}
By adding and subtracting terms, it holds:
\begin{eqnarray*}
B_u^+ - B_v^+ &=& ((B_u^\top B_u)^+ - (B_v^\top B_v)^+)B_u^\top \\
&& + (B_v^\top B_v)^+(B_u-B_v)^\top.
\end{eqnarray*}

Using the fact $(B^+)^\top = B (B^\top B)^+$, we have
\begin{eqnarray*}
B_v(B_u^+ - B_v^+) &=& B_v ((B_u^\top B_u)^+ - (B_v^\top B_v)^+) B_u^\top \\
&& + (B_v^+)^\top(B_u-B_v)^\top.
\end{eqnarray*}

Hence, it follows that:
\begin{eqnarray*}
A_u - A_v &=& (B_u-B_v)B_u^+\\
&& + B_v ((B_u^\top B_u)^+ - (B_v^\top B_v)^+)B_u^\top\\
&& + (B_v^+)^\top (B_u-B_v)^\top.
\end{eqnarray*}

Since $\|B_u\|_2\leq M$ and $\|B_v\|_2\leq M$, and
$$
\| (B_u - B_v)B_u^+\|_2 \leq \frac{1}{\delta}\|B_u - B_v\|_2,
$$
$$
\|(B_v^+)^\top (B_u-B_v)^\top \|_2 \leq \frac{1}{\delta}\| B_u - B_v\|_2
$$
we have:
\begin{eqnarray*}
\|A_u-A_v\|_2 & \leq &  \frac{2}{\delta}\|B_u-B_v\|_2 \\
&& + M^2 \|(B_u^\top B_u)^+ - (B_v^\top B_v)^+\|_2.
\end{eqnarray*}

Since $(B^\top B)^+ = B^+ (B^+)^\top$ and $(B^+)^\top = (B^T)^+$, it follows that:
\begin{eqnarray*}
&& (B_u^\top B_u)^+ - (B_v^\top B_v)^+ \\
&=& B_u^+(B_u^+-B_v^+)^\top + (B_u^+-B_v^+)(B_v^+)^\top.
\end{eqnarray*}

Hence, we have
$$
\|(B_u^\top B_u)^+ - (B_v^\top B_v)^+\|_2\leq \frac{2}{\delta}\|B_u^+ - B_v^{+}\|_2
$$

It follows that:
$$
\|A_u-A_v\| \leq \frac{2}{\delta}\left(\|B_u-B_v\|_2 + M^2 \|B_u^+-B_v^+\|_2\right).
$$

By Theorem~3.3 in \cite{stewart77}, we have:
$$
\|B_u^+-B_v^+\|_2 \leq c \max\{\|B_u^+\|_2^2, \|B_v^+\|_2^2\} \|B_u - B_v\|_2
$$
where $c = (1+\sqrt{5})/2$. Hence, it follows
$$
\|B_u^+-B_v^+\|_2\leq \frac{c}{\delta^2}\|B_u-B_v\|_2.
$$

Putting the pieces together, we have:
$$
\|A_u - A_v\|_2 \leq \frac{2}{\delta}\left(1+\frac{ cM^2}{\delta^2}\right)\|B_u-B_v\|_2.
$$
\end{proof}

\subsection{Proof of Lemma~\ref{lem:LB}}\label{app:lipschitzB}

We consider the case where $\mathcal{B}$ contains factorised functions, as defined in Section~\ref{sec:not}, so that $B(f)$ is given by (\ref{equ:f1}) and (\ref{equ:f2}).

Recall that $h(x) = (h_1(x), \ldots, h_m(x))^\top$ and $g(u) = (g_1(u), \ldots, g_k(u))^\top$, for $x\in \mathbb{R}^d$ and $u\in \mathbb{R}$.

Define
$$
H(X) = (h(x_1), \ldots, h(x_n))^\top\in \mathbb{R}^{n\times m}
$$
and
$$
G(f) = (g(f_1), \ldots, g(f_n))^\top \in \mathbb{R}^{n\times k}.
$$

We can express $B(f)$ as follows:
$$
B(f) = D_g(f)\, (H(X)\otimes I_k)
$$
where
$$
D_g(f) = \hbox{block-diag}(g(f_1)^\top, \ldots, g(f_n)^\top).
$$

From this, we have
$$
\|B(f)\|_2\leq \|G(f)\|_{2,\infty}\, \|H(X)\|_2
$$
where $\|G(f)\|_{2,\infty} = \max\{\|g(f_1)\|_2,\ldots, \|g(f_n)\|_2\}$.

Let
$$
R(f) = D_h(X)\, (G(f) \otimes I_m)
$$
where
$$
D_h(X) = \hbox{block-diag}(h(x_1)^\top, \ldots, h(x_n)^\top).
$$

Notice that
$$
\|B(f)\|_2 = \|R(f)\|_2.
$$
Hence, it follows that
$$
\|B(f)\|_2 \leq \|H(X)\|_{2,\infty}\|G(f)\|_2.
$$

It also holds, for every $f,f'\in \mathbb{R}^n$,
$$
\|B(f)-B(f')\|_2 = \|R(f)-R(f')\|_2.
$$
Combining with
$$
R(f)-R(f') = D_h(X)\, ((G(f)-G(f')) \otimes I_m)
$$
we have
$$
\|B(f)-B(f')\|_2 \leq \|H(X)\|_{2,\infty} \|G(f)-G(f')\|_2.
$$
For the set of functions $\mathcal{G}$, such that every $g\in \mathcal{G}$ is $L_G$-Lipschitz continuous, we have
$$
\|G(f)-G(f')\|_2 \leq L_G \sqrt{k} \|f-f'\|_2.
$$

This can be readily checked as follows:
\begin{eqnarray*}
\|G(f)-G(f')\|_2 &\leq & \|G(f)-G(f')\|_F\\
&=& \sqrt{\sum_{j=1}^k \sum_{i=1}^{n} (g_j(f_i)-g_j(f_i'))^2}\\
&\leq & \sqrt{\sum_{j=1}^k \sum_{i=1}^{n} L_G^2 (f_i-f_i')^2}\\
&=& L_G \sqrt{k} \|f-f'\|_2.
\end{eqnarray*}
%\lpe{(Here the inner index ranges over the $n$ data points, matching the row dimension of $G(f)\in\mathbb{R}^{n\times k}$; this corrects an inadvertent use of $m$ in an earlier draft.)}

\subsection{Proof of Theorem~\ref{thm:relaxed}}
\label{app:arbweight}

From Eq.~(\ref{equ:dtds}), it readily follows:
\begin{equation}
y-f_{t+1} = (1-w_t) y + w_t (I-\eta A(f_t))(y-f_t).
\label{equ:relaxedres}
\end{equation}

We consider the Lyapunov function
$$
V(f) = \frac{1}{2} \|y-f\|_2^2.
$$
By simple calculus,
\begin{eqnarray}
&& V(f_{t+1}) - V(f_t)\nonumber\\
&=& - w_t^2 \eta \left(1-\frac{1}{2}\eta\right)\| A(f_t) (y-f_t)\|_2^2 + \xi_t
\label{equ:Vdiff}
\end{eqnarray}
%\ntc{I think here the norm is missing a square? I.e., $\| A(f_t) (y-f_t)\|_2$ should be $\| A(f_t) (y-f_t)\|_2^2$?}
where
\begin{eqnarray*}
\xi_t &=& -\frac{1}{2}(1-w_t^2) \|y-f_t\|_2^2 + \frac{1}{2}(1-w_t)^2 \|y\|_2^2 \\
&& + w_t(1-w_t)y^\top (I-\eta A(f_t))(y-f_t).
\end{eqnarray*}

From Eq.~(\ref{equ:relaxedres}), using the fact $\|I-
\eta A(f_t)\|_2\leq 1$, we have
$$
\|y-f_{t+1}\|_2\leq (1-w_t)\|y\|_2 + w_t \|y-f_t\|_2.
$$
Since $w_t\in [0,1]$ for all $t\geq 0$, it follows that $\{\|y-f_t\|_2\}$ is uniformly bounded.

By the Cauchy-Schwarz inequality and the fact $\|I-\eta A(f_t)\|_2 \leq 1$, it follows
$$
\|y^\top (I-\eta A(f_t))(y-f_t)\|_2\leq \|y\|_2 \|y-f_t\|_2.
$$
Combining this with $\{\|y-f_t\|_2\}$ being uniformly bounded, we conclude that $\{\|y^\top (I-\eta A(f_t))(y-f_t)\|_2\}$ is uniformly bounded.

It follows that there exists a constant $C\geq 0$ such that
$$
|\xi_t| \leq C (1-w_t), \hbox{ for all } t\geq 0.
$$
Under assumption $\sum_{t=0}^\infty (1-w_t) < \infty$, we have
$$
\sum_{t=0}^\infty |\xi_t| < \infty.
$$

From Eq.~(\ref{equ:Vdiff}), for any $T\geq 1$,
\begin{eqnarray*}
V(f_{T}) & \leq & V(f_0) \\
&& -  \eta \left(1-\frac{1}{2}\eta \right)\sum_{t=0}^{T-1} w_t^2 \|A(f_t) (y-f_t)\|_2^2\\
&& + \sum_{t=0}^{T-1} |\xi_t|.
\end{eqnarray*}
Since $V(f_{T})\geq 0$ and $\sum_{t=0}^{T-1} |\xi_t| < \infty$, it follows that
$$
\sum_{t=0}^{T-1} w_t^2 \|A(f_t) (y-f_t)\|_2^2 < \infty.
$$
From this and the fact $w_t\rightarrow 1$, it follows that $\|A(f_t)(y-f_t)\|_2 \rightarrow 0$, i.e., $A(f_t)(y-f_t)\rightarrow 0$.

From the above observations, it readily follows that $V(f_{T})$ is bounded. This implies that $\{\|f_t\|_2\}$ is uniformly bounded.

Now, note
$$
f_{t+1} - f_t = w_t \eta A(f_t) (y-f_t) - (1-w_t)f_t.
$$

Hence, we have
$$
\|f_{t+1}-f_t\|_2 \leq w_t \eta \|A(f_t)(y-f_t)\|_2 + (1-w_t)\|f_t\|_2.
$$
We have shown that $A(f_t)(y-f_t)\rightarrow 0$. Since $\{\|f_t\|_2\}$ is uniformly bounded and $w_t \rightarrow 1$, $(1-w_t)\|f_t\|_2\rightarrow 0$. It follows that
$$
f_{t+1} - f_t \rightarrow 0.
$$
This establishes that the gap between successive predictions converges to zero.

We next show that $V(f_t)$ converges to a limit point $V^*$. Let $s < t$. Note that
\begin{eqnarray*}
&& V(f_t) - V(f_s) \\
&=& - \eta\left(1-\frac{1}{2}\eta\right)\sum_{i=s}^{t-1} w_i^2 \|A(f_i)(y-f_i)\|_2^2\\
&& + \sum_{i=s}^{t-1} \xi_i,
\end{eqnarray*}
Hence, it holds
\begin{eqnarray*}
&& |V(f_t) - V(f_s)|\\
& \leq & \eta\left(1-\frac{1}{2}\eta\right)\sum_{i=s}^{t-1} w_i^2 \|A(f_i)(y-f_i)\|_2^2 \\
&& + \sum_{i=s}^{t-1} |\xi_i|.
\end{eqnarray*}
Because both $w_t^2 \|A(f_t)(y-f_t)\|_2$ and $\|\xi_t\|_2$ converge to zero, their tails go to zero. Hence, for every $\epsilon > 0$, there exists $T\geq 0$ such that for all $t > s\geq T$, $|V(f_t)-V(f_s)|\leq \epsilon$. Thus, $\{V(f_t)\}$ is a Cauchy sequence, which implies that $V(f_t)$ converges to some limit point $V^*$.

We next establish asymptotic multicalibration. The dynamical system (\ref{equ:dtds}) can be written as:
\begin{equation}
f_{t+1} = w_t(f_t + \eta\,  B(f_t) \theta(f_t))
\label{equ:frec}
\end{equation}
where $\theta(f_t)$ minimizes
$$
\mathcal{L}_t(\theta) = \frac{1}{2}\|y-f_t- B(f_t)\theta\|_2^2.
$$

The minimizer $\theta(f_t)$ satisfies the first-order optimality condition $\nabla \mathcal{L}_t(\theta(f_t)) = -B(f_t)^\top (y-f_t - B(f_t)\theta(f_t)) = 0$, that is,
\begin{equation}
B(f_t)^\top (y-f_t - B(f_t)\theta(f_t)) = 0.
\label{equ:optcond}
\end{equation}

From (\ref{equ:frec}) and (\ref{equ:optcond}), for any sequence of rescaling weights satisfying $0 < w_t \leq 1$, we have
$$
\widehat{\mathcal{E}}(f_t) = \frac{1-w_t}{\eta w_t\, n}B(f_t)^\top f_t + \frac{1}{\eta w_t\, n} B(f_t)^\top (f_{t+1}-f_t).
$$
We have already established that $\{\|f_t\|_2\}$ and $\|B(f_t)\|_2$ are uniformly bounded. Hence, there exist positive constants $C_1$ and $C_2$ such that
$$
\|\widehat{\mathcal{E}}(f_t)\| \leq C_1 \frac{1-w_t}{w_t} + C_2 \frac{1}{w_t} \|f_{t+1}-f_t\|_2.
$$
Since $(1-w_t)/w_t \rightarrow 0$, $1/w_t \rightarrow 1$, and $\|f_{t+1}-f_t\|_2\rightarrow 0$, it follows that $\|\widehat{\mathcal{E}}(f_t)\|\rightarrow 0$.

%$$
%\|y-f_t\|_2 \leq \left(\prod_{s=0}^{t-1} w_s\right)\|y-f_0\|_2 + \left(1-\prod_{s=0}^{t-1} w_s\right)\|y\|_2
%$$

%$$
%\|y-f_t\|_2 \leq \|y-f_0\|_2 + \left(1-\prod_{s=0}^{t-1} w_s\right)(\|y\|_2-\|y-f_0\|_2).
%$$

%*** Assumption (for $f_t$ converging to a point $f^*$): Set $\mathcal{F} = \{f\in \mathbb{R}^n: A(f)(y-f) = 0\}$ has no nontrivial connected components (i.e., every connected component of $\mathcal{F}$ is a single point).

\paragraph{Convergence Rate} We first note the following identity.

\begin{lemma} The difference of the successive residual norms satisfies, for every $t \geq 0$,
\begin{eqnarray*}
&& \|y-f_t\|_2^2 - \|y-f_{t+1}\|_2^2 \\
&=& w_t\eta (2-w_t\eta)\|A(f_t)(y-f_t)\|_2^2\\
&& + (1-w_t)^2 \|f_t\|_2^2 - 2(1-w_t)f_t^\top (y-f_{t+1}).
\end{eqnarray*}
\label{lem:resbound}
\end{lemma}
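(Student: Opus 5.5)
The plan is a direct algebraic computation based on the update in Eq.~(\ref{equ:dtds}) with rescaling weight $w_t$. The only structural fact needed is that $A(f_t)$ is an orthogonal projector, i.e.\ symmetric and idempotent. Write $r_t := y-f_t$ and $a_t := A(f_t)r_t$. Then $f_{t+1} = w_t(f_t+\eta a_t)$. The key idea is to split $y-f_{t+1}$ into a ``unit-weight-like'' part with effective step size $w_t\eta$ plus a rescaling correction:
\[
y - f_{t+1} = u_t + (1-w_t) f_t, \qquad u_t := r_t - w_t\eta\, a_t .
\]
This splitting is easily checked: $r_t - w_t\eta a_t + (1-w_t)f_t = y - w_t f_t - w_t\eta a_t$.

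First I compute $\|u_t\|_2^2$. Since $A(f_t)$ is symmetric and idempotent, $r_t^\top a_t = r_t^\top A(f_t)^\top A(f_t) r_t = \|a_t\|_2^2$. Therefore
\[
\|u_t\|_2^2 = \|r_t\|_2^2 - 2w_t\eta\|a_t\|_2^2 + w_t^2\eta^2\|a_t\|_2^2 = \|r_t\|_2^2 - w_t\eta(2-w_t\eta)\|a_t\|_2^2 .
\]
This is the same computation that underlies the descent in the unit-weight case, now with step $w_t\eta$.

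Second, I expand
\[
\|y-f_{t+1}\|_2^2 = \|u_t\|_2^2 + 2(1-w_t)f_t^\top u_t + (1-w_t)^2\|f_t\|_2^2 .
\]
The cross term is expressed through the new residual rather than through $u_t$. Substituting $u_t = (y-f_{t+1}) - (1-w_t)f_t$ gives
\[
2(1-w_t)f_t^\top u_t = 2(1-w_t)f_t^\top(y-f_{t+1}) - 2(1-w_t)^2\|f_t\|_2^2 .
\]
Hence
\[
\|y-f_{t+1}\|_2^2 = \|u_t\|_2^2 + 2(1-w_t)f_t^\top(y-f_{t+1}) - (1-w_t)^2\|f_t\|_2^2 .
\]

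Finally, I substitute the expression for $\|u_t\|_2^2$ and rearrange. This yields
\[
\|y-f_t\|_2^2 - \|y-f_{t+1}\|_2^2 = w_t\eta(2-w_t\eta)\|A(f_t)(y-f_t)\|_2^2 + (1-w_t)^2\|f_t\|_2^2 - 2(1-w_t)f_t^\top(y-f_{t+1}),
\]
which is exactly the claimed identity. No inequalities are involved, and the identity holds for every $t\ge 0$ with no assumption on $w_t$.

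The only step requiring care is choosing the decomposition: the residual correction must be attached to $f_t$ and the cross term expressed via $y-f_{t+1}$. This is what produces the sign pattern $+(1-w_t)^2\|f_t\|_2^2 - 2(1-w_t)f_t^\top(y-f_{t+1})$, rather than the different-looking (but equivalent) form $\xi_t$ used earlier in Eq.~(\ref{equ:Vdiff}). I would therefore double-check the signs at this step.
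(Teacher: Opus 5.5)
Your proposal is correct and matches the paper's proof step for step: the same splitting $y-f_{t+1}=(I-w_t\eta A(f_t))(y-f_t)+(1-w_t)f_t$, the same use of symmetry and idempotence of $A(f_t)$ to obtain $\|(I-w_t\eta A(f_t))(y-f_t)\|_2^2=\|y-f_t\|_2^2-w_t\eta(2-w_t\eta)\|A(f_t)(y-f_t)\|_2^2$, and the same substitution of $(y-f_{t+1})-(1-w_t)f_t$ into the cross term. Your signs check out, and you are right that the identity holds for any value of $w_t$.
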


\begin{proof} From (\ref{equ:dtds}), we have:
$$
y-f_{t+1} = (I-w_t\eta A(f_t))(y-f_t) + (1-w_t)f_t.
$$
%\ntc{Parentheses don't match}
From this, it follows
\begin{eqnarray*}
\|y-f_{t+1}\|_2^2 &=& \|(I-w_t\eta A(f_t))(y-f_t)\|_2^2 \\
&& + 2 (1-w_t)f_t^\top (I-w_t \eta A(f_t))(y-f_t) \\
&& + (1-w_t)^2 \|f_t\|_2^2.
\end{eqnarray*}

Note that:
\begin{eqnarray*}
&& \|(I-w_t\eta A(f_t))(y-f_t)\|_2^2 \\
&=& \|y-f_t\|_2^2 - 2w_t\eta \|A(f_t)(y-f_t)\|_2^2 \\
&& + w_t^2\eta^2 \|A(f_t)(y-f_t)\|_2^2.
\end{eqnarray*}

Hence, we have
\begin{eqnarray*}
&& \|y-f_{t}\|_2^2 - \|y-f_{t+1}\|_2^2\\
&=& 2w_t\eta \left(1-\frac{1}{2}w_t \eta \right)\|A(f_t)r_t\|_2^2\\
&& - (1-w_t)^2\|f_t\|_2^2 \\
&& - 2(1-w_t)f_t^\top (I-w_t \eta A(f_t))(y-f_t).
\end{eqnarray*}

From (\ref{equ:dtds}),
$$
(I-w_t\eta A(f_t))(y-f_t) = (y-f_{t+1}) - (1-w_t)f_t
$$

which plugged in the last identity for the difference of the squared residual norms yields the assertion of the lemma.
\end{proof}

Let $V_t$ be the Lyapunov function defined as:
$$
V_t = \frac{1}{2}\|y-f_t\|_2^2 - \sum_{s=0}^{t-1} (1-w_s) f_s^\top (y-f_{s+1}) + C
$$
%\lpe{(Note the corrected minus sign; with this choice, the cross term in $V_t-V_{t+1}$ exactly cancels the cross term that appears in Lemma~\ref{lem:resbound}, yielding the nonnegative expression below.)} 
where $C$ is a sufficiently large constant. This constant needs to ensure that $V_t$ is nonnegative. In order to ensure this, it is sufficient that:
$$
C= C_w \rho
$$
where, recall,
$$
C_w = \sum_{t=0}^\infty (1-w_t),
$$
and,
$$
\rho = \max\{2\|y\|_2^2,(\|y\|_2+\|y-f_0\|_2)\|y-f_0\|_2\}.
$$
In order to show this, it suffices to show that:
$$
\left\lvert \sum_{s=0}^{t-1}(1-w_s)f_s^\top (y-f_{s+1})\right \rvert \leq C.
$$
This clearly holds if, for all $t\geq 0$,
\begin{equation}
|f_t^\top (y-f_{t+1})| \leq \rho.
\label{equ:kappa}
\end{equation}

We next show that this inequality holds for every $t\geq 0$. By Cauchy-Schwarz, and the triangle inequality, we have
\begin{equation}
|f_t^\top (y-f_{t+1})| \leq (\|y\|_2 + \|y-f_t\|_2)\|y-f_{t+1}\|_2.
\label{equ:cross}
\end{equation}

From (\ref{equ:dtds}), for all $t\geq 0$,
$$
y-f_{t+1} = w_t (I-\eta A(f_t))(y-f_t)+(1-w_t)y.
$$
By the triangle inequality, the multiplicative norm inequality and the fact $\|I-\eta A(f)\|_2 \leq 1$, we have
$$
\|y-f_{t+1}\|_2 \leq w_t \|y-f_t\|_2 + (1-w_t)\|y\|_2.
$$
From this it follows:
$$
\|y-f_t\|_2 \leq \tilde w_t \|y-f_0\|_2 + (1-\tilde w_t) \|y\|_2
$$
where $\tilde w_t := \prod_{s=0}^{t-1}w_s \in [0,1]$. Hence, we have:
$$
\|y-f_t\|_2 \leq \max\{\|y-f_0\|_2,\|y\|_2\}.
$$
Combining this with (\ref{equ:cross}), we obtain (\ref{equ:kappa}).

Using Lemma~\ref{lem:resbound}, we have
\begin{eqnarray*}
&& V_t-V_{t+1} \\
&=& w_t\eta \left(1-\frac{1}{2}w_t \eta \right)\|A(f_t)(y-f_t)\|_2^2\\
&& + \frac{1}{2}(1-w_t)^2\|f_t\|_2^2\\
&\geq & 0
\end{eqnarray*}
which shows that $V_t$ is nonincreasing along any trajectory.

For any two vector sequences $\{a_t\}$ and $\{b_t\}$, by Cauchy-Schwarz inequality,
$$
\|a_t+b_t\|_2^2 \leq \|a_t\|_2^2 +2 \|a_t\|_2 \|b_t\|_2 +\|b_t\|_2^2.
$$

%\ntc{Should $\|a_t\|_2$ be $\|a_t\|_2^2$?}

By another application of the Cauchy-Schwarz inequality, we have
\begin{eqnarray*}
\sum_{t=0}^{T-1} \|a_t+b_t\|_2^2 &=& \sum_{t=0}^{T-1}\|a_t\|_2^2\\
&& + 2 \sqrt{\sum_{t=0}^{T-1}\|a_t\|_2^2}\sqrt{\sum_{t=0}^{T-1}\|b_t\|_2^2}\\
&& + \sum_{t=0}^{T-1}\|b_t\|_2^2.
\end{eqnarray*}

We apply this to
$$
f_{t+1}-f_t = w_t\eta A(f_t)(y-f_t)-(1-w_t)f_t,
$$
with $a_t = w_t\eta A(f_t)(y-f_t)$ and $b_t = -(1-w_t)f_t$.

Note that
\begin{eqnarray*}
&& \|a_t\|_2^2 \\
& = &  \frac{w_t \eta}{1-w_t\eta /2} \left(V_t-V_{t+1} - \frac{1}{2}(1-w_t)^2\|f_t\|_2^2\right)\\
&\leq & 2 \eta (V_t-V_{t+1}).
\end{eqnarray*}

and
$$
\|b_t\|_2^2 \leq \tilde{\rho}^2(1-w_t)^2
$$
where $\tilde{\rho} = \|y\|_2 + \max\{\|y-f_0\|_2,\|y\|_2\}$. %\lpe{ (note that $\tilde\rho$ bounds the norm $\|f_t\|_2$, so it must be squared when bounding $\|b_t\|_2^2 = (1-w_t)^2\|f_t\|_2^2$).}

Furthermore, note:
\begin{eqnarray*}
\sum_{t=0}^{T-1}\|a_t\|_2^2 & \leq & 2\eta V_0\\
&=& \eta \|y-f_0\|_2^2 + 2\eta C\\
&=& \eta \|y-f_0\|_2^2 + 2\eta \rho C_w
\end{eqnarray*}
and
\begin{eqnarray*}
\sum_{t=0}^{T-1} \|b_t\|_2^2 &\leq& \tilde{\rho}^2 \sum_{t=0}^{T-1} (1-w_t)^2\\
&\leq & \tilde{\rho}^2\sum_{t=0}^\infty (1-w_t)^2\\
&=& \tilde{\rho}^2 \tilde C_w.
\end{eqnarray*}

It follows,
\begin{eqnarray*}
\sum_{t=0}^{T-1}\|f_{t+1}-f_t\|_2^2 & \leq & \eta \|y-f_0\|_2^2 + 2\eta \rho C_w \\
&& + 2\sqrt{\eta\|y-f_0\|_2^2 + 2\eta \rho C_w} \sqrt{\tilde{\rho}^2 \tilde{C}_w} \\
&& + \tilde{\rho}^2 \tilde C_w\\
&:= & K.
\end{eqnarray*}
%\lpe{(Both the factor of $\eta$ inside the square root and the squaring of $\tilde\rho$ are added to maintain dimensional consistency with the upstream bounds $\sum_{t=0}^{T-1}\|a_t\|_2^2 \leq \eta\|y-f_0\|_2^2 + 2\eta\rho C_w$ and $\|b_t\|_2^2 \leq \tilde\rho^2 (1-w_t)^2$.)}

From this, it follows that:
$$
\min_{0\leq t\leq T-1}\|f_{t+1}-f_t\|_2 \leq \sqrt{K}\frac{1}{\sqrt{T}}.
$$

By subadditivity of the square-root function,
$$
\sqrt{K} \leq  \sqrt{\eta}\|y-f_0\|_2 + \gamma
$$
where
$$
\gamma^2 =  2\eta \rho C_w + 2\sqrt{\eta\|y-f_0\|_2^2 + 2\eta \rho C_w} \sqrt{\tilde{\rho}^2 \tilde{C}_w} +\tilde{\rho}^2 \tilde C_w.
$$

\subsection{Proof of Theorem~\ref{thm:basic-adaptive}}\label{app:basic-adaptive}

\paragraph{Non-increasing Loss.} By definition, $\varphi(f_t) = f_t + \eta A(f_t)(y-f_t) = f_t + \eta B(f_t)\theta(f_t)$, where $\theta(f_t)$ is the minimiser of $\mathcal{L}_t(\theta) = \frac{1}{2}\|y-f_t-B(f_t)\theta\|_2^2$ over $\theta \in \mathbb{R}^p$. Hence,
$$
\|y-\varphi(f_t)\|_2 \leq \|y-f_t\|_2.
$$
Now, it clearly holds
$$
\min_{\omega}\|y-\omega \varphi(f_t)\|_2\leq \|y-\varphi(f_t)\|_2.
$$
and, by definition of $\omega(f_t)$, $\|y-\omega(f_t) \varphi(f_t)\|_2 = \min_{\omega}\|y-\omega \varphi(f_t)\|_2$. Since $f_{t+1} = \omega(f_t)\varphi(f_t)$, it follows that
$$
\|y-f_{t+1}\|_2 \leq \|y-f_t\|_2.
$$

\paragraph{Non-negative Rescaling.} Consider any iteration $t$ such that $\varphi(f_t)\neq 0$. Using the facts that $f_{t+1}=M(f_t)y$ and $M(f_t)$ is an orthogonal projection matrix, we have:
\begin{eqnarray*}
\|y-f_{t+1}\|_2^2 &=& \|y\|_2^2 - 2y^\top f_{t+1} + \|f_{t+1}\|_2^2\\
&=& \|y\|_2^2 - 2y^\top M(f_t)y + \|M(f_t)y\|_2^2\\
&=& \|y\|_2^2 - 2\|M(f_t)y\|_2^2 + \|M(f_t)y\|_2^2\\
&=& \|y\|_2^2 - \|M(f_t)y\|_2^2.
\end{eqnarray*}
Since $M(f) = \tilde{\varphi}(f)\tilde{\varphi}(f)^\top$, it follows:
$$
\|y-f_{t+1}\|_2^2 = \|y\|_2^2 - (y^\top \tilde{\varphi}(f_t))^2.
$$

Since $\|y-f_t\|_2$ is nonincreasing, we have $|y^\top \tilde{\varphi}(f_t)|$ is nondecreasing. By the Cauchy-Schwarz inequality, $|y^\top \tilde{\varphi}(f_t)|\leq \|y\|_2$, hence, $|y^\top \tilde{\varphi}(f_t)|$ is bounded. It follows that $|y^\top \tilde{\varphi}(f_t)|$ converges to a limit point.

It remains to show that $y^\top \tilde{\varphi}(f_t)\geq 0$ for all $t\geq 1$.

By definition, $\varphi(f) = f + \eta A(f)(y-f)$. Hence,
$$
y^\top \varphi(f_t) = y^\top f_t + y^\top \eta A(f_t)(y-f_t).
$$
Since $f_t = M(f_{t-1})y$, we have
$$
y^\top \varphi(f_t) = y^\top (\eta A(f_t) + (I-\eta A(f_t))M(f_{t-1}))y.
$$

We will use the following lemma.

\begin{lemma} For any orthogonal projection matrix $A$ of rank $r$ and a rank 1 matrix $M = v v^\top$ with $||v||=1$, the eigenvalues of
$$
\eta A + (I-\eta A)M
$$
are the value $1$ with multiplicity $1$, $\eta \|(I-A)v\|_2^2$ with multiplicity $1$, $\eta$ with multiplicity $r-1$, and the value $0$ with multiplicity $n - r - 1$. %\lpe{(The multiplicities now sum to $n$, correcting the $n+1$ total in an earlier draft.)}
\end{lemma}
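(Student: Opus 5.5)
The plan is to find an explicit $K$-invariant decomposition of $\mathbb{R}^n$ adapted to $A$ and $v$, where $K:=\eta A+(I-\eta A)M$ and $M=vv^\top$. Split $v=a+b$ with $a:=Av\in\mathrm{range}(A)$ and $b:=(I-A)v\in\mathrm{null}(A)$. Since $A$ is an orthogonal projector, $a\perp b$ and $\|a\|_2^2+\|b\|_2^2=1$. Note that $K$ is not symmetric, so I will work with invariant subspaces and the trace rather than the spectral theorem. The decomposition is
\[
\mathbb{R}^n=\bigl(\mathrm{range}(A)\cap a^\perp\bigr)\oplus\bigl(\mathrm{null}(A)\cap b^\perp\bigr)\oplus\mathrm{span}\{a,b\}.
\]
Generically these pieces have dimensions $r-1$, $n-r-1$ and $2$.

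First, take $x\in\mathrm{range}(A)$ with $x\perp a$. Then $v^\top x=a^\top x+b^\top x=0$, so $Kx=\eta Ax=\eta x$. This gives eigenvalue $\eta$ with multiplicity $r-1$.

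Second, take $x\in\mathrm{null}(A)$ with $x\perp b$. Then $Ax=0$ and $v^\top x=0$, so $Kx=0$. This gives eigenvalue $0$ with multiplicity $n-r-1$.

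Third, on $\mathrm{span}\{a,b\}$, I check invariance directly:
\[
Ka=\eta a+(I-\eta A)v\,\|a\|_2^2,\qquad Kb=(I-\eta A)v\,\|b\|_2^2 .
\]
Both lie in $\mathrm{span}\{a,b\}$, because $(I-\eta A)v=(1-\eta)a+b$. Next, $Kv=\eta a+(I-\eta A)v=\eta a+v-\eta a=v$, so $1$ is an eigenvalue with eigenvector $v$.

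The remaining eigenvalue comes from the trace, which avoids computing a $2\times 2$ characteristic polynomial. We have
\[
\mathrm{tr}\,K=\eta r+v^\top(I-\eta A)v=\eta r+1-\eta\|a\|_2^2 .
\]
Subtracting the eigenvalues already found, $\eta(r-1)+0\cdot(n-r-1)+1$, leaves $\lambda=\eta(1-\|a\|_2^2)=\eta\|b\|_2^2=\eta\|(I-A)v\|_2^2$. Since the three subspaces span $\mathbb{R}^n$ and are $K$-invariant, the characteristic polynomial factors accordingly, and the multiplicities sum to $(r-1)+(n-r-1)+1+1=n$.

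The main obstacle is the degenerate cases, where the dimension count changes.
\begin{itemize}
\item If $b=0$, i.e.\ $v\in\mathrm{range}(A)$, then $\mathrm{null}(A)\cap b^\perp=\mathrm{null}(A)$ has dimension $n-r$. The span collapses to $\mathrm{span}\{a\}=\mathrm{span}\{v\}$, on which the eigenvalue is $1$. The claimed eigenvalue $\eta\|b\|_2^2=0$ is then absorbed into the zero block, so the stated list still holds as a multiset.
\item If $a=0$, then $\mathrm{range}(A)\cap a^\perp=\mathrm{range}(A)$ gives $r$ copies of $\eta$, and the span is $\mathrm{span}\{v\}$ with eigenvalue $1$. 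Now $\eta\|b\|_2^2=\eta$, which matches the extra copy of $\eta$.
\end{itemize}
I would handle both cases by a short remark. Alternatively, they follow from continuity of eigenvalues in $v$, since the generic case is dense.
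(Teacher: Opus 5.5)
Your proof is correct. There is nothing in the paper to compare it against: the lemma is stated inside the proof of Theorem~\ref{thm:basic-adaptive} without any proof, and is used only to conclude that $N(f_t)$ has nonnegative eigenvalues. Your decomposition into $\mathrm{range}(A)\cap a^\perp$, $\mathrm{null}(A)\cap b^\perp$ and $\mathrm{span}\{a,b\}$, together with the eigenvector $v$ and the trace identity, gives a complete argument. Using the characteristic polynomial of the block decomposition is the right choice, because $K$ is not symmetric and the claim concerns algebraic multiplicities.

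Two small points should be made explicit.

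First, say why the sum is direct. The first piece is the orthogonal complement of $a$ inside $\mathrm{range}(A)$, and the second is the orthogonal complement of $b$ inside $\mathrm{null}(A)$. Hence all three pieces are mutually orthogonal, and this plus the dimension count gives $\mathbb{R}^n$.

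Second, the stated multiplicities $r-1$ and $n-r-1$ presuppose $1\le r\le n-1$. Your degenerate-case remarks and your density alternative both use this without saying so.
\begin{itemize}
\item For $r=n$, every unit $v$ has $b=0$, and there is no zero block to absorb $\eta\|b\|_2^2=0$ into.
\item For $r=0$, every $v$ has $a=0$, so the generic set is empty.
\end{itemize}
At these endpoints the lemma as written is false. For $A=I$ the spectrum is $1$ once and $\eta$ with multiplicity $n-1$; for $A=0$ it is $1$ once and $0$ with multiplicity $n-1$. Your decomposition still computes these spectra correctly, and nonnegativity (the only property the paper uses) still holds. Add the hypothesis $1\le r\le n-1$, or state the endpoint spectra separately.
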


By the lemma, the (in general non-symmetric) matrix $N(f_t):=\eta A(f_t) + (I-\eta A(f_t))M(f_{t-1})$ has only nonnegative eigenvalues. To show that $y^\top N(f_t) y\geq 0$ (which is the property actually used in the proof, not positive semi-definiteness in the matrix sense), we argue directly. Write $z := M(f_{t-1})y$, so that $z = (\tilde\varphi(f_{t-1})^\top y)\, \tilde\varphi(f_{t-1})$ is the projection of $y$ onto the (rank-1) range of $M(f_{t-1})$. Then
\begin{eqnarray*}
y^\top N(f_t) y &=& \eta y^\top A(f_t) y + y^\top(I-\eta A(f_t)) z \\
&=& \eta\|A(f_t)^{1/2}y\|_2^2 + y^\top z - \eta y^\top A(f_t) z.
\end{eqnarray*}
Since $z = c\, \tilde\varphi(f_{t-1})$ with scalar $c = \tilde\varphi(f_{t-1})^\top y$, we have $y^\top z = c^2 \geq 0$. Moreover, by Cauchy-Schwarz, $|y^\top A(f_t) \tilde\varphi(f_{t-1})| \leq \|A(f_t)^{1/2}y\|_2\, \|A(f_t)^{1/2}\tilde\varphi(f_{t-1})\|_2 \leq \|A(f_t)^{1/2}y\|_2$, since $\|\tilde\varphi(f_{t-1})\|_2 = 1$ and $A(f_t)$ is a projection. Then by AM--GM, $\eta|c|\, \|A(f_t)^{1/2}y\|_2 \leq \tfrac{\eta}{2}c^2 + \tfrac{\eta}{2}\|A(f_t)^{1/2}y\|_2^2$, so
$$
y^\top N(f_t) y \geq \tfrac{\eta}{2}\|A(f_t)^{1/2}y\|_2^2 + (1 - \tfrac{\eta}{2})c^2 \geq 0,
$$
for any $\eta\in (0,2]$ (in particular for $\eta\in(0,1]$, the range used throughout the paper).

From this it follows $y^\top \varphi(f_t) \geq 0$.

%\subsection{Proof of Proposition~\ref{prop:adaweight0}}
%
%From (\ref{equ:adafrec}), we have:
%$$
%f_{t+1}-f_t = (M(f_t)-M(f_{t-1}))y.
%$$
%Hence, it follows
%\begin{equation}
%\|f_{t+1}-f_t\|_2^2 = y^\top (M(f_{t})-M(f_{t-1}))^2 y.
%\label{equ:fnorm}
%\end{equation}
%
%We will use the following lemma:
%\begin{lemma} For any two unit vectors $u$ and $v$ in $\mathbb{R}^n$ with $n \geq 2$, the matrix
%$$
%(uu^\top - vv^\top)^2
%$$
%has the eigenvalue $1 - (u^\top v)^2$ with multiplicity $2$, and the eigenvalue $0$ with multiplicity $n-2$.
%\label{lem:eig}
%\end{lemma}
%
%From (\ref{equ:fnorm}) and Lemma~\ref{lem:eig}, we have
%\begin{equation}
%\|f_{t+1}-f_t\|_2^2 \leq (1-(\tilde{\varphi}(f_{t})^\top \tilde{\varphi}(f_{t-1}))^2) \|y\|_2^2.
%\label{equ:fnorm1}
%\end{equation}
%Notice that $1-(\tilde{\varphi}(f_{t})^\top \tilde{\varphi}(f_{t-1}))^2 = 1 - \cos(\psi_t)^2$ where $\psi_t$ is the angle between the vectors $\varphi(f_t)$ and $\varphi(f_{t-1})$.

\paragraph{Asymptotic Multicalibration.} We use a compactness argument that does not rely on a static $2$D subspace assumption: instead, we exploit boundedness of the trajectory $\{f_t\}$ in $\mathbb{R}^n$ together with continuity of $A(\cdot)$ along the path.

First, we show that $\{\varphi(f_t)\}$ and $\{\omega(f_t)\}$ are uniformly bounded. Boundedness of $\{\varphi(f_t)\}$ follows from $\|\varphi(f_t)\|_2 \leq \|f_t\|_2 + \eta\|A(f_t)\|_2\|y-f_t\|_2 \leq \|f_t\|_2 + \|y-f_t\|_2$, combined with the fact that $\|y-f_t\|_2$ is non-increasing (Non-increasing Loss above) and $\|f_t\|_2 \leq \|y\|_2 + \|y-f_t\|_2$. To bound $\omega(f_t)$ away from $0$ uniformly along the convergent subsequence considered below, we restrict attention to indices $t$ where $\varphi(f_t)\neq 0$ (the case $\varphi(f_t)=0$ corresponds to a fixed point and is handled separately, as it implies $A(f_t)(y-f_t)=0$, i.e., empirical multicalibration already holds). On the indices where $\varphi(f_t)\neq 0$, we have $\omega(f_t) = y^\top \varphi(f_t)/\|\varphi(f_t)\|_2^2 \leq \|y\|_2/\|\varphi(f_t)\|_2$, providing the upper bound; the lower bound $\omega(f_t)\geq 0$ was established above.

For reference, the original $2$D parameterization (applicable to the special case $A(f)=I$) used the auxiliary vector
$$
u = f_0 - \frac{y^\top f_0}{\|y\|^2} y
$$
Note that $y$ and $u$ are orthogonal vectors.

Since $\varphi(f) = \eta A(f)y + (I-\eta A(f)) f$, for the special case $A(f)=I$ (an exactly multicalibrated full-rank projector), by induction, for all $t\geq 0$, $\varphi(f_t)$ lies in the span of the vectors $y$ and $u$. Since $f_{t+1} = \omega(f_t)\varphi(f_t)$, where $\omega(f_t)$ is a scalar value, it follows that for all $t\geq 0$, $f_t$ lies in the span of $y$ and $u$. For the general case where $A(f_t)$ projects onto a dynamically changing column space, $\varphi(f_t)$ generally leaves the span of $\{y,u\}$. Nevertheless, the limit-point analysis below proceeds via compactness in $\mathbb{R}^n$ rather than in this 2D subspace, so the conclusion is unaffected.

We can write for every $t\geq 0$,
$$
\varphi(f_t) = a_t y + b_t u, \quad a_t,b_t\in \mathbb{R}.
$$

It indeed holds:
$$
y^\top \varphi(f_t) = a_t \|y\|_2^2
$$
and
$$
\|\varphi(f_t)\|_2^2 = a_t^2 \|y\|_2^2 + b_t^2 \|u\|_2^2.
$$
Because $y^\top \varphi(f_t) > 0$ by assumption, we have $a_t > 0$ and $\|\varphi(f_t)\|_2 > 0$.

Hence,
$$
\omega(f_t) = \frac{a_t \|y\|_2^2}{a_t^2 \|y\|_2^2 + b_t^2 \|u\|^2}.
$$

By the Cauchy-Schwarz inequality, $y^\top \varphi(f_t)\leq \|y\|_2 \|\varphi(f_t)\|_2$ so
$$
\omega(f_t) \leq \frac{\|y\|_2}{\|\varphi(f_t)\|_2}.
$$
Therefore, the sequence $\{\omega(f_t)\}$ is bounded above on the indices where $\|\varphi(f_t)\|_2$ stays bounded away from $0$ (the only indices on which $\omega(f_t)$ is defined; if $\varphi(f_t)=0$ for some $t$, then $A(f_t)(y-f_t) = (1/\eta)(\varphi(f_t)-f_t) = -f_t/\eta$ and since $f_{t+1}$ is undefined, we may set $f_{t+1}=f_t$, which is a fixed point with $A(f_t)(y-f_t)=0$ in equilibrium and hence $\varphi(f_t)=f_t$, so this case yields $f_t=0$, an isolated configuration which we exclude as degenerate) and below. Also $\{\varphi(f_t)\}$ is bounded in $\mathbb{R}^n$ (we do not rely on the static 2D subspace spanned by $y$ and $u$, since $A(f_t)$ projects onto a dynamically changing column space), hence, it lies in a compact set.

By compactness, we can choose a subsequence $t_i$ such that $\lim_{i\rightarrow \infty}\varphi(f_{t_i}) = \varphi^*$ and $\lim_{i\rightarrow \infty}\omega(f_{t_i}) = \omega^*$. From $f_{t_i + 1} = \omega(f_{t_i}) \varphi(f_{t_i})$, we obtain $\lim_{i\rightarrow \infty}f_{t_i+1} = f^*$, where $f^* = \omega^* \varphi^*$.

Now, since $\varphi(f_{t_i+1}) = f_{t_i + 1} + \eta A(f_{t_i + 1})(y-f_{t_i+1})$, using continuity of $A(f)$, we conclude that
$$
\varphi^* = f^* + \eta A^*\, (y-f^*)
$$
where $A^* \equiv A(f^*)$.

Substituting $f^* = \omega^* \varphi^*$, we have
\begin{equation}
N\varphi^* = \eta A^* y.
\label{equ:wfp}
\end{equation}
where $N:=(1-\omega^*)I + \omega^* \eta A^*$.

There are two cases, either $\omega^*\neq 1$ or $\omega^* = 1$.

Case 1: $\omega^*\neq 1$. We first verify that $N$ is invertible. The eigenvalues of $N = (1-\omega^*)I + \omega^*\eta A^*$ are $1 - \omega^*(1-\eta)$ on $\mathrm{range}(A^*)$ and $1-\omega^*$ on $\mathrm{range}(A^*)^\perp$. The second eigenvalue is nonzero by the case assumption. The first eigenvalue vanishes only if $\omega^* = 1/(1-\eta)$. We rule this sub-case out as follows. If $\omega^*=1/(1-\eta)$, then $\mathrm{range}(A^*)\subset \ker(N)$, and from $N\varphi^* = \eta A^* y$ (Eq.~(\ref{equ:wfp})), the right-hand side $\eta A^* y$ lies in $\mathrm{range}(A^*)\subset \ker(N)$, which forces $A^* y = 0$. Substituting back, $\varphi^* = f^* + \eta A^*(y-f^*) = (I-\eta A^*) f^*$. Combined with $f^*=\omega^* \varphi^*$, this gives $f^*=\omega^*(I-\eta A^*)f^*$, which on $\mathrm{range}(A^*)^\perp$ forces $(1-\omega^*)\, f^*|_{\mathrm{range}(A^*)^\perp} = 0$, hence $f^*\in \mathrm{range}(A^*)$. But $A^*y=0$ means $y\in \mathrm{range}(A^*)^\perp$, so $y^\top f^* = 0$. Then $\omega^* = y^\top \varphi^*/\|\varphi^*\|_2^2 = (1-\eta)\, y^\top f^*/((1-\eta)^2\|f^*\|_2^2) = 0$, contradicting $\omega^* = 1/(1-\eta) > 0$. Hence $N$ is invertible. Hence, from (\ref{equ:wfp}), $\varphi^* = N^{-1}\eta A^* y$. Since $N A^* = (1-\omega^* + \eta \omega^*) A^*$, we have  $A^*y = (1-\omega^* + \eta \omega^*)N^{-1}A^* y$. It follows that
$$
(1-\omega^* + \eta \omega^*)\, \varphi^* = \eta  A^* y.
$$
Plugging this into the definition of $\omega(f)$, we obtain
$$
\omega^* = \frac{y^\top \varphi^*}{\|\varphi^*\|_2^2} = \frac{y^\top A^* y}{\|A^* y\|_2^2} \frac{1-\omega^* + \eta \omega^*}{\eta} = \frac{1-\omega^* + \eta \omega^*}{\eta}
$$
which is equivalent to $\omega^* = 1$, contradicting the case assumption $\omega^* \neq 1$.

Case 2: $\omega^* = 1$. In this case $N = \eta A^*$ and equation (\ref{equ:wfp}) becomes:
$$
A^*\varphi^* = A^* y.
$$
Since $f^* = \omega^* \varphi^*$ and $\omega^* = 1$, $f^* = \varphi^*$. It follows that $A^*(y-f^*) = 0$, which concludes consideration of Case 2.

From the cases we have shown that any subsequential limit $(\omega^*, \varphi^*)$ must satisfy $\omega^* = 1$ and $A(f^*) \varphi^* = A(f^*) y$, with $\varphi^* = f^*$. Hence, every convergent subsequence of $\{\omega(f_t)\}$ has the same limit $1$. Because $\{\omega(f_t)\}$ is bounded every subsequence limit equals $1$, the whole sequence converges to $1$.

\subsection{A Sufficient Condition for Quadratic Convergence for Adaptive Rescaling }
\label{app:adaweight}

In the following theorem, we show a local convergence result for the training loss to zero at a quadratic rate.

\begin{theorem}[Quadratic Convergence Rate]\label{prop:adaptive}
For the adaptive rescaling variant with $\eta=1$ and some constant $C$, if the initial residuals are sufficiently small, the residual norm converges to zero at a \emph{quadratic rate}:
$\|y-f_{t+1}\|_2 \leq C \|y-f_t\|_2^2$.
\end{theorem}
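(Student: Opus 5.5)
The plan is to exploit the geometric form of the adaptive update. With $\eta=1$, set $r_t := y-f_t$ and $A_t := A(f_t)$. Then
\[
\varphi(f_t) = A_t y + (I-A_t)f_t = y - e_t, \qquad e_t := (I-A_t)\,r_t .
\]
The update $f_{t+1}=\omega(f_t)\varphi(f_t)$ is the orthogonal projection of $y$ onto $\mathrm{span}\{\varphi(f_t)\}$. So, exactly as in the proof of Theorem~\ref{thm:basic-adaptive},
\[
\|y-f_{t+1}\|_2 = \min_{c\in\mathbb{R}}\|y-c\,\varphi(f_t)\|_2 = \|y\|_2\,\sin\angle\bigl(y,\varphi(f_t)\bigr).
\]
The target is therefore a bound of order $\|r_t\|_2^2$ on the sine of the angle between $y$ and $y-e_t$.

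\textbf{Step 1 (elementary geometry).} Decompose $e_t = \alpha_t \hat y + e_t^\perp$, with $\hat y = y/\|y\|_2$ and $e_t^\perp\perp y$. A two-dimensional computation gives
\[
\sin\angle(y,y-e_t) = \frac{\|e_t^\perp\|_2}{\|y-e_t\|_2} \le \frac{\|e_t^\perp\|_2}{\|y\|_2-\|r_t\|_2},
\]
using $\|e_t\|_2\le\|r_t\|_2$ because $I-A_t$ is a projector. Hence, for $\|r_t\|_2\le\|y\|_2/2$,
\[
\|y-f_{t+1}\|_2\le 2\,\|e_t^\perp\|_2 .
\]
It remains to show that $\|e_t^\perp\|_2 \le C\|r_t\|_2^2$.

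\textbf{Step 2 (structure inherited from the previous step).} Since $f_t=\omega(f_{t-1})\varphi(f_{t-1})$ is itself an orthogonal projection of $y$ onto a line, we have $r_t\perp f_t$, and hence $y^\top r_t = \|r_t\|_2^2$. So $r_t$ is orthogonal to $y$ up to a second-order term. This controls only the component of $r_t$ along $y$; it does not yet control $e_t^\perp$.

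\textbf{Step 3 (the main obstacle).} Bounding the component of $(I-A_t)r_t$ orthogonal to $y$ quadratically does not follow from the dynamics alone. If $A_t$ is generic, $e_t^\perp$ is generically of order $\|r_t\|_2$, which would give only a linear rate. So I would make precise the local hypothesis implicit in ``sufficiently small initial residuals'': near the solution $f^*=y$, the weak-learner span nearly captures the residual, in the sense that
\[
\|(I-A(f))(y-f)\|_2\le C'\|y-f\|_2^2 \quad\text{in a neighbourhood of } y .
\]
I would first try to derive this from the Lipschitz property of $A$ (Lemma~\ref{lem:LA}) together with $A(y)(y-f)=y-f$. The latter means the class is rich enough that the residual direction at the target lies in the column space, for instance when $\mathcal{B}$ contains the prediction feature $b(x,u)=u$ and $y\in\mathrm{range}\,A(y)$. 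Under these conditions,
\[
\|e_t\|_2 = \|(A(y)-A(f_t))\,r_t\|_2 \le L_A\,\|r_t\|_2^2 ,
\]
so in particular $\|e_t^\perp\|_2 \le L_A\|r_t\|_2^2$.

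\textbf{Step 4 (closing the induction).} Combining Steps 1 and 3 gives $\|r_{t+1}\|_2\le 2L_A\|r_t\|_2^2$, so $C=2L_A$. If $\|r_0\|_2<\min\{\|y\|_2/2,\,1/C\}$, then by induction every iterate stays in the neighbourhood where the hypotheses hold. The residuals shrink, consistent with the non-increasing loss property of Theorem~\ref{thm:basic-adaptive}, and the quadratic recursion yields the claim.
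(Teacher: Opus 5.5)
Your Step~3 diagnosis is correct; it is the paper's argument, not yours, that fails here. The paper avoids any extra hypothesis by asserting the exact recursion
\[
r_{t+1}=\frac{y^\top A(f_t)r_t}{\|y-(I-A(f_t))r_t\|_2^2}\,(I-A(f_t))\,r_t ,
\]
whose numerator is a product of two $O(\|r_t\|_2)$ factors. It then Taylor-expands. In deriving this recursion, the paper forms $y\varphi^\top(I-A)-(y^\top\varphi)(I-A)$ from $y\varphi^\top(I-A)=yf^\top(I-A)$ and $(y^\top\varphi)(I-A)=(y^\top f+y^\top A(y-f))(I-A)$. It keeps only $-(y^\top A(y-f))(I-A)$ and silently drops $yf^\top(I-A)-(y^\top f)(I-A)$. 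That term is a rank-one matrix minus a multiple of $I-A$, and it is not zero.

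Done correctly, with $e_t=(I-A(f_t))r_t$ and $\varphi=y-e_t$, the numerator $\|\varphi\|_2^2\,y-(y^\top\varphi)\,\varphi$ equals $(\|y\|_2^2-y^\top e_t)\,e_t+(\|e_t\|_2^2-y^\top e_t)\,y$. Its component orthogonal to $y$ is $(\|y\|_2^2-y^\top e_t)\,e_t^\perp$, which is generically first order, exactly as your Step~1 says. As a sanity check, for $A\equiv 0$ the paper's formula gives $r_{t+1}=0$. In fact $f_{t+1}$ is then the projection of $y$ onto $\mathrm{span}\{f_t\}$, which returns $f_t$ for $t\ge 1$, so $r_{t+1}=r_t$.

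The statement is false as written, even under the hypotheses of Lemma~\ref{lem:LA}. Take a single feature-only weak learner whose evaluation vector on the data is $y$, so $A\equiv yy^\top/\|y\|_2^2$ and $L_A=0$. Set $\|y\|_2=1$, $f_t=a_ty+w_t$ with $w_t\perp y$, and $s_t=\|w_t\|_2^2$. Then $\varphi(f_t)=y+w_t$, $\|r_{t+1}\|_2^2=s_t/(1+s_t)$ and $s_{t+1}=s_t/(1+s_t)^2$. Hence $\|r_{t+2}\|_2/\|r_{t+1}\|_2\to 1$ as $s_t\to 0$, whereas the claimed bound would force this ratio below $C\|r_{t+1}\|_2\to 0$. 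This happens for arbitrarily small $r_0$. So some realizability-type hypothesis is unavoidable, and your Steps~1, 2 and~4 are sound.

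The genuine gap is in how you formulate and justify that hypothesis. It is not ``implicit'' in small initial residuals; it is an additional assumption, and you should state a corrected theorem. More seriously, your route to it makes the result vacuous. Requiring $A(y)(y-f)=y-f$ for all $f$ near $y$ forces $A(y)=I$. Lipschitz continuity then gives $\|I-A(f)\|_2<1$ near $y$, and an orthogonal projector that close to $I$ equals $I$. So $\varphi(f)=y$, and the method lands on $y$ after one step; the bound with $C=2L_A$ says nothing.

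Your example does not deliver the condition either. $b(x,u)=u\in\mathcal{B}$ only makes $y$ a column of $B(y)$, i.e.\ $y\in\mathrm{range}\,A(y)$. That says nothing about $r_t\in\mathrm{range}\,A(y)$.

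The non-degenerate version is a condition along the trajectory. By your Step~1 it suffices to assume $\|e_t^\perp\|_2\le C'\|r_t\|_2^2$ along the iterates, since only the part of $(I-A(f_t))r_t$ orthogonal to $y$ matters. With $\|r_0\|_2\le\|y\|_2/2$ and the monotonicity from Theorem~\ref{thm:basic-adaptive}, Steps~1 and~4 then give $\|r_{t+1}\|_2\le 2C'\|r_t\|_2^2$.
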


%\begin{proof}[Proof sketch]
%We derive a recurrence relation for the residuals $r_{t+1}$ as a function of $r_t$. By performing a Taylor expansion around $r=0$, we observe that the zero-th and first-order terms vanish due to the specific choice of the optimal step size $\omega(f_t)$. Consequently, the dominant term in the residual update is of order $\|r_t\|_2^2$, implying that once the error is small, it decays quadratically (similar to Newton's method). See Appendix~\ref{app:adaweight} for the full proof.
%\end{proof}

%See Appendix~\ref{app:adaweight} for the proof.
This result highlights the power of adaptive rescaling: by solving a simple $1$D line-search at each step (computationally cheap operation), the algorithm can achieve Newton-like convergence properties near the optimum.

\begin{proof} We first establish the following lemma:
\begin{lemma} The residuals evolve according to the following iterative equation:
\begin{equation}
r_{t+1} = \frac{y^\top (A(f_t)r_t)}{\|y-(I-A(f_t))r_t\|_2^2}(I-A(f_t))r_t.
\label{equ:resid}
\end{equation}
\label{lem:res}
\end{lemma}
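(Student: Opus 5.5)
With $\eta=1$ and $r_t := y-f_t$, the plan is to write $\varphi(f_t)$ in residual form and then substitute it into the adaptive weight $\omega(f_t)$ from Eq.~(\ref{eq:omega}).

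\textbf{Step 1: the unscaled update.} Setting $\eta=1$ in the definition of $\varphi$ gives $\varphi(f_t)=f_t+A(f_t)r_t$. Since $f_t=y-r_t$,
\[
\varphi(f_t)=y-(I-A(f_t))r_t .
\]
This already gives the denominator of (\ref{equ:resid}), because $\|\varphi(f_t)\|_2^2=\|y-(I-A(f_t))r_t\|_2^2$.

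\textbf{Step 2: the new residual.} With $f_{t+1}=\omega(f_t)\varphi(f_t)$,
\[
r_{t+1}=y-\omega\varphi=(1-\omega)y+\omega(I-A(f_t))r_t,
\]
where $\omega=\omega(f_t)$. The target formula contains no multiple of $y$, so the computation must show that the $y$-component cancels. I will write $P:=I-A(f_t)$, which is an orthogonal projector, and $\varphi=y-Pr_t$.

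\textbf{Step 3: the numerator of $\omega$.} Here $y^\top\varphi=\|y\|_2^2-y^\top Pr_t$ and $\|\varphi\|_2^2=\|y\|_2^2-2y^\top Pr_t+\|Pr_t\|_2^2$. The natural simplification is to expand $y^\top Pr_t=y^\top r_t-y^\top A(f_t)r_t$ and compare with the claimed numerator $y^\top A(f_t)r_t$.

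The main obstacle is that this identity does not hold for arbitrary $f_t$: the optimal $\omega$ leaves a residual orthogonal to $\varphi$, not one proportional to $Pr_t$. I therefore expect to need the invariant established in the proof of Theorem~\ref{thm:basic-adaptive}, namely that $f_t=M(f_{t-1})y$ is the orthogonal projection of $y$ onto the line spanned by $\varphi(f_{t-1})$. Consequently $r_t\perp f_t$, so $y^\top r_t=\|r_t\|_2^2$ and $f_t^\top r_t=0$.

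Using these relations together with $A(f_t)^2=A(f_t)$, I will rewrite both the scalar $1-\omega$ and the $y$-coefficient so that the $(1-\omega)y$ term is absorbed. Concretely, the step is to check the algebraic identity $(1-\omega)\|\varphi\|_2^2=y^\top Pr_t-\|Pr_t\|_2^2$ and then to recombine terms, using $y=f_t+r_t$ with $f_t\perp r_t$, into a scalar multiple of $Pr_t$ whose coefficient is $y^\top A(f_t)r_t/\|\varphi\|_2^2$.

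If this exact cancellation fails, I will state the lemma in the form the algebra actually yields, $r_{t+1}=r_t-\omega$-weighted terms, and carry the extra $O(\|r_t\|^2)$ pieces into the quadratic-rate argument. In either case the estimate needed downstream is the same: $|y^\top A(f_t)r_t|\le\|y\|_2\|A(f_t)r_t\|_2$, with the factor $\|A(f_t)r_t\|_2$ being small near a multicalibrated point.
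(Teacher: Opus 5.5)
Your Steps 1 and 2 are correct, and so is your suspicion in Step 3. The rescue via $f_t\perp r_t$ cannot succeed, however, because the identity is false even along the adaptive trajectory. By your own Step 2, $r_{t+1}=(1-\omega)y+\omega Pr_t$ is a multiple of $Pr_t$ only if $(1-\omega)y\in\mathrm{span}\{Pr_t\}$. Orthogonality of $f_t$ and $r_t$ does not force this.

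Concretely, take $n=2$, $y=(1,0)^\top$, a single weak learner with $B\equiv(1,0)^\top$ (so $A\equiv\mathrm{diag}(1,0)$), $\eta=1$ and $f_0=(0,1)^\top$. Then $\varphi(f_0)=(1,1)^\top$ and $f_1=(\tfrac12,\tfrac12)^\top$, so $r_1=(\tfrac12,-\tfrac12)^\top\perp f_1$ and your invariant holds. Next, $\varphi(f_1)=(1,\tfrac12)^\top$, $\omega(f_1)=\tfrac45$ and $r_2=(\tfrac15,-\tfrac25)^\top$. The lemma's right-hand side is instead $\tfrac{1/2}{5/4}(0,-\tfrac12)^\top=(0,-\tfrac15)^\top$. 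Even more simply, with $A(f_t)=0$ the invariant gives $\omega=1$ and $r_{t+1}=r_t$, while the formula gives $0$.

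What the algebra actually yields, with no invariant needed, is $r_{t+1}=(I-\tilde\varphi\tilde\varphi^\top)Pr_t=Pr_t-\frac{\varphi^\top Pr_t}{\|\varphi\|_2^2}\varphi$, where $\tilde\varphi=\varphi/\|\varphi\|_2$. Under $f_t\perp r_t$ one has $-\varphi^\top Pr_t=f_t^\top A(f_t)r_t$. Your intermediate identity also has its sign reversed: the correct version is $(1-\omega)\|\varphi\|_2^2=\|Pr_t\|_2^2-y^\top Pr_t=-\varphi^\top Pr_t$.

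The paper's derivation reaches the stated formula only through the step $(y\varphi^\top-y^\top\varphi\,I)(I-A)=y^\top A(f-y)(I-A)$. That step tacitly equates the rank-one matrix $yf^\top(I-A)$ with $(y^\top f)(I-A)$. This is exactly the cancellation you were hoping for, and it fails in general.

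Your fallback does not repair this either. The discrepancy between the true recursion and the lemma is first order in $r_t$, not $O(\|r_t\|_2^2)$; for $A(f_t)=0$ it is $r_t$ itself. The correct recursion only gives $\|r_{t+1}\|_2\le\|(I-A(f_t))r_t\|_2$. So the downstream estimate built on $|y^\top A(f_t)r_t|\le\|y\|_2\|A(f_t)r_t\|_2$ cannot be obtained from it.
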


\begin{proof} From the adaptive update $f_{t+1} = \omega(f_t)\varphi(f_t)$ with $\omega(f_t) = y^\top \varphi(f_t)/\|\varphi(f_t)\|_2^2$ (Eq.~(\ref{eq:omega})), we have $r_{t+1} = y - f_{t+1} = y - (y^\top \varphi(f_t)/\|\varphi(f_t)\|_2^2)\varphi(f_t)$, which can be rewritten as
$$
r_{t+1} = \frac{y\varphi(f_t)^\top - y^\top \varphi(f_t)I}{\varphi(f_t)^\top \varphi(f_t)}\varphi(f_t).
$$
For simplicity of notation, with a local scope to the proof of the lemma, we write $\varphi$ and $A$ in lieu of $\varphi(f)$ and $A(f)$. Note
\begin{eqnarray*}
&& (y\varphi^\top - y^\top \varphi I)\varphi\\
&=& (y\varphi^\top - y^\top \varphi I)(y+(I-A)\, (f-y))\\
&=& (y\varphi^\top - y^\top \varphi I)(I-A)(f-y),
\end{eqnarray*}

\begin{eqnarray*}
y\varphi^\top (I-A) &=&  y(f^\top +(y-f)^\top A)\, (I-A)\\
&=& y f^\top  (I-A)
\end{eqnarray*}
%\ntc{Dimension mismatch: LHS is $n \times n$ matrix, but $y^\top f$ is a scalar. Should this be $y f^\top (I-A)$ (outer product)?}
and
\begin{eqnarray*}
y^\top \varphi\, (I-A) &=& (y^\top f - y^\top A\, (f-y))(I-A).
\end{eqnarray*}

Subtracting the last two identities, we have
$$
(y \varphi^\top - y^\top \varphi I)(I-A) = y^\top A\, (f-y)(I-A).
$$
Hence, it follows:
$$
(y\varphi(f)^\top - y^\top \varphi I)\varphi = y^\top A\, (y-f)(I-A)(y-f).
$$

Noting that $\varphi^\top \varphi = \|\varphi\|_2^2$ and $\varphi = y - (I-A)(y-f)$, we have
$$
\varphi^\top \varphi = \|y-(I-A)(y-f)\|_2^2.
$$
From the above relations, it follows that (\ref{equ:resid}) holds.
\end{proof}

By Lemma~\ref{lem:res}, $r_{t+1} = g(r_{t}) / h(r_t)$, where
$$
g(r) = (I-A(y-r))r\, (A(y-r)r)^\top y
$$
and
$$
h(r) = \|y - (I-A(y-r))r\|_2^2.
$$
For small $r$, by a Taylor expansion of $A(y-r)$ around $r=0$ (using continuity of $A$),
$$
g(r) = (I-A(y))r\, (A(y)r)^\top y + o\left(\|r\|_2^2\right).
$$
From this, we observe that $\|(I-A(y))r\|_2 \leq \|r\|_2$ and $|(A(y)r)^\top y| \leq \|y\|_2\|r\|_2$, so
$$
\|g(r)\|_2 \leq \|y\|_2 \|r\|_2^2 + o\left(\|r\|_2^2\right).
$$
For small $r$, a similar Taylor expansion gives
$$
h(r) = \|y-(I-A(y))r\|_2^2 + o(\|r\|_2).
$$
(In particular, the previous claim that subtracting two identities yields $A(y-r)r = A(y)r$ is replaced by the explicit Taylor remainder $A(y-r)r = A(y)r + o(\|r\|_2)$, which is sufficient for the subsequent quadratic-rate argument.)
Assume that $r$ is sufficiently small so that $h(r) \geq (1/2)\|y\|_2^2$. Then, we have
$$
\left\|\frac{g(r)}{h(r)}\right\|_2 \leq \frac{2}{\|y\|_2}(1+o(1))\|r\|_2^2.
$$
\end{proof}

\subsection{A Hybrid Boosting Algorithm}
\label{sec:hybrid}

Here we consider an alternative boosting algorithm, defined by modifying the iterative update in Algorithm~\ref{alg:mc} as follows:
$$
\hat f_{t+1}(x) = \hat f_t(x) + \eta[  \delta \hat f_t(x) + \gamma (\hat{y}(x) - \hat f_t(x))]
$$
where $\hat{y}(x)$ is some given predictor.

The key difference from our original algorithm is the additional term $\gamma(\hat{y}(x) - \hat f_t(x))$. Intuitively, we may regard this term as adding an extra predictor with a fixed weight $\gamma$, whose predictions are the approximate residuals $\hat{y}(x) - \hat f_t(x)$. If $\hat{y}(x)$ is a perfect predictor, then the extra predictor outputs are precisely the residuals $y(x) - \hat f_t(x)$. This extra predictor serves as a preconditioning to address the ill-conditioning of the matrix $A(f_t)$. The method can be considered as a hybrid boosting as we mix weak learners with a strong learner.

The dynamical system for $f_t$ in the prevailing case can be expressed in the following form:
\begin{equation}
f_{t+1} = f_t - \eta[P(f_t)\nabla V(f_t) + \xi_t]
\label{equ:pgd}
\end{equation}
where $P(f_t) = \gamma I + A(f_t)$, $\xi_t = \gamma (y-\hat{y})$, and $V(f) = \frac{1}{2}\|y-f\|_2^2$. The iteration (\ref{equ:pgd}) is an inexact preconditioned gradient descent algorithm.

The multicalibration errors can be expressed as follows:
$$
\widehat{\mathcal{E}}(f_t) = \gamma \frac{1}{n}B(f_t)^\top (y-\hat y) +  \frac{1}{\eta\, n} B(f_t)^\top (f_{t+1}-f_t).
$$

The following lemma provides a general convergence result for the inexact projected gradient descent algorithm in (\ref{equ:pgd}).

\begin{lemma} Assume that $V:\mathbb{R}^n\rightarrow \mathbb{R}$ is a continuously differentiable function that is $L$-smooth and $\mu$-strongly convex, $P(f)$ is a positive definite matrix such that for some constants $0 < m \leq M < \infty$, $m\, I \prec P(f)\prec M\, I$, and $(f-f^*)^\top P(f) \nabla V(f) \geq m (f-f^*)^\top \nabla V(f)$, where $f^*$ is the minimiser of $V$. Then, for any step size $\eta \in (0,2m\mu/(L^2M^2))$, the iterates satisfy:
$$
\|f^*-f_{t+1}\|_2 \leq \kappa \|f^*-f_t\|_2 + \eta \|\xi_t\|_2
$$
%(The previous statement substituted the target $y$ for the general minimizer $f^*$; this is correct for the hybrid-boosting application below where $V(f)=\tfrac{1}{2}\|y-f\|_2^2$ and hence $f^*=y$, but in the general statement of the lemma we keep $f^*$.)
where
$$
\kappa = \sqrt{1-2\mu m \eta \left(1-\frac{L^2 M^2}{2\mu m
}\eta\right)}<1.
$$
\label{lem:pgd}
\end{lemma}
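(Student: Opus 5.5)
The plan is to follow the standard one-step contraction argument for gradient descent on a smooth, strongly convex function. The preconditioner $P(f)$ is handled through the two structural hypotheses of the lemma, and the inexactness $\xi_t$ is peeled off with the triangle inequality.

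\textbf{Step 1: isolate the error term.} Write $e_t := f_t - f^*$. Subtracting $f^*$ from both sides of (\ref{equ:pgd}) gives
\[
f_{t+1}-f^* = \bigl(e_t - \eta P(f_t)\nabla V(f_t)\bigr) - \eta\xi_t .
\]
By the triangle inequality,
\[
\|f^*-f_{t+1}\|_2 \leq \|e_t - \eta P(f_t)\nabla V(f_t)\|_2 + \eta\|\xi_t\|_2 .
\]
It then suffices to show $\|e_t - \eta P(f_t)\nabla V(f_t)\|_2^2 \leq \kappa^2\|e_t\|_2^2$.

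\textbf{Step 2: expand the square and bound each piece.} Expanding gives
\[
\|e_t - \eta P(f_t)\nabla V(f_t)\|_2^2 = \|e_t\|_2^2 - 2\eta\, e_t^\top P(f_t)\nabla V(f_t) + \eta^2\|P(f_t)\nabla V(f_t)\|_2^2 .
\]
For the cross term, first apply the hypothesis $e_t^\top P(f_t)\nabla V(f_t) \geq m\, e_t^\top \nabla V(f_t)$. Next use $\nabla V(f^*)=0$, which holds because $f^*$ is the unconstrained minimiser of a differentiable function. Strong convexity then gives the monotonicity inequality
\[
e_t^\top\bigl(\nabla V(f_t)-\nabla V(f^*)\bigr) \geq \mu\|e_t\|_2^2 ,
\]
so the cross term is at least $m\mu\|e_t\|_2^2$.

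For the quadratic term, the bound $P(f)\prec M I$ gives $\|P(f_t)\|_2\leq M$. Since $P$ is positive definite, hence symmetric, $\|P\|_2$ equals its largest eigenvalue. $L$-smoothness gives
\[
\|\nabla V(f_t)\|_2 = \|\nabla V(f_t)-\nabla V(f^*)\|_2 \leq L\|e_t\|_2 ,
\]
so the quadratic term is at most $\eta^2 L^2M^2\|e_t\|_2^2$.

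\textbf{Step 3: identify $\kappa$.} Combining the two bounds,
\[
\|e_t - \eta P(f_t)\nabla V(f_t)\|_2^2 \leq \bigl(1 - 2\eta m\mu + \eta^2L^2M^2\bigr)\|e_t\|_2^2 .
\]
The bracket is exactly $1-2\mu m\eta\bigl(1-\tfrac{L^2M^2}{2\mu m}\eta\bigr)=\kappa^2$. Taking square roots and inserting into Step 1 yields the claimed recursion.

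It remains to check that $\kappa$ is a well-defined real number strictly less than $1$. The upper bound $\kappa<1$ holds because $\eta<2m\mu/(L^2M^2)$ makes the factor $1-\tfrac{L^2M^2}{2\mu m}\eta$ positive. For well-definedness, note that the quadratic $1-2m\mu\eta+L^2M^2\eta^2$ in $\eta$ has discriminant $4(m^2\mu^2 - L^2M^2)\leq 0$, since $\mu\le L$ and $m\le M$. Hence the radicand is nonnegative; in any case it bounds a squared norm.

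I expect no serious obstacle. The only points needing care are using $\nabla V(f^*)=0$ in both the strong-convexity and smoothness steps, and verifying that $\kappa$ is real and below $1$. The assumption $m I\prec P$ itself is not needed beyond guaranteeing the directional hypothesis.
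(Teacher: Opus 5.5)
Your proposal is correct and follows the paper's proof essentially step for step. Like the paper, you expand $\|f_t-f^*-\eta P(f_t)\nabla V(f_t)\|_2^2$, lower-bound the cross term via the directional hypothesis plus strong convexity at $f^*$, upper-bound the quadratic term via $\|P(f_t)\|_2\le M$ and $L$-smoothness, and peel off $\eta\xi_t$ with the triangle inequality. Your check that the radicand is nonnegative (using $\mu\le L$ and $m\le M$) is a small addition the paper omits, but your closing remark is off: $mI\prec P$ does not imply the directional hypothesis for general $V$, only in cases like the hybrid-boosting application where $\nabla V(f)=f-y$.
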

%\lpe{(The factor of $2$ in the inner denominator was missing in an earlier draft; with this correction, expanding the bracket recovers $\kappa^2 = 1 - 2\mu m\eta + L^2M^2\eta^2$, matching the rigorously derived expression in the proof below.)}

Proof is in Appendix~\ref{app:pgd}.

Applying Lemma~\ref{lem:pgd} to our boosting setting, we have the following convergence result:

\begin{proposition} Assume that the shrinkage parameter satisfies $\eta \in (0,2\gamma/(1+\gamma)^2)$. Then, for all $t\geq 0$,
$$
\|y-f_{t+1}\|_2 \leq \kappa \|y-f_t\|_2 + \eta \gamma \|y-\hat y\|_2
$$
where 
$$
\kappa = \sqrt{1-2\gamma \eta \left(1-
\frac{(1+
\gamma)^2}{2\gamma}\eta
\right)} < 1.
$$
\label{prop:hybrid}
\end{proposition}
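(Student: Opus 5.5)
We will obtain the statement directly from Lemma~\ref{lem:pgd}, applied to the hybrid dynamics in Eq.~(\ref{equ:pgd}). Take $V(f)=\frac{1}{2}\|y-f\|_2^2$. Then $\nabla V(f) = f-y$, $V$ is $L$-smooth with $L=1$ and $\mu$-strongly convex with $\mu=1$, and its unique minimiser is $f^*=y$. The preconditioner is $P(f)=\gamma I + A(f)$ and the perturbation is $\xi_t=\gamma(y-\hat y)$.

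\textbf{Checking the hypotheses of Lemma~\ref{lem:pgd}.} Since $A(f)$ is an orthogonal projector, its eigenvalues lie in $\{0,1\}$. Hence the eigenvalues of $P(f)$ lie in $\{\gamma, 1+\gamma\}$, which gives $\gamma I \preceq P(f)\preceq (1+\gamma)I$. We therefore take $m=\gamma$ and $M=1+\gamma$. The lemma states strict inequalities; since only the non-strict bounds are used in its proof (through $\|P\|_2\le M$ and a lower quadratic-form bound), we will treat them as non-strict, or enlarge $M$ by an arbitrarily small amount.

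The remaining hypothesis is the angle condition $(f-f^*)^\top P(f)\nabla V(f)\ge m (f-f^*)^\top\nabla V(f)$. With $f^*=y$ and $\nabla V(f)=f-y$, this reads
$$
(f-y)^\top(\gamma I + A(f))(f-y)\ \ge\ \gamma\|f-y\|_2^2 .
$$
This holds because $(f-y)^\top A(f)(f-y)=\|A(f)(f-y)\|_2^2\ge 0$.

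\textbf{Reading off the constants.} Substituting $\mu=L=1$, $m=\gamma$, $M=1+\gamma$ into Lemma~\ref{lem:pgd}, the admissible step-size range $(0,2m\mu/(L^2M^2))$ becomes $(0,2\gamma/(1+\gamma)^2)$, which is exactly the hypothesis on $\eta$. The contraction factor becomes
$$
\kappa=\sqrt{1-2\gamma\eta\left(1-\tfrac{(1+\gamma)^2}{2\gamma}\eta\right)},
$$
and $\kappa<1$ precisely in this step-size range. The additive term becomes $\eta\|\xi_t\|_2=\eta\gamma\|y-\hat y\|_2$. This yields the claimed recursion $\|y-f_{t+1}\|_2\le\kappa\|y-f_t\|_2+\eta\gamma\|y-\hat y\|_2$.

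\textbf{Main obstacle.} The algebra above is routine; the point that needs care is that Eq.~(\ref{equ:pgd}) faithfully represents the hybrid update. We must check that the sign conventions, $-\eta P(f_t)\nabla V(f_t) = \eta(\gamma I + A(f_t))(y-f_t)$ together with the $\xi_t$ term, reproduce $f_t + \eta[\delta f_t + \gamma(\hat y - f_t)]$ with $\delta f_t = A(f_t)(y-f_t)$ from the exact oracle. In particular, the $\gamma y$ contribution in $\gamma(y-f_t)$ must be corrected to $\gamma\hat y$ by $\xi_t$.

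If the sign of $\xi_t$ comes out opposite to the one written in Eq.~(\ref{equ:pgd}), this does not affect the result. Only $\|\xi_t\|_2$ enters the bound of Lemma~\ref{lem:pgd}, and that bound is obtained by the triangle inequality on the unperturbed step, so it holds for either sign. Once this identification is verified, the proposition follows immediately from the lemma.
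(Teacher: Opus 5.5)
Your proof is correct and is the paper's argument: the proposition is a direct corollary of Lemma~\ref{lem:pgd} with $\mu=L=1$, $m=\gamma$, $M=1+\gamma$, and the angle condition reduces to $(f-y)^\top P(f)(f-y)\ge\gamma\|f-y\|_2^2$. You also check two points the paper leaves implicit. First, the lemma's proof only uses the non-strict bounds $\gamma I\preceq P(f)\preceq(1+\gamma)I$, which matters because the strict $\prec$ in its hypothesis does not literally hold when the eigenvalues of $P(f)$ equal $\gamma$ or $1+\gamma$. Second, $\xi_t=\gamma(y-\hat y)$ with the sign in Eq.~(\ref{equ:pgd}) does reproduce the hybrid update.
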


Proof is in Appendix~\ref{app:hybrid}.

For the shrinkage parameter set as $\eta = c \gamma$, for some constant $0 < c < 1$, the condition on the shrinkage parameter is $\gamma < \sqrt{2/c}-1$. For such a choice of the shrinkage parameter, $\kappa \approx 1-c(1-c/2)\gamma^2$ for small $\gamma$. %\lpe{(This corrects the earlier expression $1-2c(1-c)\gamma^2$, which was inconsistent with the corrected $\kappa$ formula.)}

In general, we have:
$$
\|y-f_t\|_2 \leq \kappa^t \|y-f_0\|_2 + \frac{\eta \gamma}{1-\kappa}\|y-\hat{y}\|_2.
$$
For the shrinkage parameter set as $\eta = c\gamma$, with $0< c < 1$, we have: $\eta \gamma/(1-\kappa) \approx 2/(2-c)$. %\lpe{(again corrected to be consistent with the updated leading-order expansion above).}

\subsection{Proof of Lemma~\ref{lem:pgd}}\label{app:pgd}

By the $\mu$-strong convexity of $V$, for every $f,f'$,
$$
(f-f')^\top (\nabla V(f)-\nabla V(f')) \geq \mu\|f-f'\|_2^2,
$$

%\ntc{Should $m$ be $\mu$ (the strong convexity constant)?}

and, by the $L$-smoothness of $V$, we have:
$$
\|\nabla V(f) - \nabla V(f')\|_2\leq L \|f-f'\|_2.
$$

%\ntc{Shouldn't this be $\|\nabla V(f) - \nabla V(f')\|_2\leq L \|f-f'\|_2$, i.e., without the square?}

In particular, for every $f$, and $f^*$ the minimiser of $V$, we have
\begin{equation}
(f-f^*)^\top \nabla V(f) \geq \mu \|f-f^*\|_2^2
\label{equ:strong}
\end{equation}
and
\begin{equation}
\|\nabla V(f)\|_2 \leq L \|f-f^*\|_2.
\label{equ:smooth}
\end{equation}

Let $\delta_t = f_t - f^* - \eta P(f_t) \nabla V(f_t)$. Note that
\begin{eqnarray*}
\|\delta_t \|_2^2 &=& \|f_t - f^*\|_2^2 - 2\eta (f_t-f^*)^\top P(f_t) \nabla V(f_t) \\
&& + \eta^2 \|P(f_t)\nabla V(f_t)\|_2^2
\end{eqnarray*}

By assumption, $(f-f^*)^\top P(f_t) \nabla V(f_t)\geq m (f_t-f^*)^\top \nabla V(f_t)$, which combined with (\ref{equ:strong}) yields:
$$
(f_t-f^*)^\top P(f_t) \nabla V(f_t) \geq m\mu \|f_t-f^*\|_2^2.
$$

By assumption $P(f_t)\prec M I$, which combined with (\ref{equ:smooth}) yields:

$$
\|P(f_t)\nabla V(f_t)\|_2 \leq M L \|f_t -f^*\|_2.
$$

%$$
%\|P(f_t)\nabla V(f_t)\|_2 \leq M^2 L^2 %\|f_t -f^*\|_2^2.
%$$

%\ntc{Should be $ML \|f_t - f^*\|_2$ (linear, not squared). From $P(f_t)\prec MI$ we have $\|P(f_t)\|_2 \leq M$, and from (\ref{equ:smooth}) we have $\|\nabla V(f_t)\|_2 \leq L\|f_t-f^*\|_2$.}
From the asserted relations it follows that:
\begin{eqnarray*}
\|\delta_t\|_2^2 &\leq &
\|f_t - f^*\|_2^2 - 2\eta m \mu \|f_t-f^*\|_2^2 \\
&& + \eta^2 M^2 L^2 \|f_t-f^*\|_2^2\\
&=& (1-2\eta m \mu + \eta^2 M^2 L^2)\|f_t-f^*\|_2^2.
\end{eqnarray*}

Hence, we have established that:
$$
\|\delta_t\|_2^2 \leq \kappa^2 \|f_t-f^*\|_2^2
$$
where
$$
\kappa^2 = 1-2\eta m \mu + \eta^2 M^2 L^2.
$$

To conclude the proof, note that:
$$
\|f_{t+1}-f^*\|_2 = \|\delta_t - \eta \xi_t\|_2 \leq \|\delta_t\|_2 + \eta \|\xi_t\|_2.
$$
Hence, it follows that
$$
\|f_{t+1}-f^*\|_2 \leq \kappa \|f_t-f^*\|_2 + \eta \|\xi_t\|_2.
$$

%Now note that $\kappa < 1$ if and only if $\eta < 2m\mu / (M^2 L^2)$. Furthermore, under this condition, we have:
%$$
%\kappa^2 \leq \frac{1-2\eta m\mu + \eta^2 M^2 L^2}{1+\eta^2 M^2 L^2} = 1 - \frac{2\eta m \mu}{1+\eta^2 M^2 L^2}.
%$$
%This completes the proof of the lemma.
%\ntc{This inequality is backwards. For $a = 2\eta m\mu > 0$ and $b = \eta^2 M^2 L^2 > 0$: $1 - a + b \geq \frac{1-a+b}{1+b}$, not $\leq$. The stated form of $\kappa$ in the lemma may not be valid.}

\subsection{Proof of Proposition~\ref{prop:hybrid}}\label{app:hybrid}

The proposition is a corollary of Lemma~\ref{lem:pgd}. For the underlying boosting setting, we have $V(f) = \frac{1}{2}\|f-y\|_2^2$. Hence, $\mu = L = 1$. For $P(f) = \gamma I + A(f)$, we have $m = \gamma$ and $M = 1+\gamma$. Condition $(f-f^*)^\top P(f)\nabla V(f) \geq m(f-f^*)^\top \nabla V(f)$ %\ntc{Should this be $(f-f^*)^\top P(f)\nabla V(f)$ (transpose, not squared)?}
is equivalent to
$$
(f-f^*)^\top P(f)(f-f^*) \geq m \|f-f^*\|_2^2
$$
which holds as $mI \prec P(f)$.

\section{Supplementary for Numerical Results}

\subsection{Information About Datasets}
\label{sec:datainfo}

\paragraph{California Housing} This dataset, from the StatLib repository, contains information about California districts and is used to predict the median house value. It includes data on housing characteristics, population, income, and geographic location, making it useful for regression tasks and geospatial analysis.

\paragraph{Diabetes} Sourced from the UCI Machine Learning Repository, the diabetes dataset focuses on predicting disease progression one year after baseline measurements. It includes personal and medical indicators such as age, BMI, blood pressure, and blood serum levels, and is commonly used for regression modeling in healthcare studies.

\paragraph{Adult (Census Income)} Also from the UCI Machine Learning Repository, the Adult dataset aims to classify whether an individual earns more than \$50K per year based on census data. It incorporates demographic and employment-related attributes, making it suitable for binary classification and regression tasks and studies on fairness and social analytics. In our study, the prediction target is defined to be the sum {\tt education-num} (a numeric encoding of the categorical education feature) and {\tt hours-per-week} (the number of hours an individual works per week). Both {\tt education-num} and {\tt hours-per-week} are removed from the input features prior to training to avoid trivial leakage; the $13$ raw features reported in Table~\ref{tab:datasets} reflect the original $14$ predictive attributes minus {\tt education} (the redundant categorical version of {\tt education-num}), with {\tt education-num} and {\tt hours-per-week} subsequently moved to the target.

\paragraph{German Credit} This dataset, available from the UCI Machine Learning Repository, is used to predict creditworthiness, labeling individuals as having good or bad credit risk. It includes financial, personal, and loan-related information, providing a standard benchmark for classification and credit scoring problems. In our study, the prediction target is {\tt CreditAmount}, an attribute representing the amount of credit (loan) requested by the individual. This attribute is removed from the input features prior to training so that no component of the target appears as a covariate.

\paragraph{Communities and Crime} Sourced from the UCI Machine Learning Repository, the Communities and Crime dataset combines socio-economic, law enforcement, and crime rate statistics for US communities. It is typically used to predict violent crime rates based on community-level indicators, making it valuable for regression analysis and policy research. The prediction target is {\tt ViolentCrimesPerPop}, which represents the violent crime rate per capita in a community.

\subsection{Generalization}

\begin{figure}[h!]
\begin{center}
\includegraphics[width=1.0\textwidth]{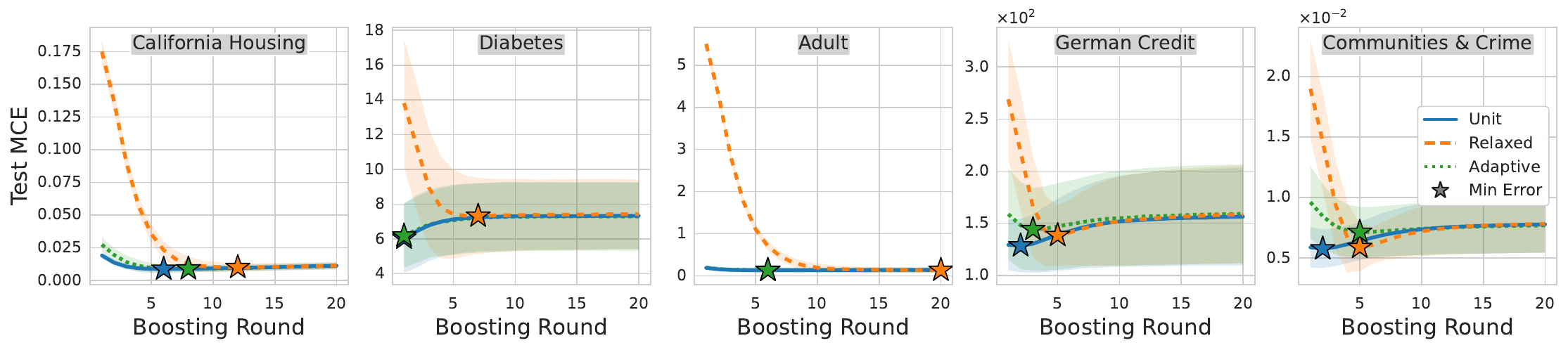}
\end{center}
\caption{Average test Multicalibration Error (MCE) per dataset, with star markers indicating the optimal stopping point for each strategy.}
\label{fig:rq3_generalization}
\end{figure}

We examine how the theoretical convergence guarantees translate to unseen data. To make results more robust, we repeat the experiment for $20$ different seeds and aggregate the test performance over the different runs. Each seed corresponds to a fresh random $80/20$ train/test split (uniform random shuffle, no stratification, since all targets are continuous regression values). Figure~\ref{fig:rq3_generalization} plots the average test MCE across iterations, with star markers indicating the optimal stopping point (minimum average error) for each strategy. Shaded bands around each curve indicate $\pm 1$ standard deviation across the $20$ seeds, so that the comparative claims below can be assessed against random fluctuations.

Unlike the monotonic trend observed on the training set in Figure~\ref{fig:rq2_dynamics}, the test error shows the classic bias-variance trade-off. In datasets such as \emph{California Housing} and \emph{German Credit}, we observe a distinct U-shaped evolution, where the error decreases initially before facing overfitting. This confirms that while the dynamical system is stable, regularization via early stopping is necessary in practice.

Comparing the strategies shows a trade-off between efficiency and stability. The \textbf{Unit} and \textbf{Adaptive} strategies (blue and green lines) are highly efficient, typically reaching their optimal performance within the first $1$-$8$ rounds (depending on the dataset). However, their aggressive updates can be harmful in some cases: this is most evident in the \emph{German Credit} dataset, where they both overfit almost immediately. In contrast, the \textbf{Relaxed} strategy (orange dashed line) acts as an implicit regularizer. By reducing the early updates, it decays more gradually, shifting the optimal stopping point to later rounds (e.g., around round $12$ in \emph{California Housing} and round $5$ in \emph{German Credit}; on \emph{Adult} it does not reach its minimum within $T=20$). This makes the Relaxed strategy more robust to the choice of stopping time, reducing the risk of initial overfitting.

\end{document}